\theoremstyle{definition}
\newtheorem{theorem}{Theorem}[section]
\newtheorem{lemma}[theorem]{Lemma}
\newtheorem{definition}[theorem]{Definition}
\newtheorem{remark}[theorem]{Remark}
\title{AdaFed: Fair Federated Learning via Adaptive Common Descent Direction}
\author{\name Shayan Mohajer Hamidi \email smohajer@uwaterloo.ca \\
      \addr Department of Electrical and Computer Engineering \\
      University of Waterloo
      \AND
      \name En-Hui Yang \email ehyang@uwaterloo.ca \\
      \addr Department of Electrical and Computer Engineering \\
      University of Waterloo
      }
\begin{document}

\maketitle

\begin{abstract}
Federated learning (FL) is a promising technology via which some edge devices/clients collaboratively train a machine learning model orchestrated by a server. 
Learning an unfair model is known as a critical problem in federated learning, where the trained model may unfairly advantage or disadvantage some of the devices. 
To tackle this problem, in this work, we propose AdaFed. The goal of AdaFed is to find an updating direction for the server along which (i) all the clients' loss functions are decreasing; and (ii) more importantly, the loss functions for the clients with larger values decrease with a higher rate. AdaFed adaptively tunes this common direction based on the values of local gradients and loss functions. We validate the effectiveness of AdaFed on a suite of federated datasets, and demonstrate that AdaFed outperforms state-of-the-art fair FL methods. 
\end{abstract}

\section{Introduction}
\label{Sec:Introduction}
Conventionally, a machine learning (ML) model is trained in a centralized
approach where the training data is available at a data center or a cloud server. However, in many new applications, devices often do not want to share their private data with a remote server. As a remedy, federated learning (FL) was proposed in \citet{mcmahan2017communication} where each device
participates in training using only locally available dataset with
the help of a server. Specifically, in FL, devices share only their local updates with the server, and not their raw dataset 
A well-known setup to carry out such decentralized training is FedAvg \citet{mcmahan2017communication} which combines local stochastic
gradient descent (SGD) on each client with iterative model averaging. The server sends the most recent global model to some selected clients \citep{eichner2019semi,wang2021field}, and then these clients perform a number of epochs of local SGD on their local training data and send the local gradients back to the central server. The server then finds the (weighted) average of the gradients to update the global model, and the process repeats. 

In FedAvg, the vector of \emph{averaged gradients} computed by the server is in fact a common direction along which the global model is updated. However, finding the common direction in this manner may result in a direction which is not descent for some clients. Consequently, the learnt model could perform quite poorly once applied to the private dataset of the clients, yielding an unfair global model \citep{li2019fair,bonawitz2019towards, kairouz2021advances}; that is, although the average accuracy might be high, some clients whose data distributions differ from the
majority of the clients are prone to perform poorly on the learnt model.


One possible method to find a direction that is descent for all the clients is to treat the FL task as a multi-objective minimization (MoM) problem \citet{hu2022federated}. In this setup, a Pareto-stationary solution of the MoM yields a descent direction for all the clients. However, having a common descent direction is not enough \textit{per se} to train a fair model with uniform test accuracies across the clients\footnote{Similarly to other fields like ML \citep{barocas2017fairness}, communications \citep{huaizhou2013fairness}, and justice \citep{rawls2020theory}, the notion of fairness does not have a unique definition in FL. However, following \citep{li2019fair,li2021ditto}, we use standard deviation of the clients' test accuracies---and some other metrics discussed in \cref{Sec:Experiments}---to measure how uniform the global model performs across the clients. Please refer to \cref{app:fairness} for more in-depth discussions.}. This is because data heterogeneity across different clients makes the local loss functions vary significantly in values, and therefore those loss functions with larger values should decrease with a higher rate to learn a fair model.



To address the above-mentioned issues and to train a fair global model, in this work, we propose AdaFed. The aim of AdaFed is to help the sever to find a common direction $\boldsymbol{\mathfrak{d}}_t$ (i) that is descent for \emph{all} the clients, which is a \emph{necessary} condition to decrease the clients' loss functions in the SGD algorithm; and (ii) 
along which the loss functions with larger values decrease with higher rates. The latter is to enforce obtaining a global model with uniform test accuracies across the clients.

We note that if the directional derivatives of clients' loss functions along the normalized common direction $\boldsymbol{\mathfrak{d}}_t$ are all positive, then $-\mathfrak{d}$ is a common descent direction for all the clients. As such, AdaFed adaptively tunes $\boldsymbol{\mathfrak{d}}_t$ such that these directional derivatives (i) remain positive over the course of FL process, and (ii) are larger for loss functions with higher values enforcing them to decrease more during the global updation by the server.


The contributions of the paper are summarized as follows:
\begin{itemize}
    \item We introduce AdaFed, a method to realize fair FL via adaptive common descent direction.

    \item We provide a closed-form solution for the common direction found by AdaFed. This is in contrast with many existing fair FL methods which deploy iterative or generic quadratic programming methods. 
    
    
    \item Under some common assumptions in FL literature, we prove the convergence of AdaFed under different FL setups to a Pareto-stationary solution.
    
    \item By conducting thorough experiments over seven different datasets (six vision datasets, and a language one), we show that AdaFed can yield a higher level of fairness among the clients while achieving similar prediction accuracy compared to the state-of-the-art fair FL algorithms.

    \item The experiments conducted in this paper evaluate many existing fair FL algorithms over different datasets under different FL setups, and therefore can pave the way for future researches.
\end{itemize}


\section{Related Works}
\label{Sec:Related Work}
There are many different perspectives in the literature to combat the problem of fairness in FL. These methods include client selection \citep{nishio2019client,huang2020efficiency,huang2022stochastic,yang2021federated}, contribution Evaluation \citep{zhang2020hierarchically,lyu2020collaborative,song2021reputation,le2021incentive}, incentive mechanisms \citep{zhang2021incentive,kang2019incentive,ye2020federated,zhang2020hierarchically}, and the methods based on the loss function. Specifically, our work falls into the latter category. In this approach, the goal is to attain uniform performance across the clients in terms of test accuracy. To this end, the works using this approach target to reduce the variance of test accuracy across the participating clients. In the following, we briefly review some of these works.

One of the pioneering methods in this realm is agnostic federated
learning (AFL) \citep{mohri2019agnostic}. AFL optimizes the global model for the
worse-case realization of weighted combination of the
user distributions. Their approach boils down to solving a saddle-point optimization
problem for which they used a fast stochastic optimization algorithm. Yet, AFL performs well only for a small number of clients, and when the size of participating clients becomes large, the generalization guarantees
of the model may not be satisfied. \citet{du2021fairness} deployed the notation of AFL and proposed the AgnosticFair algorithm. Specifically, they linearly parametrized model weights by kernel functions and showed that AFL can be viewed as a special case of AgnosticFair. To overcome the generalization problem in AFL, q-fair federated learning (q-FFL) \citep{li2019fair} was proposed to achieve more uniform test accuracy across users. The main idea of q-FFL stemmed from fair resource allocation methods in wireless communication networks \citep{huaizhou2013fairness,hamidi2019systems}. Afterward, \citet{li2020tilted} developed TERM, a tilted empirical risk minimization algorithm which handles outliers and class imbalance in statistical estimation procedures. Compared to q-FFL, TERM has demonstrated better performance in many FL applications. Deploying a similar notion, \citet{huang2020fairness} proposed
using training accuracy and frequency to adjust weights of devices to promote fairness. Furthermore, FCFC \citet{cui2021addressing} 
minimizes the loss of the worst-performing client, leading to 
a version of AFL. Later, \citet{li2021ditto} devised Ditto, a multitask personalized FL algorithm. After optimizing a global objective function, Ditto allows local devices to run more steps of SGD, subject to some constraints, to minimize their own losses. Ditto can significantly improve testing accuracy among local devices and encourage fairness. 

Our approach is more similar to \textit{FedMGDA+} \citep{hu2022federated}, which treats the FL task as a multi-objective optimization problem. In this scenario, the goal is to minimize the loss function of each FL client simultaneously. To
avoid sacrificing the performance of any client, \textit{FedMGDA+} uses Pareto-stationary solutions to find
a common descent direction for all selected clients.

\section{Notation and Preliminaries}
\label{Sec:notations}
\subsection{Notation}
We denote by $[K]$ the set of integers $\{1,2,\cdots,K\}$. In addition, we define $\{f_k\}_{k \in [K]}=\{f_1,f_2,\dots,f_K\}$ for a scalar/function $f$.
We use bold-symbol small letters to represent vectors. Denote by $\mathfrak{u}_i$ the $i$-th element of vector $\boldsymbol{\mathfrak{u}}$.
For two vectors $\boldsymbol{\mathfrak{u}}, \boldsymbol{\mathfrak{v}} \in \mathbb{R}^d$, we say $\boldsymbol{\mathfrak{u}} \leq \boldsymbol{\mathfrak{v}}$ iff $\mathfrak{u}_i \leq \mathfrak{v}_i$ for $\forall i \in [d]$, i.e., two vectors are compared w.r.t. partial ordering. In addition, denote by $\boldsymbol{\mathfrak{v}}\cdot \boldsymbol{\mathfrak{u}}$ their inner product, and by $\text{proj}_{\boldsymbol{\mathfrak{u}}}(\boldsymbol{\mathfrak{v}})=\frac{\boldsymbol{\mathfrak{v}}\cdot \boldsymbol{\mathfrak{u}}}{\boldsymbol{\mathfrak{u}}\cdot \boldsymbol{\mathfrak{u}}}\boldsymbol{\mathfrak{u}}$ the projection of $\boldsymbol{\mathfrak{v}}$ onto the line spanned by  $\boldsymbol{\mathfrak{u}}$.

\subsection{Preliminaries and Definitions} \label{sec:prelem}

In \citet{hu2022federated}, authors demonstrated that FL can be regarded as multi-objective minimization (MoM) problem. In particular, denote by $\boldsymbol{f}(\boldsymbol{\theta})=\{f_k(\boldsymbol{\theta})\}_{k \in [K]}$ the set of local clients' objective functions. Then, the aim of MoM is to solve  \begin{align} \label{eq:minpareto}
 \boldsymbol{\theta}^*=\arg \min_{\boldsymbol{\theta}} \boldsymbol{f}(\boldsymbol{\theta}),  
\end{align}
where the minimization is performed w.r.t. the \emph{partial ordering}. Finding $\boldsymbol{\theta}^*$ could enforce fairness among the users since by setting setting $\boldsymbol{\theta}=\boldsymbol{\theta}^*$, it is not possible to reduce any of the local objective functions $f_k$ without increasing at least another one. Here, $\boldsymbol{\theta}^*$ is called a Pareto-optimal solution of \cref{eq:minpareto}. In addition, the collection of function values $\{f_k(\boldsymbol{\theta}^*)\}_{k \in [K]}$ of all the Pareto points $\boldsymbol{\theta}^*$ is called the Pareto front.

Although finding Pareto-optimal solutions can be challenging, there are several methods to identify the Pareto-stationary solutions instead, which are defined as follows:
\begin{definition} \label{stationary}
Pareto-stationary \citep{mukai1980algorithms}: The vector $\boldsymbol{\theta}^*$ is said to be Pareto-stationary iff there exists a convex combination of the gradient-vectors $\{ \mathfrak{g}_k (\boldsymbol{\theta}^*)\}_{k \in [K]}$ which is equal to zero; that is, $\sum_{k=1}^K \lambda_k \mathfrak{g}_k (\boldsymbol{\theta}^*) =0$, where $\boldsymbol{\lambda} \geq 0$, and $\sum_{k=1}^K \lambda_k=1$. 
\end{definition}

\begin{lemma}\label{lem:pareto} \citep{mukai1980algorithms}
Any Pareto-optimal solution is Pareto-stationary. On the other hand, if all $\{f_k(\boldsymbol{\theta})\}_{k \in [K]}$'s are convex, then any Pareto-stationary solution is weakly Pareto optimal \footnote{$\boldsymbol{\theta}^*$ is called a weakly Pareto-optimal solution of \cref{eq:minpareto}  if there does not exist any $\boldsymbol{\theta}$ such that $f(\boldsymbol{\theta})<f(\boldsymbol{\theta}^*)$; meaning that, 
it is not possible to improve \textit{all} of the objective functions in $f(\boldsymbol{\theta}^*)$. Obviously, any Pareto optimal solution
is also weakly Pareto-optimal but the converse may not hold.}.   
\end{lemma}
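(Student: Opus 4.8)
The plan is to treat the two implications separately, since each is a first-order argument of a different character: the first is a necessary-condition (separation) argument that holds for any differentiable objectives, while the second exploits convexity through the subgradient inequality.

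For the implication \emph{Pareto-optimal $\Rightarrow$ Pareto-stationary}, I would argue by contraposition. Suppose $\boldsymbol{\theta}^*$ is \emph{not} Pareto-stationary; by \cref{stationary} this means the origin does not lie in the convex hull $C=\{\sum_{k=1}^K\lambda_k\mathfrak{g}_k(\boldsymbol{\theta}^*):\boldsymbol{\lambda}\geq 0,\ \sum_{k=1}^K\lambda_k=1\}$. Since $C$ is the convex hull of finitely many vectors, it is compact and convex, so it has a (unique) minimum-norm element $\boldsymbol{g}^*=\arg\min_{\boldsymbol{g}\in C}\|\boldsymbol{g}\|\neq \boldsymbol{0}$. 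The variational characterization of projection onto a convex set gives $\boldsymbol{g}\cdot\boldsymbol{g}^*\geq \|\boldsymbol{g}^*\|^2$ for every $\boldsymbol{g}\in C$; applying this to each generator $\mathfrak{g}_k(\boldsymbol{\theta}^*)\in C$ yields $\mathfrak{g}_k(\boldsymbol{\theta}^*)\cdot\boldsymbol{g}^*\geq\|\boldsymbol{g}^*\|^2>0$ for all $k$. Setting $\boldsymbol{d}=-\boldsymbol{g}^*$ we obtain $\mathfrak{g}_k(\boldsymbol{\theta}^*)\cdot\boldsymbol{d}\leq -\|\boldsymbol{g}^*\|^2<0$ for all $k$, so $\boldsymbol{d}$ is a common strict descent direction. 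A first-order Taylor expansion along $\boldsymbol{d}$ then shows $f_k(\boldsymbol{\theta}^*+t\boldsymbol{d})<f_k(\boldsymbol{\theta}^*)$ for all $k$ and all sufficiently small $t>0$, which contradicts even weak Pareto-optimality of $\boldsymbol{\theta}^*$. Hence a Pareto-optimal (a fortiori weakly Pareto-optimal) point must be Pareto-stationary.

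For the implication \emph{convex $+$ Pareto-stationary $\Rightarrow$ weakly Pareto-optimal}, I would again argue by contradiction. Let $\boldsymbol{\theta}^*$ be Pareto-stationary with multipliers $\boldsymbol{\lambda}\geq 0$, $\sum_{k}\lambda_k=1$, and $\sum_{k}\lambda_k\mathfrak{g}_k(\boldsymbol{\theta}^*)=\boldsymbol{0}$, and suppose it is not weakly Pareto-optimal, so there is some $\boldsymbol{\theta}$ with $f_k(\boldsymbol{\theta})<f_k(\boldsymbol{\theta}^*)$ for every $k$. Convexity of each $f_k$ gives the first-order inequality $\mathfrak{g}_k(\boldsymbol{\theta}^*)\cdot(\boldsymbol{\theta}-\boldsymbol{\theta}^*)\leq f_k(\boldsymbol{\theta})-f_k(\boldsymbol{\theta}^*)<0$. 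Scaling by $\lambda_k\geq 0$ and summing over $k$, the left-hand side collapses to $\big(\sum_k\lambda_k\mathfrak{g}_k(\boldsymbol{\theta}^*)\big)\cdot(\boldsymbol{\theta}-\boldsymbol{\theta}^*)=0$, while the right-hand side is strictly negative because $\sum_k\lambda_k=1$ forces at least one $\lambda_k>0$ whose term is strictly negative. This yields $0<0$, the desired contradiction, so $\boldsymbol{\theta}^*$ is weakly Pareto-optimal.

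The main obstacle is entirely in the first implication: converting the purely algebraic statement $\boldsymbol{0}\notin C$ into a genuine common descent direction, and then into an actual simultaneous decrease of all the $f_k$. This hinges on (i) the minimum-norm/separating-hyperplane argument, which is legitimate only because $C$ is the \emph{compact} convex hull of finitely many gradients, and (ii) continuous differentiability of the $f_k$ so that the first-order expansion controls the function values for small step sizes. The second implication, by contrast, is a routine consequence of the subgradient inequality and requires no such care.
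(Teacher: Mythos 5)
Your proof is correct, but there is no in-paper proof to compare it against: \cref{lem:pareto} is imported from \citet{mukai1980algorithms} as a known result and is stated without proof. Both of your implications are sound as written. For the first, the contrapositive via the minimum-norm element of the compact convex hull is the standard Gordan-type separation argument: the variational characterization of the projection of the origin onto the hull gives $\mathfrak{g}_k(\boldsymbol{\theta}^*)\cdot\boldsymbol{g}^*\geq\|\boldsymbol{g}^*\|_2^2>0$ for every $k$, so $-\boldsymbol{g}^*$ is a strict common descent direction, and taking the step size $t$ smaller than the minimum over the finitely many clients makes the Taylor-expansion contradiction with weak Pareto-optimality legitimate. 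It is worth noting that this is exactly the fact the paper itself invokes (citing \citet{desideri2009multiple}) when it asserts in \cref{sec:prelem} that the negated minimal-norm element $-\boldsymbol{\mathfrak{d}}(\mathcal{G})$ of the gradients' convex hull is a common descent direction for MGDA; your argument re-derives that fact from the projection inequality rather than quoting it. For the second implication, the weighted subgradient-inequality argument is airtight: by stationarity the left-hand side collapses to $\bigl(\sum_k\lambda_k\mathfrak{g}_k(\boldsymbol{\theta}^*)\bigr)\cdot(\boldsymbol{\theta}-\boldsymbol{\theta}^*)=0$, while the right-hand side is strictly negative since $\sum_k\lambda_k=1$ forces some $\lambda_k>0$, giving the contradiction. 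The hypotheses you use — differentiability of each $f_k$ at $\boldsymbol{\theta}^*$ for the first part, convexity for the second — are exactly those of the lemma, so nothing is missing.
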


There are many methods in the literature to find Pareto-stationary solutions among which we elaborate on two well-known ones, namely linear scalarization and Multiple gradient descent algorithm (MGDA) \citep{mukai1980algorithms,fliege2000steepest,desideri2012multiple}. 

\noindent$\bullet$ \textbf{Linear scalarization:} this approach is essentially the core principle behind the FedAvg algorithm. To elucidate, in FedAvg, the server updates $\boldsymbol{\theta}$ by minimizing the weighted average of clients' loss functions:
\begin{equation} \label{eq:fedavg}
\min_{\boldsymbol{\theta}} f(\boldsymbol{\theta})=\sum_{k=1}^K \lambda_k f_k(\boldsymbol{\theta}),   
\end{equation}
where the weights $\{\lambda_k\}_{k \in [K]}$ are assigned by the server and satisfy $\sum_{k=1}^K\lambda_k=1$. 
These fixed $\{\lambda_k\}_{k \in [K]}$ are assigned based on some priori information about the clients such as the size of their datasets. We note that different values for $\{\lambda_k\}_{k \in [K]}$ yield different Pareto-stationary solutions. 

Referring to \cref{stationary}, any solutions of \cref{eq:fedavg} is a Pareto-stationary solution of \cref{eq:minpareto}. To perform FedAvg, at iteration $t$, client $k$, $k \in [K]$ sends its gradient vector $\mathfrak{g}_k(\boldsymbol{\theta}_{t})$ to the server, and server updates the global model as
\begin{align}
\boldsymbol{\theta}_{t+1}=\boldsymbol{\theta}_{t}-\eta_t \boldsymbol{\mathfrak{d}}_t, ~~\text{where}~~\boldsymbol{\mathfrak{d}}_t=\sum_{k=1}^K  \lambda_k \mathfrak{g}_k(\boldsymbol{\theta}_{t}).    
\end{align}

However, linear scalarization can only converge to Pareto points that lie on the \emph{convex} envelop of the Pareto front \citep{boyd2004convex}. Furthermore, the weighted average of the gradients with pre-defined weights yields a vector $\boldsymbol{\mathfrak{d}}_t$ whose direction might not be descent for all the clients; because some clients may have conflicting gradients with opposing directions due to the heterogeneity of their local datasets \citep{wang2021federated}. As a result, FedAvg may result in an unfair accuracy distribution among the clients \citep{li2019fair,mohri2019agnostic}.

\noindent$\bullet$ \textbf{MGDA:} 
To mitigate the above issue, \citep{hu2022federated} proposed to exploit MGDA algorithm in FL to converge to a fair solution on the Pareto front. Unlike linear scalarization, MGDA adaptively tunes $\{\lambda_k\}_{k \in [K]}$  by finding the minimal-norm element of the convex hull of the gradient vectors defined as follows (we drop the dependence of $\mathfrak{g}_k$ to $\boldsymbol{\theta}_t$ for ease of notation hereafter)
\begin{align}\label{eq:hull}
\mathcal{G}=\{ \boldsymbol{g} \in \mathbb{R}^d | \boldsymbol{g}=\sum_{k=1}^{K} \lambda_k \mathfrak{g}_k;~ \lambda_k \geq 0;~ \sum_{k=1}^{K} \lambda_k=1 \}.   
\end{align}

Denote the minimal-norm element of $\mathcal{G}$ by $\boldsymbol{\mathfrak{d}}(\mathcal{G})$. Then, either (i) $\boldsymbol{\mathfrak{d}}(\mathcal{G})=0$, and therefore based on \cref{lem:pareto} $\boldsymbol{\mathfrak{d}}(\mathcal{G})$ is a Pareto-stationary point; or (ii) $\boldsymbol{\mathfrak{d}}(\mathcal{G}) \neq 0$ and the direction of $-\boldsymbol{\mathfrak{d}}(\mathcal{G})$ is a common descent direction for all the objective functions $\{f_k(\boldsymbol{\theta})\}_{k \in [K]}$ \citep{desideri2009multiple}, meaning that all the directional
derivatives $\{ \mathfrak{g}_k \cdot \boldsymbol{\mathfrak{d}}(\mathcal{G}) \}_{k \in [K]}$ are positive. Having positive directional
derivatives is a \emph{necessary} condition to ensure that the common direction is descent for all the objective functions. 
 
\section{Motivation and Methodology
} \label{Sec:Formulation}
We first discuss our motivation in \cref{sec:motiv}, and then elaborate on the methodology in \cref{sec:method}.

\subsection{Motivation} \label{sec:motiv}
Although any solutions on the Pareto front is fair in the sense that decreasing one of the loss functions is not possible without sacrificing some others, not all of such solutions impose uniformity among the loss functions (see \cref{L0}). As such, we aim to find solutions on the Pareto front which enjoy such uniformity.

First we note that having a common descent direction is a necessary condition to find such uniform solutions; but not enough. Additionally, we stipulate that the rate of decrease in the loss function should be greater for clients whose loss functions are larger. In fact, the purpose of this paper is to find an updation direction for the server that satisfies both of the following conditions at the same time:
\begin{itemize}
    \item \textit{Condition (I)}: It is a descent direction for all $\{f_k(\boldsymbol{\theta})\}_{k \in [K]}$, which is a \emph{necessary} condition for the loss functions to decrease when the server updates the global model along that direction.
    \item \textit{Condition (II)}:
    It is more inclined toward the clients with larger losses, and therefore the directional derivatives of loss functions over the common direction are larger for those with larger loss functions. 
\end{itemize}

To satisfy \textit{Condition (I)}, it is enough to find $\boldsymbol{\mathfrak{d}}(\mathcal{G})$ using MGDA algorithm (as \citet{hu2022federated} uses MGDA to enforce fairness in FL setup). Nevertheless, we aim to further satisfy \textit{Condition (II)} on top of \textit{Condition (I)}. To this end, we investigate the direction of $\boldsymbol{\mathfrak{d}}(\mathcal{G})$, and note that it is more inclined toward that of $\min \{ \| \mathfrak{g}_k \|^2_2 \}_{k \in [K]}$. For instance, consider the simple example depicted in \cref{L1}, where $\| \mathfrak{g}_1\|_2^2 < \|\mathfrak{g}_2 \|_2^2$. The Convex hull $\mathcal{G}$ and the $\boldsymbol{\mathfrak{d}}(\mathcal{G})$ are depicted for $ \mathfrak{g}_1$ and $ \mathfrak{g}_2$. As seen, the direction of $\boldsymbol{\mathfrak{d}}(\mathcal{G})$ is mostly influenced by that of $\mathfrak{g}_1$. 

\begin{figure}[t]
	\centering
     \subfloat[]{\includegraphics[width=0.37\columnwidth]{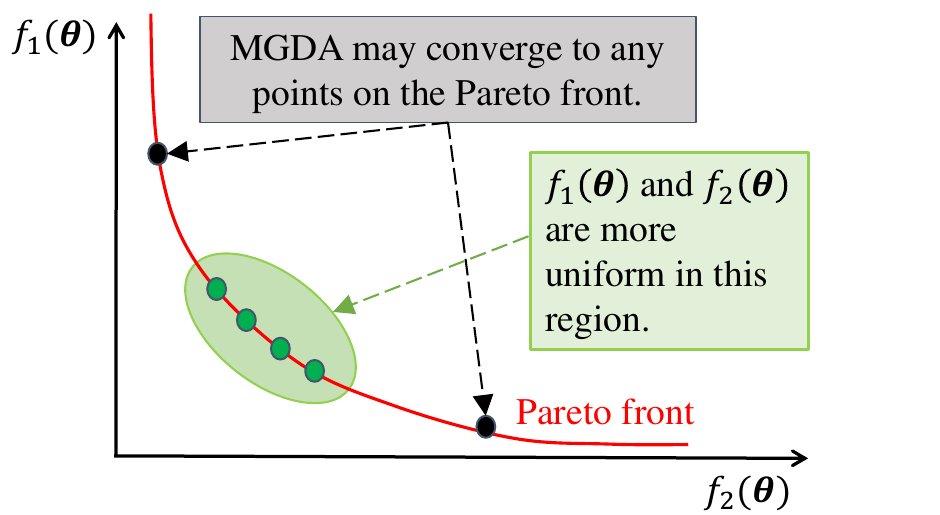}\label{L0}}
	\subfloat[]{\includegraphics[width=0.3\columnwidth]{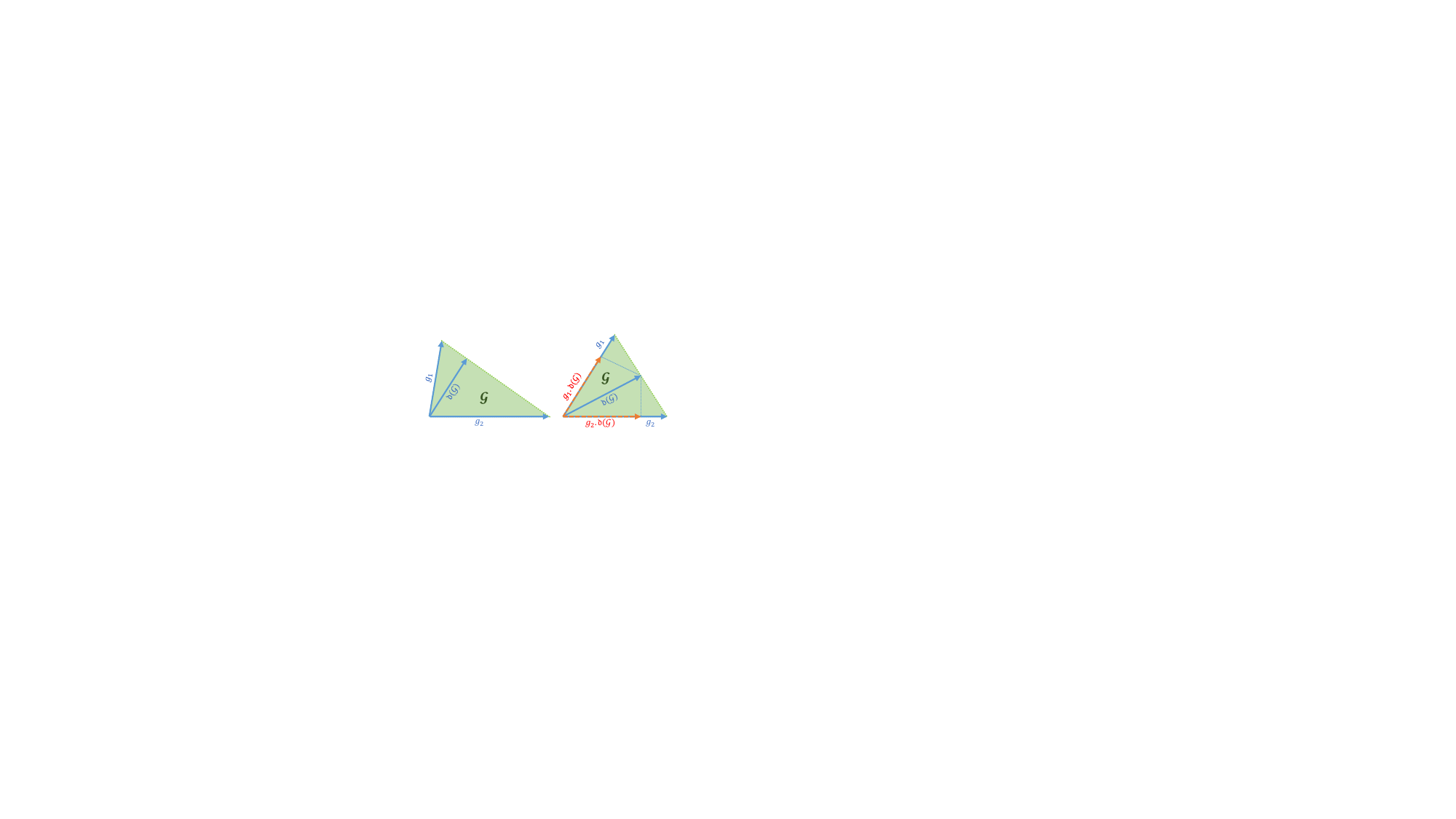}\label{L1}}
	\subfloat[]{\includegraphics[width=0.25\columnwidth]{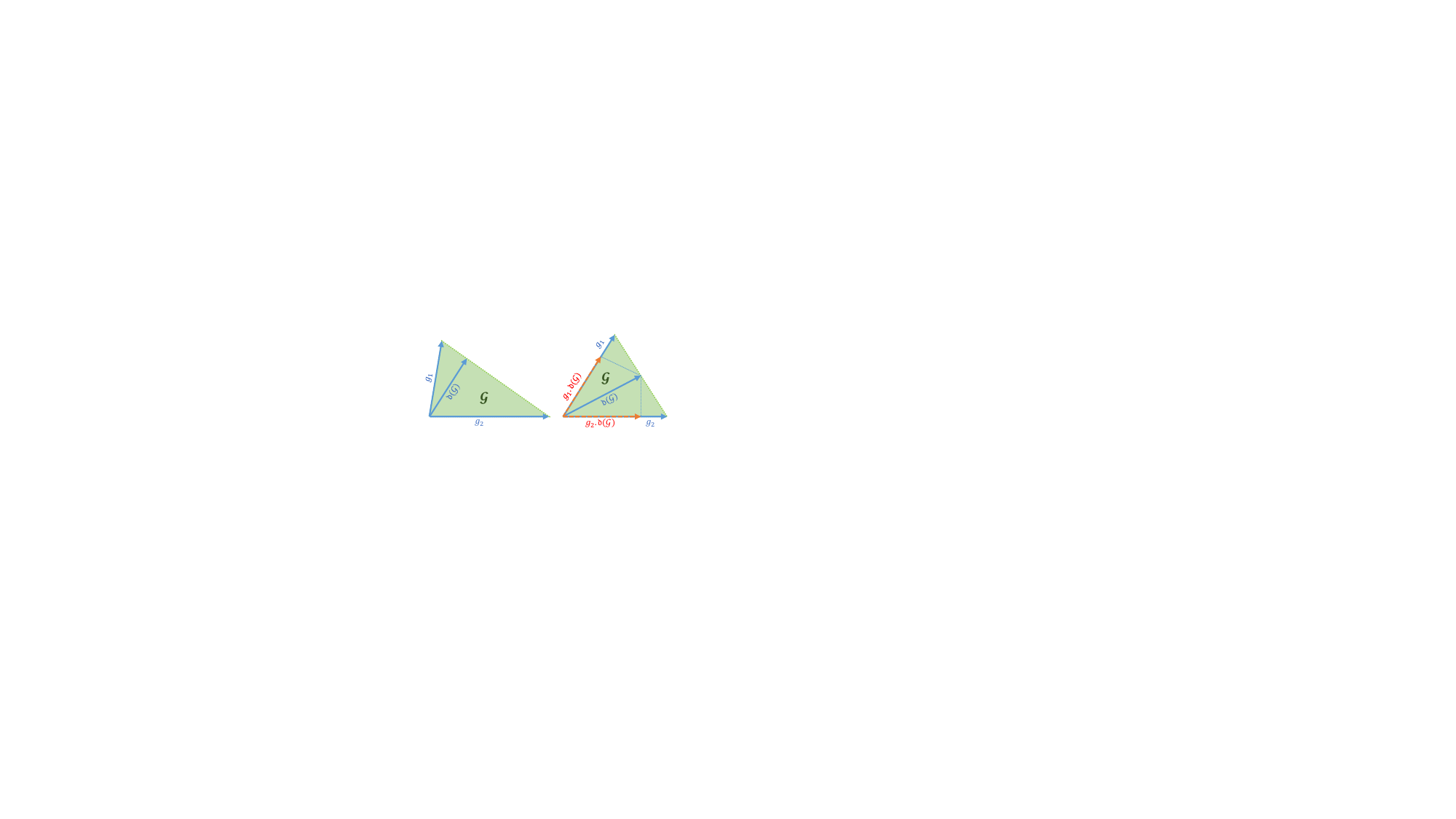}\label{L2}}
	\vspace{-.15cm}
    \caption{(a) The Pareto front for two objective functions $f_1(\boldsymbol{\theta})$ and $f_2(\boldsymbol{\theta})$ is depicted. MGDA may converge to any points on the Pareto front. (b)-(c)
    Illustration of convex hull $\mathcal{G}$ and minimal-norm vector $\boldsymbol{\mathfrak{d}}(\mathcal{G})$ for two gradient vectors $\mathfrak{g}_1$ and $\mathfrak{g}_2$. In (b), $\| \mathfrak{g}_1\|_2^2 < \|\mathfrak{g}_2 \|_2^2$, where the direction of $\boldsymbol{\mathfrak{d}}(\mathcal{G})$ is more inclined toward $\mathfrak{g}_1$. In (c), $\| \mathfrak{g}_1\|_2^2 = \|\mathfrak{g}_2 \|_2^2=1$, where the direction of $\boldsymbol{\mathfrak{d}}(\mathcal{G})$ is the same as that of the bisection of $\mathfrak{g}_1$ and $\mathfrak{g}_2$.}
	\label{L}
 \vspace{-.15cm}
\end{figure}
However, this phenomenon is not favourable for satisfying \textit{Condition (II)} since after some rounds of communication between the server and clients, the value of $\mathfrak{g}$ becomes small for those objective functions which are close to their minimum points. Consequently, the direction of $\boldsymbol{\mathfrak{d}}(\mathcal{G})$ is mostly controlled by these small $\mathfrak{g}$'s, which is undesirable. Note that $ \mathfrak{g}_k \cdot \boldsymbol{\mathfrak{d}} (\mathcal{G})$ represents how fast $f_k(\boldsymbol{\theta})$ changes if $\boldsymbol{\theta}$ changes in the direction of $\boldsymbol{\mathfrak{d}}(\mathcal{G})$. In fact, the direction of $\boldsymbol{\mathfrak{d}}(\mathcal{G})$ should be more inclined toward the gradients of those clients with larger loss functions. 

One possible solution could be to naively normalize $\{\mathfrak{g}_k\}_{k \in [K]}$ by their norm to obtain $\{\frac{\mathfrak{g}_k}{\|\mathfrak{g}_k \|_2^2}\}_{k \in [K]}$ whose convex hull is denoted by $\mathcal{G}_{\text{norm}}$, and then use this normalized set of gradients to find  $\boldsymbol{\mathfrak{d}}(\mathcal{G}_{\text{norm}})$. Yet, the normalization makes all the $\{ \mathfrak{g}_k \cdot \boldsymbol{\mathfrak{d}}(\mathcal{G}_{\text{norm}}) \}_{k \in [K]}$ equal (see \cref{L2}) which is still undesirable as the rate of decrease becomes equal for all $\{f_k(\boldsymbol{\theta})\}_{k \in [K]}$.

Based on these observations, the gradient vectors should be somehow $scaled$ if one aims to also satisfy  \textit{Condition} \textit{(II)}. Finding such $scaling$ factor is not straight-forward in general. To tackle this issue, and to be able to find a closed-form formula, we find the minimal-norm vector in the convex hull of mutually-orthogonal \emph{scaled} gradients instead, and prove that this yields a common direction for which both \textit{Conditions} \textit{(I)} and \textit{(II)} are satisfied.

\subsection{Methodology} \label{sec:method}
To devise an appropriate \emph{scaling} as explained above, we carry out the following two phases.

\subsubsection{\textit{Phase} 1, orthogonalization}
In order to be able to find a closed-form formula for the common descent direction, in the first phase, we orthogonalize the gradients. 


Once the gradient updates $\{ \mathfrak{g}_k \}_{k \in [K]}$ are transmitted by the clients, the server first generates a mutually orthogonal \footnote{Here, orthogonality is in the sense of standard inner product in Euclidean space.} set $\{ \Tilde{\mathfrak{g}}_k \}_{k \in [K]}$ that spans the same $K$-dimensional subspace in $\mathbb{R}^d$ as that spanned by $\{ \mathfrak{g}_k \}_{k \in [K]}$. 
To this aim, the server exploits a modified Gram–Schmidt orthogonalization process over $\{ \mathfrak{g}_k \}_{k \in [K]}$ in the following manner \footnote{The reason for such normalization is to satisfy \textit{Conditions I} and \textit{II}. This will be proven later in this section.} 
\begin{align} \label{eq:grad_upd}
\Tilde{\mathfrak{g}_1}&=\mathfrak{g}_1/ |f_k|^{\gamma} 
\\ \label{eq:gkdef}
\Tilde{\mathfrak{g}_k} &= \frac{\mathfrak{g}_k - \sum_{i=1}^{k-1} \text{proj}_{\Tilde{\mathfrak{g}_i}}(\mathfrak{g}_k)}{|f_k|^{\gamma} -\sum_{i=1}^{k-1}\frac{\mathfrak{g}_k \cdot \Tilde{\mathfrak{g}_i}}{\Tilde{\mathfrak{g}_i} \cdot \Tilde{\mathfrak{g}_i}}}, ~ \text{for}~ k=2,3,\dots,K,
\end{align}
where $\gamma > 0$ is a scalar. 

\noindent$\bullet$ Why such orthogonalization is possible? 

First, note that the orthogonalization approach in \textit{phase} 1 is feasible if we assume that  the $K$ gradient vectors $\{ \mathfrak{g}_k \}_{k \in [K]}$ are linearly independent. Indeed, this assumption is reasonable considering that (i) the gradient vectors $\{ \mathfrak{g}_k \}_{k \in [K]}$ are $K$ vectors in $d$-dimensional space, and $d >> K$ for the current deep neural networks (DNNs)\footnote{Also, note that to tackle non-iid distribution of user-specific data, it is a common practice that server selects a different subset of clients in each round \citep{mcmahan2017communication}.}; and (ii) the random nature of the gradient vectors due to the non-iid distributions of the local datasets. The validity of this assumption is further confirmed in our thorough experiments over different datasets and models.

\subsubsection{\textit{Phase} 2, finding optimal $\boldsymbol{\lambda}^*$}

In this phase, we aim to find the minimum-norm vector in the convex hull of the mutually-orthogonal gradients found in \textit{Phase (I)}. 

First, denote by $\Tilde{\mathcal{G}}$ the convex hull
of gradient vectors $\{\Tilde{\mathfrak{g}_k}\}_{k \in [K]}$ obtained in \textit{Phase} 1; that is, 
\begin{align}\label{eq:hull}
\Tilde{\mathcal{G}}=\{ \boldsymbol{g} \in \mathbb{R}^d | \boldsymbol{g}=\sum_{k=1}^{K} \lambda_k \Tilde{\mathfrak{g}_k};~ \lambda_k \geq 0;~ \sum_{k=1}^{K} \lambda_k=1 \}.   
\end{align}

In the following, we find the minimal-norm element in $\Tilde{\mathcal{G}}$, and then we show that this element is a descent direction for all the objective functions. 

Denote by $\boldsymbol{\lambda}^*$ the weights corresponding to the minimal-norm vector in $\Tilde{\mathcal{G}}$. To find the weight vector $\boldsymbol{\lambda}^*$, we solve 
\begin{align} \label{eq:ming}
\boldsymbol{g}^*=\arg \min_{\boldsymbol{g} \in \mathcal{G}} \|\boldsymbol{g} \|_2^2,    
\end{align}
which accordingly finds $\boldsymbol{\lambda}^*$. 
For an element $\boldsymbol{g} \in \mathcal{G}$, we have
\begin{align}
\|\boldsymbol{g}\|_2^2=\|\sum_{k=1}^{K} \lambda_k \Tilde{\mathfrak{g}_k}\|_2^2 =  \sum_{k=1}^{K} \lambda_k^2 \|\Tilde{\mathfrak{g}_k}\|_2^2,
\end{align}
where we used the fact that $\{\Tilde{\mathfrak{g}_k}\}_{k \in [K]}$ are orthogonal. 

To solve \cref{eq:ming}, we first ignore the inequality $\lambda_k \geq 0$, for $k \in [K]$, and then we observe that this constraint will be automatically satisfied. Therefore, we make the following Lagrangian to solve the minimization problem in \cref{eq:ming}: 
\begin{align}
L(\boldsymbol{\Tilde{\mathfrak{g}}},\boldsymbol{\lambda}) &=  \|\boldsymbol{g}\|_2^2- \alpha \left( \sum_{k=1}^{K} \lambda_k-1 \right)  \label{eq:L}
= \sum_{k=1}^{K} \lambda_k^2 \|\Tilde{\mathfrak{g}_k}\|_2^2- \alpha \left( \sum_{k=1}^{K} \lambda_k-1 \right).
\end{align}
Hence, 
\begin{align} \label{eq:partialL}
\frac{\partial L}{\partial \lambda_k}=2\lambda_k \|\Tilde{\mathfrak{g}_k}\|_2^2-\alpha,    
\end{align}
and by setting \cref{eq:partialL} to zero, we obtain:
\begin{align} \label{eq:lambda_sol}
\lambda_k^*=\frac{\alpha}{2 \|\Tilde{\mathfrak{g}_k}\|_2^2}.    
\end{align}
On the other hand, since $\sum_{k=1}^{K} \lambda_k=1$, from \cref{eq:lambda_sol} we obtain
\begin{align}
\alpha=\frac{2}{\sum_{k=1}^{K} \frac{1}{\|\Tilde{\mathfrak{g}_k}\|_2^2}},     
\end{align}
from which the optimal $\boldsymbol{\lambda}^*$ is obtained as follows
\begin{align} \label{eq:lambdaopt}
\lambda_k^*= \frac{1}{\|\Tilde{\mathfrak{g}_k}\|_2^2\sum_{k=1}^{K} \frac{1}{\|\Tilde{\mathfrak{g}_k}\|_2^2}}, ~~~\text{for}~k \in [K].   
\end{align}
Note that $\lambda_k^* > 0$, and therefore the minimum norm vector we found belongs to $\mathcal{G}$. 

Using the $\boldsymbol{\lambda}^*$ found in \eqref{eq:lambdaopt}, we can calculate $\boldsymbol{\mathfrak{d}}_t=\sum_{k=1}^K {\lambda}_k^*\Tilde{\mathfrak{g}_k}$ as the minimum norm element in the convex hull $\Tilde{\mathcal{G}}$. In the following (\cref{th:descent}), we show that the negate of $\boldsymbol{\mathfrak{d}}_t$ satisfies both \textit{Conditions (I)} and \textit{(II)}. 

\begin{theorem} \label{th:descent}
The negate of $\boldsymbol{\mathfrak{d}}_t=\sum_{k=1}^K {\lambda}_k^*\Tilde{\mathfrak{g}_k}$ satisfies both \textit{Conditions (I)} and \textit{(II)}.
\end{theorem}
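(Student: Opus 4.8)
The plan is to reduce both conditions to a single closed-form expression for the directional derivatives $\mathfrak{g}_k \cdot \boldsymbol{\mathfrak{d}}_t$, $k \in [K]$. Recall that $-\boldsymbol{\mathfrak{d}}_t$ is descent for $f_k$ precisely when $\mathfrak{g}_k \cdot \boldsymbol{\mathfrak{d}}_t > 0$, and that \textit{Condition (II)} asks this quantity to be monotonically increasing in $|f_k|$. I claim the clean identity $\mathfrak{g}_k \cdot \boldsymbol{\mathfrak{d}}_t = |f_k|^{\gamma}\,\|\boldsymbol{\mathfrak{d}}_t\|_2^2$ holds for every $k$, from which both conditions are immediate: the right-hand side is strictly positive whenever $f_k \neq 0$ and $\boldsymbol{\mathfrak{d}}_t \neq 0$, giving \textit{Condition (I)}; and since $\gamma > 0$, it is strictly increasing in $|f_k|$ with the factor $\|\boldsymbol{\mathfrak{d}}_t\|_2^2$ shared across all clients, giving \textit{Condition (II)} (the same conclusion holds for the normalized direction, where the derivative becomes $|f_k|^{\gamma}\|\boldsymbol{\mathfrak{d}}_t\|_2$).

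To establish the identity I would first exploit the orthogonality of $\{\Tilde{\mathfrak{g}_k}\}_{k\in[K]}$. Writing $S = \sum_{k=1}^K 1/\|\Tilde{\mathfrak{g}_k}\|_2^2$, the optimal weights from \eqref{eq:lambdaopt} give $\boldsymbol{\mathfrak{d}}_t = \tfrac{1}{S}\sum_{j=1}^K \Tilde{\mathfrak{g}_j}/\|\Tilde{\mathfrak{g}_j}\|_2^2$. Orthogonality then yields two facts simultaneously: $\|\boldsymbol{\mathfrak{d}}_t\|_2^2 = 1/S$, and, for each fixed index $j$, $\Tilde{\mathfrak{g}_j}\cdot\boldsymbol{\mathfrak{d}}_t = 1/S$. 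In other words, $\boldsymbol{\mathfrak{d}}_t$ has the same projection onto every orthogonal direction $\Tilde{\mathfrak{g}_j}$, and that common value equals $\|\boldsymbol{\mathfrak{d}}_t\|_2^2$. This uniform-projection property is the workhorse of the argument.

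Next I would re-express the original gradient $\mathfrak{g}_k$ in the orthogonal basis. Rearranging \eqref{eq:gkdef} gives $\mathfrak{g}_k = c_k\,\Tilde{\mathfrak{g}_k} + \sum_{i=1}^{k-1}\frac{\mathfrak{g}_k\cdot\Tilde{\mathfrak{g}_i}}{\|\Tilde{\mathfrak{g}_i}\|_2^2}\,\Tilde{\mathfrak{g}_i}$, where $c_k = |f_k|^{\gamma} - \sum_{i=1}^{k-1}\frac{\mathfrak{g}_k\cdot\Tilde{\mathfrak{g}_i}}{\|\Tilde{\mathfrak{g}_i}\|_2^2}$ is exactly the denominator appearing in \eqref{eq:gkdef}; note that only indices $i\le k$ occur, since $\mathfrak{g}_k \in \mathrm{span}\{\Tilde{\mathfrak{g}_1},\dots,\Tilde{\mathfrak{g}_k}\}$. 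Taking the inner product with $\boldsymbol{\mathfrak{d}}_t$ and inserting the uniform-projection value $\Tilde{\mathfrak{g}_i}\cdot\boldsymbol{\mathfrak{d}}_t = 1/S$ for every $i$, the expression collapses to $\mathfrak{g}_k\cdot\boldsymbol{\mathfrak{d}}_t = \tfrac{1}{S}\big(c_k + \sum_{i=1}^{k-1}\frac{\mathfrak{g}_k\cdot\Tilde{\mathfrak{g}_i}}{\|\Tilde{\mathfrak{g}_i}\|_2^2}\big)$. The summation here is precisely the term subtracted inside $c_k$, so it cancels and leaves $\mathfrak{g}_k\cdot\boldsymbol{\mathfrak{d}}_t = |f_k|^{\gamma}/S = |f_k|^{\gamma}\|\boldsymbol{\mathfrak{d}}_t\|_2^2$, the claimed identity. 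This cancellation is the crux, and it reveals why the particular normalization by $|f_k|^{\gamma}$ was built into the denominators in \textit{Phase 1}.

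The main obstacle is bookkeeping rather than conceptual: one must verify that the coefficients $\mathfrak{g}_k\cdot\Tilde{\mathfrak{g}_i}/\|\Tilde{\mathfrak{g}_i}\|_2^2$ appearing in the expansion of $\mathfrak{g}_k$ are identical to those subtracted in $c_k$, so that the cancellation is exact, and that the uniform-projection identity applies uniformly to all $i$. One should also record the mild nondegeneracy caveats — that the denominators $c_k$ and the loss values $f_k$ are nonzero, so that each $\Tilde{\mathfrak{g}_k}$ is well-defined and $\boldsymbol{\mathfrak{d}}_t \neq 0$ — which are consistent with the linear-independence assumption invoked in \textit{Phase 1}.
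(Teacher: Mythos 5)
Your proposal is correct and follows essentially the same route as the paper's proof: you expand $\mathfrak{g}_k$ via the rearranged Gram--Schmidt relation \eqref{eq:gkdef}, take the inner product with $\boldsymbol{\mathfrak{d}}_t$, and exploit orthogonality together with the fact that $\lambda_i^*\|\Tilde{\mathfrak{g}_i}\|_2^2$ is the same constant ($\alpha/2 = 1/S$) for every $i$, so the projection terms cancel exactly and leave $\mathfrak{g}_k\cdot\boldsymbol{\mathfrak{d}}_t = |f_k|^{\gamma}/S > 0$. Your ``uniform-projection'' phrasing and the identity $\mathfrak{g}_k\cdot\boldsymbol{\mathfrak{d}}_t = |f_k|^{\gamma}\|\boldsymbol{\mathfrak{d}}_t\|_2^2$ are just a repackaging of the paper's computation in \eqref{eq:firstpart}--\eqref{eq:gthan0}, with the added (harmless) explicit record of the nondegeneracy caveats.
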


\begin{proof}
We find the directional derivatives of loss functions $\{f_k\}_{k \in [K]}$ over $\boldsymbol{\mathfrak{d}}_t$. For $\forall k \in [K]$ we have
\begin{align}  
\mathfrak{g}_k \cdot \boldsymbol{\mathfrak{d}}_t  &=\left(\Tilde{\mathfrak{g}_k}  (|f_k|^{\gamma} -\sum_{i=1}^{k-1}\frac{\mathfrak{g}_k \cdot \Tilde{\mathfrak{g}_i}}{\Tilde{\mathfrak{g}_i} \cdot \Tilde{\mathfrak{g}_i}}) + \sum_{i=1}^{k-1} \text{proj}_{\Tilde{\mathfrak{g}_i}}(\mathfrak{g}_k)\right) \cdot (\sum_{i=1}^K {\lambda}_i^*\Tilde{\mathfrak{g}_i}) \label{eq:firstpart}
\\ 
&=\lambda_k^* \|\Tilde{\mathfrak{g}_k}\|^2_2 \left(|f_k|^{\gamma} -\sum_{i=1}^{k-1}\frac{\mathfrak{g}_k \cdot \Tilde{\mathfrak{g}_i}}{\Tilde{\mathfrak{g}_i} \cdot \Tilde{\mathfrak{g}_i}} \right)+  \sum_{i=1}^{k-1}\frac{\mathfrak{g}_k \cdot \Tilde{\mathfrak{g}_i}}{\Tilde{\mathfrak{g}_i} \cdot \Tilde{\mathfrak{g}_i}} \lambda^*_i \|\Tilde{\mathfrak{g}_i}\|^2_2 \label{eq:secpart}
\\ 
&=\frac{\alpha}{2} \left(|f_k|^{\gamma} -\sum_{i=1}^{k-1}\frac{\mathfrak{g}_k \cdot \Tilde{\mathfrak{g}_i}}{\Tilde{\mathfrak{g}_i} \cdot \Tilde{\mathfrak{g}_i}} \right) + \frac{\alpha}{2} \sum_{i=1}^{k-1}\frac{\mathfrak{g}_k \cdot \Tilde{\mathfrak{g}_i}}{\Tilde{\mathfrak{g}_i} \cdot \Tilde{\mathfrak{g}_i}} \label{eq:thirdpart}
\\ 
&=\frac{\alpha}{2}|f_k|^{\gamma}=\frac{|f_k|^{\gamma}}{\sum_{k=1}^{K} \frac{1}{\|\Tilde{\mathfrak{g}_k}\|_2^2}} > 0 \label{eq:gthan0},
\end{align}
where (i) \cref{eq:firstpart} is obtained by using definition of $\Tilde{\mathfrak{g}_k} $ in \cref{eq:gkdef}, (ii) \cref{eq:secpart} follows from the orthogonality of $\{\Tilde{\mathfrak{g}_k}\}_{k=1}^K $ vectors, and (iii) \cref{eq:thirdpart} is obtained by using \cref{eq:lambda_sol}. 

As seen in \cref{eq:gthan0}, the directional derivatives over $\boldsymbol{\mathfrak{d}}_t$ are positive, meaning that the direction of $-\boldsymbol{\mathfrak{d}}_t$ is descent for all $\{f_k\}_{k \in [K]}$. In addition, the value of these directional derivatives are proportional to $|f_k|^{\gamma}$. This implies that if the server changes the global model in the direction of $\boldsymbol{\mathfrak{d}}_t$, the rate of decrease is higher for those functions with larger loss function values. Thus, $-\boldsymbol{\mathfrak{d}}_t$ satisfies both \textit{Conditions (I)} and \textit{(II)}. 
\end{proof}

\begin{remark}
As seen, \cref{eq:lambdaopt} yields a closed-form formula to find the optimal weights for the orthogonal scaled gradients $\{\Tilde{\mathfrak{g}_k}\}_{k \in [K]}$, based on which the common direction is obtained. On the contrary, FedMGDA+ \citep{hu2022federated} solves an iterative algorithm to find the updating directions. The complexity of such algorithms is greatly controlled by the size of the model (and the number of participating devices). As the 
recent DNNs are large in size, deploying such iterative algorithms slows down the FL process. Furthermore, we note that the computational cost of proposed algorithm is negligible (see \cref{sec:compcost} for details).
\end{remark}

\section{The AdaFed algorithm}
At iteration $t$, the server computes $\boldsymbol{\mathfrak{d}}_t$ using the methodology described in \cref{sec:method}, and then updates the global model as 
\begin{align} \label{eq:server_update}
\boldsymbol{\theta}_{t+1} = \boldsymbol{\theta}_{t}-\eta_t\boldsymbol{\mathfrak{d}}_t.
\end{align}
Similarly to the conventional GD, we note that updating the global model as \eqref{eq:server_update} is a \emph{necessary} condition to have $\boldsymbol{f}(\boldsymbol{\theta}_{t+1}) \leq \boldsymbol{f}(\boldsymbol{\theta}_{t})$. In \cref{th:eta}, we state the \emph{sufficient} condition to satisfy $\boldsymbol{f}(\boldsymbol{\theta}_{t+1}) \leq \boldsymbol{f}(\boldsymbol{\theta}_{t})$.

\begin{theorem} \label{th:eta}
Assume that $\boldsymbol{f}=\{f_k\}_{k \in [K]}$ are L-Lipschitz smooth. If the step-size $\eta_t \in [0,\frac{2}{L} \min \{|f_k|^{\gamma} \}_{k \in [K]}]$, then $\boldsymbol{f}(\boldsymbol{\theta}_{t+1}) \leq \boldsymbol{f}(\boldsymbol{\theta}_{t})$, and equality is achieved iff $\boldsymbol{\mathfrak{d}}_t=\boldsymbol{0}$.
\end{theorem}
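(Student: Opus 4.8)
The plan is to invoke the standard quadratic upper bound (descent lemma) afforded by $L$-Lipschitz smoothness, applied to each client objective $f_k$ separately along the server update $\boldsymbol{\theta}_{t+1}=\boldsymbol{\theta}_t-\eta_t\boldsymbol{\mathfrak{d}}_t$, and then to feed in the two quantities already computed in the proof of \cref{th:descent}. Concretely, for each $k\in[K]$ the smoothness of $f_k$ gives
\begin{align}
f_k(\boldsymbol{\theta}_{t+1}) \le f_k(\boldsymbol{\theta}_t) - \eta_t\,(\mathfrak{g}_k\cdot\boldsymbol{\mathfrak{d}}_t) + \frac{L\eta_t^2}{2}\,\|\boldsymbol{\mathfrak{d}}_t\|_2^2 ,
\end{align}
so the whole statement reduces to controlling the sign of $-\eta_t(\mathfrak{g}_k\cdot\boldsymbol{\mathfrak{d}}_t)+\tfrac{L\eta_t^2}{2}\|\boldsymbol{\mathfrak{d}}_t\|_2^2$ for every $k$.

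The two ingredients I would import are as follows. First, \cref{eq:gthan0} already established $\mathfrak{g}_k\cdot\boldsymbol{\mathfrak{d}}_t=\tfrac{\alpha}{2}|f_k|^{\gamma}$. Second, I would compute $\|\boldsymbol{\mathfrak{d}}_t\|_2^2$ using orthogonality of $\{\Tilde{\mathfrak{g}_k}\}_{k\in[K]}$ together with $\lambda_k^*=\tfrac{\alpha}{2\|\Tilde{\mathfrak{g}_k}\|_2^2}$ from \cref{eq:lambda_sol}: since $\Tilde{\mathfrak{g}_k}\cdot\boldsymbol{\mathfrak{d}}_t=\lambda_k^*\|\Tilde{\mathfrak{g}_k}\|_2^2=\tfrac{\alpha}{2}$ is the \emph{same} for every $k$, we get $\|\boldsymbol{\mathfrak{d}}_t\|_2^2=\sum_k\lambda_k^*(\Tilde{\mathfrak{g}_k}\cdot\boldsymbol{\mathfrak{d}}_t)=\tfrac{\alpha}{2}\sum_k\lambda_k^*=\tfrac{\alpha}{2}$, where the last step uses $\sum_k\lambda_k^*=1$.

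Substituting both identities into the descent inequality and factoring yields
\begin{align}
f_k(\boldsymbol{\theta}_{t+1}) \le f_k(\boldsymbol{\theta}_t) - \frac{\alpha\eta_t}{4}\bigl(2|f_k|^{\gamma}-L\eta_t\bigr),
\end{align}
for every $k$. Since $\alpha>0$ and $\eta_t\ge 0$, the bracketed term (hence the whole correction) is nonnegative exactly when $\eta_t\le \tfrac{2}{L}|f_k|^{\gamma}$; requiring this simultaneously for all $k$ gives precisely $\eta_t\le\tfrac{2}{L}\min\{|f_k|^{\gamma}\}_{k\in[K]}$, which is the stated step-size range, and establishes $\boldsymbol{f}(\boldsymbol{\theta}_{t+1})\le\boldsymbol{f}(\boldsymbol{\theta}_t)$ under the partial ordering.

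For the equality characterization: if $\boldsymbol{\mathfrak{d}}_t=\boldsymbol{0}$ then $\boldsymbol{\theta}_{t+1}=\boldsymbol{\theta}_t$ and all function values are trivially unchanged. For the converse I would use that $\boldsymbol{\mathfrak{d}}_t=\boldsymbol{0}$ is equivalent to $\alpha=0$ (because $\|\boldsymbol{\mathfrak{d}}_t\|_2^2=\alpha/2$); when $\alpha>0$ and $\eta_t$ lies in the interior of the admissible interval, the correction $\tfrac{\alpha\eta_t}{4}(2|f_k|^{\gamma}-L\eta_t)$ is \emph{strictly} positive for every $k$, forcing strict decrease of each $f_k$ and ruling out equality. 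I expect this last bookkeeping — pinning down strictness, and in particular the degenerate boundary case $\eta_t=\tfrac{2}{L}\min_k|f_k|^{\gamma}$ where the minimizing client's correction vanishes — to be the only delicate part; the inequality itself is an immediate consequence of the descent lemma once the two identities from \cref{th:descent} are in hand.
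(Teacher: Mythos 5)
Your proposal follows essentially the same route as the paper's own proof: apply the per-client descent lemma to the update $\boldsymbol{\theta}_{t+1}=\boldsymbol{\theta}_t-\eta_t\boldsymbol{\mathfrak{d}}_t$, substitute $\mathfrak{g}_k\cdot\boldsymbol{\mathfrak{d}}_t=\frac{\alpha}{2}|f_k|^{\gamma}$ from \cref{eq:gthan0} together with $\|\boldsymbol{\mathfrak{d}}_t\|_2^2=\frac{\alpha}{2}$ (which the paper gets by plugging \cref{eq:lambdaopt} into $\sum_k(\lambda_k^*)^2\|\Tilde{\mathfrak{g}_k}\|_2^2$, and you obtain a bit more cleanly via $\Tilde{\mathfrak{g}_k}\cdot\boldsymbol{\mathfrak{d}}_t=\frac{\alpha}{2}$ and $\sum_k\lambda_k^*=1$), and read off $\eta_t\le\frac{2}{L}|f_k|^{\gamma}$ simultaneously for all $k$. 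The one substantive difference favors you: the paper's proof stops at the monotonicity claim and never addresses the ``equality iff $\boldsymbol{\mathfrak{d}}_t=\boldsymbol{0}$'' clause, whereas you do, and you correctly identify the genuine degeneracies ($\eta_t=0$, and the boundary step size $\eta_t=\frac{2}{L}\min_k|f_k|^{\gamma}$, where the minimizing client's correction term vanishes) at which that clause cannot be extracted from the descent-lemma bound alone.
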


\begin{proof}
If all the $\{f_k\}_{k \in [K]}$ are $L$-smooth, then
\begin{align} \label{eq:smooth}
\boldsymbol{f}(\boldsymbol{\theta}_{t+1}) \leq \boldsymbol{f}(\boldsymbol{\theta}_{t})+ \boldsymbol{\mathfrak{g}}^T(\boldsymbol{\theta}_{t+1}-\boldsymbol{\theta}_{t})+\frac{L}{2} \|\boldsymbol{\theta}_{t+1}-\boldsymbol{\theta}_{t} \|_2^2.
\end{align}
Now, for client $k \in [K]$, by using the update rule \cref{eq:server_update} in \cref{eq:smooth} we obtain
\begin{align}
f_k(\boldsymbol{\theta}_{t+1}) \leq f_k (\boldsymbol{\theta}_{t})- \eta_{t} \mathfrak{g}_k \cdot \boldsymbol{\mathfrak{d}}_{t} +  \eta_{t}^2 \frac{L}{2} \|  \boldsymbol{\mathfrak{d}}_{t}  \|_2^2.
\end{align}
To impose $f_k(\boldsymbol{\theta}_{t+1}) \leq f_k (\boldsymbol{\theta}_{t})$, we should have
\begin{align}
\eta_{t} \mathfrak{g}_k \cdot \boldsymbol{\mathfrak{d}}_{t} \geq \eta_{t}^2 \frac{L}{2} \|  \boldsymbol{\mathfrak{d}}_{t}  \|_2^2 ~~~
&\Leftrightarrow ~~   \mathfrak{g}_k \cdot \boldsymbol{\mathfrak{d}}_{t} \geq \frac{\eta_{t} L}{2} \sum_{k=1}^{K} \frac{\|\Tilde{\mathfrak{g}_k}\|_2^2}{\|\Tilde{\mathfrak{g}_k}\|_2^4 \left( \sum_{i=1}^{K} \frac{1}{\|\Tilde{\mathfrak{g}_i}\|_2^2}\right)^2}\\ \label{eq:dnorm}
&\Leftrightarrow ~~ \frac{|f_k|^{\gamma}}{\sum_{k=1}^{K} \frac{1}{\|\Tilde{\mathfrak{g}_k}\|_2^2}} \geq \frac{\eta_{t} L}{2}  \frac{1}{\left(\sum_{k=1}^{K} \frac{1}{\|\Tilde{\mathfrak{g}_k}\|_2^2}\right)^2} \sum_{k=1}^{K} \frac{1}{\|\Tilde{\mathfrak{g}_k}\|_2^2} \\
&\Leftrightarrow ~~ \eta_{t} \leq \frac{2}{L}  |f_k|^{\gamma}.
\end{align}

Therefore, if the step-size $\eta_{t} \in [0,\frac{2}{L} \min \{|f_k|^{\gamma} \}_{k \in [K]}]$, then $\boldsymbol{f}(\boldsymbol{\theta}_{t+1}) \leq \boldsymbol{f}(\boldsymbol{\theta}_{t})$.
\end{proof}


 Lastly, similar to many recent FL algorithms \cite{mcmahan2017communication,li2019fair,hu2022federated}, 
we allow each client to perform a couple of local epochs $e$ before sending its gradient update to the server. In this case, the pseudo-gradients (the opposite of the local updates) will be abusively used as the gradient vectors. It is important to note that we provide a convergence guarantee for this scenario in \cref{sec:conv}.  We summarize AdaFed in \cref{alg:AdaFed}. 

\begin{remark}
When $e>1$, an alternative approach is to use the accumulated loss rather than the loss from the last iteration in line (9) of \cref{alg:AdaFed}. However, based on our experiments, we observed that using the accumulated loss does not affect the overall performance of the algorithm, including its convergence speed, accuracy and fairness. This stands in contrast to the use of pseudo-gradients, which serves clear purposes of accelerating convergence and reducing communication costs.     
\end{remark}

\begin{algorithm}[tb]
   \caption{AdaFed}
   \label{alg:AdaFed}
\begin{algorithmic}[1]
   \State  {\bfseries Input:} Number of global epochs $T$, number of local epochs $e$, global learning rate ${\eta}_t$, local learning rate $\eta$, initial global model $\boldsymbol{\theta}_0$, local datasets $\{\mathcal{D}_k\}_{k \in K}$.
   \For{$t=0,1,\dots,T-1$}
   \State Server randomly selects a subset of devices $\mathcal{S}_t$ and sends $\boldsymbol{\theta}_t$ to them.
   \For{device $k \in \mathcal{S}_t$ in parallel} [local training]
   \State Store the value  $\boldsymbol{\theta}_t$ in  $\boldsymbol{\theta}_{\text{init}}$; that is $\boldsymbol{\theta}_{\text{init}} \leftarrow \boldsymbol{\theta}_t$. 
   \For{$e$ epochs} 
   \State Perform (stochastic) gradient descent over local dataset $\mathcal{D}_k$ to update:
   $\boldsymbol{\theta}_t \leftarrow \boldsymbol{\theta}_t- \eta \nabla f_k(\boldsymbol{\theta}_t, \mathcal{D}_k)$.
   \EndFor
   \State {Send the pseudo-gradient $\mathfrak{g}_k:=\boldsymbol{\theta}_{\text{init}}-\boldsymbol{\theta}_t$ and local loss value $f_k(\boldsymbol{\theta}_t)$
   to the server}.
   \EndFor
   \For{$k=1,2,\dots,|\mathcal{S}_t|$}
    \State Find $\Tilde{\mathfrak{g}_k}$ form \cref{eq:grad_upd,eq:gkdef}.
   \EndFor
   \State Find $\boldsymbol{\lambda}^*$ from \cref{eq:lambdaopt}.
   \State Calculate $\boldsymbol{\mathfrak{d}}_t:=\sum_{k=1}^K {\lambda}_k^*\Tilde{\mathfrak{g}_k}$. 
   \State $\boldsymbol{\theta}_{t+1} \leftarrow \boldsymbol{\theta}_{t}-\eta_t\boldsymbol{\mathfrak{d}}_t$.
   \EndFor
   \State  {\bfseries Output:} Global model $\boldsymbol{\theta}_{T}$.
\end{algorithmic}
\end{algorithm}


\subsection{Convergence results} \label{sec:conv}
In the following, we prove the convergence guarantee of AdaFed based on how the clients update the local models: (i) using SGD with $e=1$, (ii) using GD with $e>1$, and (iii) using GD with $e=1$. Of course the strongest convergence guarantee is provided for the latter case.

\begin{theorem}[$e=1$ \& local SGD] \label{th:case1}
Assume that $\boldsymbol{f}=\{f_k\}_{k \in [K]}$ are l-Lipschitz continuous and L-Lipschitz smooth, and that the global step-size $\eta_t$ satisfies the following three conditions: (i) $\eta_t \in (0,\frac{1}{2L}]$, (ii)  $\lim_{T \rightarrow \infty} \sum_{t=0}^T \eta_t \rightarrow \infty$, and (iii) $\lim_{T \rightarrow \infty} \sum_{t=0}^T \eta_t \sigma_t < \infty$; where $\sigma^2_t=\bold{E} [\| \Tilde{\boldsymbol{\mathfrak{g}}} \boldsymbol{\lambda}^* - \Tilde{\boldsymbol{\mathfrak{g}}}_s \boldsymbol{\lambda}_s^*\|]^2$ is the variance of stochastic common descent direction. Then 
\begin{align}
 \lim_{T \rightarrow \infty} ~\min_{t=0,\dots,T}\bold{E}[\| \boldsymbol{\mathfrak{d}}_t\|] \rightarrow 0.
\end{align}
\end{theorem}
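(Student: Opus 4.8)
The plan is to run a Robbins--Monro style stochastic-approximation argument driven by the one-step descent inequality that $L$-smoothness supplies, with the three step-size conditions playing their usual roles: condition~(i) keeps each step inside a ``descent regime,'' condition~(iii) makes the accumulated gradient noise summable, and condition~(ii) forces the minimal slope to vanish. First I would record the deterministic identity that powers everything. Combining the norm computation $\|\boldsymbol{\mathfrak{d}}_t\|_2^2=\sum_k(\lambda_k^*)^2\|\Tilde{\mathfrak{g}_k}\|_2^2=\big(\sum_k\|\Tilde{\mathfrak{g}_k}\|_2^{-2}\big)^{-1}$ with \cref{eq:gthan0} yields the clean relation $\mathfrak{g}_k\cdot\boldsymbol{\mathfrak{d}}_t=|f_k|^{\gamma}\,\|\boldsymbol{\mathfrak{d}}_t\|_2^2$ for every $k$. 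Writing the actual (stochastic) update as $\boldsymbol{\theta}_{t+1}=\boldsymbol{\theta}_t-\eta_t\boldsymbol{\mathfrak{d}}_{t,s}$ with $\boldsymbol{\mathfrak{d}}_{t,s}=\Tilde{\boldsymbol{\mathfrak{g}}}_s\boldsymbol{\lambda}_s^*$, $L$-smoothness of each $f_k$ gives
\begin{align*}
f_k(\boldsymbol{\theta}_{t+1})\le f_k(\boldsymbol{\theta}_t)-\eta_t\,\mathfrak{g}_k\cdot\boldsymbol{\mathfrak{d}}_{t,s}+\tfrac{L\eta_t^2}{2}\|\boldsymbol{\mathfrak{d}}_{t,s}\|_2^2 .
\end{align*}

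Next I would split $\boldsymbol{\mathfrak{d}}_{t,s}=\boldsymbol{\mathfrak{d}}_t+\boldsymbol{e}_t$ with $\boldsymbol{e}_t:=\boldsymbol{\mathfrak{d}}_{t,s}-\boldsymbol{\mathfrak{d}}_t$, apply the identity above to the deterministic part, and bound the noise cross-term by $|\mathfrak{g}_k\cdot\boldsymbol{e}_t|\le\|\mathfrak{g}_k\|_2\|\boldsymbol{e}_t\|_2\le l\,\|\boldsymbol{e}_t\|_2$ using $l$-Lipschitz continuity and Cauchy--Schwarz. Taking expectations (conditioning on $\boldsymbol{\theta}_t$) makes $\mathbf{E}\|\boldsymbol{e}_t\|_2=\sigma_t$ appear exactly where condition~(iii) can absorb it, and using $\eta_t\le\frac{1}{2L}$ from condition~(i) keeps the net coefficient of $\|\boldsymbol{\mathfrak{d}}_t\|_2^2$ negative. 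Rearranging and telescoping over $t=0,\dots,T$ then leaves an inequality of the form
\begin{align*}
\sum_{t=0}^{T}\eta_t\,c_t\,\mathbf{E}\|\boldsymbol{\mathfrak{d}}_t\|_2^2\;\le\;\big(f_k(\boldsymbol{\theta}_0)-\inf f_k\big)+C\sum_{t=0}^{T}\eta_t\sigma_t ,
\end{align*}
where $c_t$ collects the weight $|f_k|^{\gamma}$ together with the smoothness correction.

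Because each $f_k$ is bounded below and $\sum_t\eta_t\sigma_t<\infty$ by~(iii), the right-hand side is bounded uniformly in $T$, so $\sum_t\eta_t\,c_t\,\mathbf{E}\|\boldsymbol{\mathfrak{d}}_t\|_2^2<\infty$. Pairing this with $\sum_t\eta_t=\infty$ from~(ii) forces $\liminf_t\mathbf{E}\|\boldsymbol{\mathfrak{d}}_t\|_2^2=0$, and Jensen's inequality $\mathbf{E}\|\boldsymbol{\mathfrak{d}}_t\|_2\le(\mathbf{E}\|\boldsymbol{\mathfrak{d}}_t\|_2^2)^{1/2}$ upgrades this to $\min_{t\le T}\mathbf{E}\|\boldsymbol{\mathfrak{d}}_t\|_2\to0$ as $T\to\infty$, which is the claim.

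The hard part will be keeping the effective descent coefficient $c_t\propto\big(|f_k|^{\gamma}-\tfrac{L\eta_t}{2}\big)$ bounded away from zero: unlike \cref{th:eta}, condition~(i) only guarantees $\tfrac{L\eta_t}{2}\le\tfrac14$, so the weight $|f_k|^{\gamma}$ can in principle shrink and annihilate the telescoping. I would handle this by applying the bound to the client attaining $\max_k|f_k|^{\gamma}$ at each round (making $c_t$ as large as possible), and arguing separately that when all $|f_k|^{\gamma}$ are simultaneously small the iterate is already near a common minimizer so $\|\boldsymbol{\mathfrak{d}}_t\|_2$ is correspondingly small. A second, more technical obstacle is reconciling moments: $\sigma_t$ is a first-moment quantity whereas the smoothness term carries the second moment $\mathbf{E}\|\boldsymbol{\mathfrak{d}}_{t,s}\|_2^2$, and the min-norm identity $\|\boldsymbol{\mathfrak{d}}_{t,s}\|_2^2=\big(\sum_k\|\Tilde{\mathfrak{g}_k}\|_2^{-2}\big)^{-1}$ need not stay bounded as losses decrease; closing the telescoping without an extra $\sum_t\eta_t^2$ assumption therefore requires either a uniform control on $\|\boldsymbol{\mathfrak{d}}_{t,s}\|_2$ from bounded stochastic gradients or absorbing the $\|\boldsymbol{\mathfrak{d}}_t\|_2^2$ part of $\|\boldsymbol{\mathfrak{d}}_{t,s}\|_2^2$ back into the descent term.
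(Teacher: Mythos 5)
Your proposal follows essentially the same route as the paper's own proof: apply the $L$-smoothness descent inequality to the stochastic update, decompose the stochastic direction into the deterministic minimal-norm direction plus a noise term $\Delta_t = \Tilde{\boldsymbol{\mathfrak{g}}}\boldsymbol{\lambda}^*-\Tilde{\boldsymbol{\mathfrak{g}}}_s\boldsymbol{\lambda}_s^*$ whose expected norm is $\sigma_t$, bound the cross term by $l\|\Delta_t\|$ via Lipschitz continuity and Cauchy--Schwarz, telescope using $\eta_t\le\frac{1}{2L}$, and invoke conditions (ii)--(iii) together with Jensen's inequality to force $\min_{t\le T}\mathbf{E}\|\boldsymbol{\mathfrak{d}}_t\|\to 0$. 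It is worth noting that the two ``hard parts'' you flag are genuine weak points that the paper's proof silently elides rather than resolves: the paper replaces the directional-derivative term (which, by your identity, equals $|f_k|^{\gamma}\|\boldsymbol{\mathfrak{d}}_t\|^2$ coordinate-wise) directly with $\|\boldsymbol{\mathfrak{d}}_t\|^2$, implicitly assuming $|f_k|^{\gamma}\ge 1$, and it likewise conflates first and second moments of $\|\Delta_t\|$ when writing the accumulated noise as $\sum_t \eta_t(l\sigma_t+L\eta_t\sigma_t^2)$, so your proposal is, if anything, more careful than the argument it reconstructs.
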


\begin{theorem}[$e>1$ \& local GD] \label{th:case2}
Assume that $\boldsymbol{f}=\{f_k\}_{k \in [K]}$ are l-Lipschitz continuous and L-Lipschitz smooth. Denote by $\eta_t$ and $\eta$ the global and local learning rates, respectively. Also, define $\zeta_t=\| \boldsymbol{\lambda}^{*} -\boldsymbol{\lambda}^{*}_e\|$, where $\boldsymbol{\lambda}^{*}_e$ is the optimum weights obtained from pseudo-gradients after $e$ local epochs. We have 
\begin{align}
 \lim_{T \rightarrow \infty} ~\min_{t=0,\dots,T}\| \boldsymbol{\mathfrak{d}}_t\| \rightarrow 0,
\end{align}

if the following conditions are satisfied: (i) $\eta_t \in (0,\frac{1}{2L}]$, (ii)  $\lim_{T \rightarrow \infty} \sum_{t=0}^T \eta_t \rightarrow \infty$, (iii) $\lim_{t \rightarrow \infty} \eta_t \rightarrow 0$, (iv) $\lim_{t \rightarrow \infty} \eta \rightarrow 0$, and (v) $\lim_{t \rightarrow \infty} \zeta_t \rightarrow 0$.
\end{theorem}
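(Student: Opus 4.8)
The plan is to mirror the Lyapunov-descent argument behind \cref{th:case1}, replacing its stochastic variance term $\sigma_t$ by a deterministic \emph{bias} term that measures the gap between the $e$-epoch pseudo-gradients that actually drive the update and the exact gradients $\{\nabla f_k(\boldsymbol{\theta}_t)\}_{k\in[K]}$ at the global iterate. Write $\boldsymbol{\mathfrak{d}}_t=\sum_k\lambda^*_{e,k}\Tilde{\mathfrak{g}}_{e,k}$ for the direction the server uses (orthogonalized scaled pseudo-gradients with weights $\boldsymbol{\lambda}^*_e$), and let $\boldsymbol{\mathfrak{d}}_t^{\mathrm{tr}}=\sum_k\lambda^*_k\Tilde{\mathfrak{g}}_k$ denote the idealized direction obtained by running \textit{Phases}~1--2 on the exact gradients (normalized to the same scale as the pseudo-gradients so that $\varepsilon_t:=\|\boldsymbol{\mathfrak{d}}_t-\boldsymbol{\mathfrak{d}}_t^{\mathrm{tr}}\|_2$ is a genuinely small quantity). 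The engine of the argument is the identity of \cref{th:descent}: for the idealized direction, $\nabla f_k(\boldsymbol{\theta}_t)\cdot\boldsymbol{\mathfrak{d}}_t^{\mathrm{tr}}$ is a positive multiple of $|f_k|^{\gamma}\|\boldsymbol{\mathfrak{d}}_t^{\mathrm{tr}}\|_2^2$ for every $k$.

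Starting from the $L$-smoothness of each $f_k$ along $\boldsymbol{\theta}_{t+1}=\boldsymbol{\theta}_t-\eta_t\boldsymbol{\mathfrak{d}}_t$, I would write
\[
f_k(\boldsymbol{\theta}_{t+1})\le f_k(\boldsymbol{\theta}_t)-\eta_t\,\nabla f_k(\boldsymbol{\theta}_t)\cdot\boldsymbol{\mathfrak{d}}_t+\tfrac{L\eta_t^2}{2}\|\boldsymbol{\mathfrak{d}}_t\|_2^2,
\]
and split $\nabla f_k\cdot\boldsymbol{\mathfrak{d}}_t=\nabla f_k\cdot\boldsymbol{\mathfrak{d}}_t^{\mathrm{tr}}+\nabla f_k\cdot(\boldsymbol{\mathfrak{d}}_t-\boldsymbol{\mathfrak{d}}_t^{\mathrm{tr}})$. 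The first summand is the strictly positive descent term above, while the second is bounded in magnitude by $l\,\varepsilon_t$ using $\|\nabla f_k\|_2\le l$ ($l$-Lipschitz continuity). Because the first-order descent term carries a factor that is amplified as the local rate $\eta$ shrinks (condition (iv)), while the quadratic smoothness term is only $O(\eta_t^2)$, the step-size cap $\eta_t\le 1/2L$ of condition (i) lets me absorb the quadratic term and keep a strictly positive effective descent coefficient $c_t>0$ — the same mechanism already exploited in \cref{th:eta}.

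Summing over $k$, telescoping the inequality for $F=\sum_k f_k$ over $t=0,\dots,T$, and using that $F$ is bounded below, yields
\[
\sum_{t=0}^{T}\eta_t\big(c_t\,\|\boldsymbol{\mathfrak{d}}_t^{\mathrm{tr}}\|_2^2-Kl\,\varepsilon_t\big)\le F(\boldsymbol{\theta}_0)-\inf_{\boldsymbol{\theta}}F<\infty .
\]
The conclusion follows by contradiction: if $\|\boldsymbol{\mathfrak{d}}_t^{\mathrm{tr}}\|_2\ge\delta>0$ for all large $t$, then since $\varepsilon_t\to0$ the bracket is eventually at least $\tfrac{1}{2}c_t\delta^2>0$, so condition (ii), $\sum_t\eta_t=\infty$, would force the left side to diverge — a contradiction. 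Hence $\liminf_t\|\boldsymbol{\mathfrak{d}}_t^{\mathrm{tr}}\|_2=0$, and $\varepsilon_t\to0$ transfers this to the actual direction, giving $\min_{t\le T}\|\boldsymbol{\mathfrak{d}}_t\|_2\to0$.

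What remains — and what I expect to be the main obstacle — is proving $\varepsilon_t\to0$. I would decompose $\boldsymbol{\mathfrak{d}}_t-\boldsymbol{\mathfrak{d}}_t^{\mathrm{tr}}=\sum_k(\lambda^*_{e,k}-\lambda^*_k)\Tilde{\mathfrak{g}}_{e,k}+\sum_k\lambda^*_k(\Tilde{\mathfrak{g}}_{e,k}-\Tilde{\mathfrak{g}}_k)$; the first group is controlled directly by $\zeta_t=\|\boldsymbol{\lambda}^*-\boldsymbol{\lambda}^*_e\|\to0$ (condition (v)), and the second by the gradient drift. For the latter I would write the pseudo-gradient as $\mathfrak{g}_k=\eta\sum_{j=0}^{e-1}\nabla f_k(\boldsymbol{\theta}_t^{(j)})$, bound the local trajectory drift $\|\boldsymbol{\theta}_t^{(j)}-\boldsymbol{\theta}_t\|_2\lesssim\eta\,l$ via bounded gradients, and invoke $L$-smoothness to get $\|\nabla f_k(\boldsymbol{\theta}_t^{(j)})-\nabla f_k(\boldsymbol{\theta}_t)\|_2\lesssim\eta\,lL$, so the pseudo-gradients are a common rescaling of the true gradients up to an $O(\eta)$ perturbation that vanishes with $\eta\to0$ (condition (iv)). The delicate point is pushing this perturbation through the Gram--Schmidt and minimal-norm maps of \cref{eq:grad_upd,eq:gkdef}: these maps are only locally Lipschitz and become ill-conditioned when the (scaled) gradient family is nearly degenerate, so their stability must be quantified in terms of the smallest singular value — equivalently the minimal mutual angle — of the gradients, which has to be shown bounded away from zero uniformly in $t$. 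This is precisely where the linear-independence assumption of \textit{Phase}~1 is indispensable, and where a genuine edge case surfaces if some $|f_k|^{\gamma}\to0$ (a client whose loss vanishes, blowing up $\|\Tilde{\mathfrak{g}}_k\|_2$); that case would have to be excluded or handled separately.
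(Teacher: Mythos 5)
Your overall strategy is the same as the paper's proof in \cref{sec:Case2}: compare the direction actually used (built from the $e$-epoch pseudo-gradients) with an idealized direction built from the exact gradients at $\boldsymbol{\theta}_t$, bound their gap by a weight term controlled by condition (v) plus a local-drift term controlled by condition (iv), and feed this into the $L$-smoothness descent inequality with $\eta_t\le\frac{1}{2L}$ followed by telescoping (your closing proof-by-contradiction is just a rephrasing of the paper's direct bound \cref{eq:finalth2}). The drift estimate you sketch is also the paper's \cref{lemma:app}, except that the paper gets $\|\mathfrak{g}_{k,e}-\mathfrak{g}_{k}\|\le \eta e l$ from bounded gradients alone; your extra $O(\eta l L)$ smoothness step is unnecessary.

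The genuine gap is the one you flagged yourself: you never prove $\varepsilon_t\to 0$, and your route to it would stall exactly where you predict. Your decomposition $\boldsymbol{\mathfrak{d}}_t-\boldsymbol{\mathfrak{d}}_t^{\mathrm{tr}}=\sum_k(\lambda^*_{e,k}-\lambda^*_k)\Tilde{\mathfrak{g}}_{e,k}+\sum_k\lambda^*_k(\Tilde{\mathfrak{g}}_{e,k}-\Tilde{\mathfrak{g}}_k)$ leaves you with the term $\|\Tilde{\mathfrak{g}}_{e,k}-\Tilde{\mathfrak{g}}_k\|$, i.e., a perturbation bound for the composition of the $|f_k|^\gamma$-scaling and Gram--Schmidt maps of \cref{eq:grad_upd,eq:gkdef}, whose Lipschitz constant degrades with the conditioning of the gradient family; the theorem's hypotheses give no uniform-in-$t$ lower bound on that conditioning, so this cannot be closed as stated. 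The paper sidesteps orthogonalization stability entirely: since $\{\Tilde{\mathfrak{g}}_k\}_{k\in[K]}$ spans the same subspace as $\{\mathfrak{g}_k\}_{k\in[K]}$, both directions are rewritten in the \emph{original} gradient coordinates, $\boldsymbol{\mathfrak{d}}=\boldsymbol{\mathfrak{g}}\boldsymbol{\lambda}^{\prime}$ and $\boldsymbol{\mathfrak{d}}_e=\boldsymbol{\mathfrak{g}}_e\boldsymbol{\lambda}^{\prime}_e$ (\cref{eq:linear11,eq:linear12}); the triangle inequality then splits the gap into a weight term bounded by $\zeta_t l\sqrt{K}$ --- which is killed by hypothesis (v), so nothing about the minimal-norm map needs to be proven --- and a basis term handled directly by \cref{lemma:app}. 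In that formulation no singular-value argument and no special treatment of the $|f_k|^{\gamma}\to 0$ edge case is needed. (To be fair to you, the paper's own coordinate switch in \cref{eq:tri2} is asserted rather than fully justified, so the difficulty you identified is real; but the intended proof collapses it into assumption (v) plus the elementary drift lemma, which is what your proposal is missing.)
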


Before introducing \cref{th:case3}, we first introduce some notations. Denote by $\vartheta$ the Pareto-stationary solution set\footnote{In general, the Pareto-stationary solution of multi-objective minimization problem forms a set with cardinality of infinity \citep{mukai1980algorithms}.} of minimization problem $\arg \min_{\boldsymbol{\theta}} \boldsymbol{f}(\boldsymbol{\theta})$. Then, denote by $\boldsymbol{\theta}_{t}^*$ the projection of $\boldsymbol{\theta}_{t}$ onto the set $\vartheta$; that is, $\boldsymbol{\theta}_{t}^*=\arg \min_{\boldsymbol{\theta} \in \vartheta} \|\boldsymbol{\theta}_{t}- \boldsymbol{\theta}\|_2^2$. 

\begin{theorem}[$e=1$ \& local GD] \label{th:case3}
Assume that $\boldsymbol{f}=\{f_k\}_{k \in [K]}$ are l-Lipschitz continuous and $\sigma$-convex, and that the global step-size $\eta_t$ satisfies the following two conditions: (i) $\lim_{t \rightarrow \infty} \sum_{j=0}^t \eta_j \rightarrow \infty$, and (ii) $\lim_{t \rightarrow \infty} \sum_{j=0}^t \eta^2_j < \infty$. Then  almost surely $\boldsymbol{\theta}_{t} \rightarrow \boldsymbol{\theta}_{t}^*$; that is, 
\begin{align}
 \mathbb{P}\left( \lim_{t \rightarrow \infty}\left(\boldsymbol{\theta}_{t} - \boldsymbol{\theta}_{t}^*\right)=0 \right)=1,
\end{align}
where $\mathbb{P} (E)$ denotes the probability of event $E$. 
\end{theorem}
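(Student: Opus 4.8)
The plan is to establish the almost-sure convergence of $\boldsymbol{\theta}_t$ to the Pareto-stationary set by constructing a Lyapunov-type argument on the squared distance $V_t = \|\boldsymbol{\theta}_t - \boldsymbol{\theta}_t^*\|_2^2$ to the projected Pareto-stationary point, and then invoking a supermartingale convergence theorem (Robbins--Siegmund). First I would expand $\|\boldsymbol{\theta}_{t+1} - \boldsymbol{\theta}_{t+1}^*\|_2^2$ using the update rule \eqref{eq:server_update}, $\boldsymbol{\theta}_{t+1} = \boldsymbol{\theta}_t - \eta_t \boldsymbol{\mathfrak{d}}_t$. Since $\boldsymbol{\theta}_{t+1}^*$ is the projection of $\boldsymbol{\theta}_{t+1}$ onto $\vartheta$, it minimizes distance, so I can bound $\|\boldsymbol{\theta}_{t+1} - \boldsymbol{\theta}_{t+1}^*\|_2^2 \le \|\boldsymbol{\theta}_{t+1} - \boldsymbol{\theta}_t^*\|_2^2$, replacing the moving target by the previous projection. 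Expanding the right-hand side gives
\begin{align}
\|\boldsymbol{\theta}_{t+1} - \boldsymbol{\theta}_t^*\|_2^2 = \|\boldsymbol{\theta}_t - \boldsymbol{\theta}_t^*\|_2^2 - 2\eta_t \, \boldsymbol{\mathfrak{d}}_t \cdot (\boldsymbol{\theta}_t - \boldsymbol{\theta}_t^*) + \eta_t^2 \|\boldsymbol{\mathfrak{d}}_t\|_2^2. \nonumber
\end{align}

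The crux is to control the cross term $\boldsymbol{\mathfrak{d}}_t \cdot (\boldsymbol{\theta}_t - \boldsymbol{\theta}_t^*)$ from below using $\sigma$-convexity. Since $\boldsymbol{\mathfrak{d}}_t = \sum_k \lambda_k^* \Tilde{\mathfrak{g}}_k$ is a convex combination of (scaled) gradients and, by \cref{th:descent}, has positive inner product with every true gradient $\mathfrak{g}_k$, I would express the cross term in terms of the original gradients and invoke $\sigma$-convexity of each $f_k$ in the form $\mathfrak{g}_k \cdot (\boldsymbol{\theta}_t - \boldsymbol{\theta}_t^*) \ge f_k(\boldsymbol{\theta}_t) - f_k(\boldsymbol{\theta}_t^*) + \frac{\sigma}{2}\|\boldsymbol{\theta}_t - \boldsymbol{\theta}_t^*\|_2^2$. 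Because $\boldsymbol{\theta}_t^*$ is Pareto-stationary, on the relevant support the gradient combination vanishes, which lets me conclude that the cross term is bounded below by a strictly positive multiple of $V_t$ (a strong-convexity contraction) plus a nonnegative function-value gap. The second-order term $\eta_t^2 \|\boldsymbol{\mathfrak{d}}_t\|_2^2$ is controlled by the $l$-Lipschitz continuity assumption, which bounds $\|\mathfrak{g}_k\|_2 \le l$ and hence $\|\boldsymbol{\mathfrak{d}}_t\|_2$ uniformly, so this term is summable under condition (ii), $\sum_t \eta_t^2 < \infty$.

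Assembling these pieces yields a recursion of the form $\mathbb{E}[V_{t+1} \mid \mathcal{F}_t] \le (1 - c\eta_t) V_t + C\eta_t^2$ (or with an additive nonnegative gap term absorbed appropriately), to which I would apply the Robbins--Siegmund almost-sure supermartingale convergence theorem: the summability of $\eta_t^2$ guarantees $V_t$ converges almost surely, and the divergence of $\sum_t \eta_t$ (condition (i)) together with the contraction forces the limit to be zero, giving $\boldsymbol{\theta}_t - \boldsymbol{\theta}_t^* \to 0$ almost surely. I expect the main obstacle to be handling the \emph{moving} projection $\boldsymbol{\theta}_t^*$ rigorously: unlike a fixed optimum, the target point changes each iteration, so I must be careful that the projection inequality $\|\boldsymbol{\theta}_{t+1} - \boldsymbol{\theta}_{t+1}^*\| \le \|\boldsymbol{\theta}_{t+1} - \boldsymbol{\theta}_t^*\|$ is applied in the correct direction and that the lower bound on the cross term genuinely holds with respect to $\boldsymbol{\theta}_t^*$ on the current active set of clients. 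A secondary subtlety is reconciling the scaled, orthogonalized direction $\boldsymbol{\mathfrak{d}}_t$ with the original gradients when invoking convexity, since the $\sigma$-convexity bounds are naturally stated for $\mathfrak{g}_k$ rather than $\Tilde{\mathfrak{g}}_k$; I would route this through the directional-derivative identity $\mathfrak{g}_k \cdot \boldsymbol{\mathfrak{d}}_t = \frac{1}{2}\alpha |f_k|^\gamma$ from \cref{eq:gthan0} to connect the two cleanly.
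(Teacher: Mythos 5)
Your proposal matches the paper's proof essentially step for step: the same Lyapunov quantity $\|\boldsymbol{\theta}_t - \boldsymbol{\theta}_t^*\|_2^2$, the same projection inequality replacing the moving target $\boldsymbol{\theta}_{t+1}^*$ by the previous projection $\boldsymbol{\theta}_t^*$, the same re-expression of $\boldsymbol{\mathfrak{d}}_t$ as a combination $\sum_{k}\lambda_k^{\prime}\mathfrak{g}_k$ of the original gradients so that $\sigma$-convexity can be invoked, the same Lipschitz bound $\eta_t^2 l^2/K$ on the quadratic term, and the same resulting recursion $\mathbf{E}[V_{t+1}\mid\boldsymbol{\theta}_t]\leq(1-c\eta_t)V_t+C\eta_t^2$. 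The only difference is in the last step, and it is cosmetic: you apply Robbins--Siegmund directly, whereas the paper unrolls the recursion (bounding $\prod_j(1-\eta_j c)$ by $e^{-c\sum_j\eta_j}$ via AM--GM) to get $\mathbf{E}[V_t]\to 0$ and then cites the same standard supermartingale argument for the almost-sure conclusion.
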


The proofs for \cref{th:case1,th:case2,th:case3} are provided in \cref{sec:Case1,sec:Case2,sec:Case3}, respectively, and we further discuss that the assumptions we made in the theorems are common in the FL literature. 

Note that all the \cref{th:case1,th:case2,th:case3} provide some types of convergence to a Pareto-optimal solution of optimization problem in \cref{eq:minpareto}. Specifically, diminishing $\boldsymbol{\mathfrak{d}}_t$ in \cref{th:case1,th:case2} implies that we are reaching to a Pareto-optimal point \citep{desideri2009multiple}. On the other hand, \cref{th:case3} explicitly provides this convergence guarantee in an almost surely fashion. 


\section{AdaFed features and a comparative analysis with FedAdam}
\subsection{AdaFed features}
Aside from satisfying fairness among the users, we mention some notable features of AdaFed in this section.

The inequality $\boldsymbol{f}(\boldsymbol{\theta}_{t+1}) \leq \boldsymbol{f}(\boldsymbol{\theta}_{t})$ gives motivation to the clients to participate in the FL task as their loss functions would decrease upon participating. In addition, since the common direction is more inclined toward that of the gradients of loss functions with bigger values, a new client could possibly join the FL task in the middle of FL process. In this case, for some consecutive rounds, the loss function for the newly-joined client decreases more compared to those for the other clients.  

The parameter $\gamma$ is a hyper-parameter of AdaFed. In fact, different $\gamma$ values yield variable levels of fairness. Thus, $\gamma$ should be tuned to achieve the desired fairness level\footnote{Most (if not all) of the fair FL methods introduce an extra hyper-parameter to tune in order to establish a trade-off between fairness and accuracy, for instance: (i) $\epsilon$ in FEDMGDA+ \citep{hu2022federated}, (ii) $q$ in Q-FFL \& q-FFL \citep{li2019fair}, and (iii) $t$ in TERM \citep{li2020tilted}. Similarly, AdaFed introduces a new parameter 
to make this trade-off.}. In general, a moderate $\gamma$ value enforces a larger respective directional derivative for the devices with the worst performance (larger loss functions), imposing more uniformity to the training accuracy distribution.

Lastly, we note that AdaFed is orthogonal to the popular FL methods such as Fedprox \citep{li2020federated} and q-FFL \citep{li2019fair}. Therefore, it could be combined with the existing FL algorithms to achieve a better performance, especially with those using personalization \citep{li2021ditto}.

\subsection{Comparison with FedAdam and its variants}
Similarly to AdaFed, there are some FL algorithms in the literature in which the server adaptively updates the global model. Despite this similarity, we note that the purpose of AdaFed is rather different from these algorithms. For instance, let us consider FedAdam \citep{reddi2020adaptive}; this algorithm changes the global update rule of FedAvg from
one-step SGD to one-step adaptive gradient optimization by adopting an Adam optimizer on the server side.
Specifically, after gathering local pseudo-gradients and finding their average as $\mathfrak{g}_t=\frac{1}{|\mathcal{S}_t|} \sum_{k \in \mathcal{S}_t} \mathfrak{g}^k_t$, the server updates the global model by Adam optimizer:
\begin{align}
& \boldsymbol{m}_t=\beta_1 \boldsymbol{m}_{t-1}+ (1-\beta_1) \mathfrak{g}_t, \label{eq:ad1}   \\
& \boldsymbol{v}_t= \beta_2 \boldsymbol{v}_{t-1}+ (1-\beta_2)\mathfrak{g}^2_t, \label{eq:ad2} \\
& \boldsymbol{\theta}_{t+1}= \boldsymbol{\theta}_{t} + \eta \frac{\boldsymbol{m}_t}{\sqrt{\boldsymbol{v}_t}+\boldsymbol{\epsilon}}, \label{eq:ad3}
\end{align}
where $\beta_1$ and $\beta_2$ are two hyper-parameters of the algorithm, and $\boldsymbol{\epsilon}$ is used for numerical stabilization purpose. We note that other variants of this algorithm, such as FedAdagrad and FedYogi \citep{reddi2020adaptive} and FedAMSGrad \citep{tong2020effective}, involve slight changes in the variance
term $\boldsymbol{v}_t$. 

The primary objective of FedAdam (as well as its variants) is to enhance convergence behavior \citep{reddi2020adaptive}; this is achieved by retaining information from previous epochs, which helps prevent significant fluctuations in the server update. In contrast, AdaFed is designed to promote fairness among clients. Such differences could be confirmed by comparing \cref{alg:AdaFed} with \cref{eq:ad1,eq:ad2,eq:ad3}. 

\section{Experiments
} \label{Sec:Experiments}
In this section, we conclude the paper with several experiments to demonstrate the performance of AdaFed, and compare its effectiveness with state-of-the-art alternatives under some performance
metrics. 

\noindent $\bullet$ \textbf{Datasets:} We conduct a thorough set of experiments over \textbf{seven} datasets. The results for  four datasets, namely CIFAR-10 and CIFAR-100 \citep{krizhevsky2009learning}, FEMNIST \citep{caldas2018leaf} and Shakespear \citep{mcmahan2017communication} are reported in this section; and those for Fashion MNIST \citep{xiao2017fashion}, TinyImageNet \citep{le2015tiny}, CINIC-10 \citep{darlow2018cinic} are reported in \cref{sec:furhterexp}. Particularly, in order to demonstrate the effectiveness of AdaFed in different FL scenarios, for each of the datasets reported in this section, we consider two different FL setups.  In addition, we tested the effectiveness of AdaFed over a real-world noisy dataset, namely Clothing1M \citep{xiao2015learning}, in \cref{app:noise}.

\noindent $\bullet$ \textbf{Benchmarks:} We compare the performance of AdaFed against various fair FL algorithms in the literature including: q-FFL \citep{li2019fair}, TERM \citep{li2020tilted}, FedMGDA+ \citep{hu2022federated}, AFL \citep{mohri2019agnostic}, Ditto \citep{li2021ditto}, FedFA \citep{huang2020fairness}, and lastly FedAvg \citep{mcmahan2017communication}. In our experiments, we conduct a grid-search to find the best hyper-parameters for each of the benchmark methods including AdaFed. The details are reported in \cref{sec:expdet}, and here we only report the results obtained from the best hyper-parameters.

\noindent $\bullet$ \textbf{Performance metrics:} Denote by $a_k$ the prediction accuracy on device $k$. We use $\bar{a}=\frac{1}{K}\sum_{k=1}^K a_k$ as the average test accuracy of the underlying FL algorithm, and use $\sigma_a=\sqrt{\frac{1}{K}\sum_{k=1}^K (a_k - \bar{a})^2}$ as the standard deviation of the accuracy across the clients (similarly to \citep{li2019fair, li2021ditto}). Furthermore, we report worst 10\% (5\%) and best 10\% (5\%) accuracies as a common metric in fair FL algorithms \citep{li2020tilted}.

\noindent $\bullet$ \textbf{Notations:} We use \textbf{bold} and \underline{underlined} numbers to denote the best and second best performance, respectively. We use $e$ and $K$ to represent the number of local epochs and that of clients, respectively. 

\subsection{CIFAR-10} \label{sec:CIFAR-10}
CIFAR-10 dataset \citep{krizhevsky2009learning} contains 40K training and 10K test colour images of size $32\times32$, which are labeled for 10 classes. The batch size is equal to 64 for both of the following setups.

\noindent $\bullet$ \textbf{Setup 1:}
Following \citep{wang2021federated}, we sort
the dataset based on their classes, and then split them
into 200 shards. Each client randomly selects two shards without replacement so that each has the same local dataset size. We use a feedforward neural network with 2 hidden layers. 
We fix $e = 1$ and $K=100$. We carry out 2000 rounds of communication, and sample 10\% of the clients in each
round. We run SGD on local datasets with stepsize
$\eta=0.1$. 

\noindent $\bullet$ \textbf{Setup 2:}  We distribute the dataset among the clients deploying Dirichlet allocation \citep{wang2020federated} with $\beta=0.5$. We use ResNet-18 \citep{he2016deep} with
Group Normalization \citep{wu2018group}. We perform 100 communication rounds in each of which all clients participate. We set $e=1$, $K=10$ and $\eta=0.01$.

Results for both setups are reported in \cref{tablecifar10}. Additionally, we depict the average accuracy over the course of training for setup 1 in \cref{sec_acc:curve}.

\begin{table}[t]
\caption{Test accuracy on CIFAR-10. The reported results are averaged over 5 different random seeds.}
\label{tablecifar10}
\begin{center}
\begin{small}
\resizebox{0.9\textwidth}{!}{%
\begin{tabular}{l|cccc|cccc}
\toprule
 & \multicolumn{4}{|c|}{Setup 1} & \multicolumn{4}{|c}{Setup 2}
\\
\midrule \midrule
Algorithm & $\bar{a}$ & $\sigma_a$ & Worst 5\%  & Best 5\% & $\bar{a}$ & $\sigma_a$  & Worst 10\%  & Best 10\% \\
\midrule 
FedAvg   & \underline{46.85} &  3.54 & 19.84 &  69.28 &  63.55 & 5.44 & 53.40 & \underline{72.24}\\
q-FFL & 46.30 & \underline{3.27} & 23.39 & 68.02 & 57.27 & 5.60 & 47.29 & 66.92\\
FedMGDA$+$ & 45.34 &  3.37 &  24.00 & 68.51 &  62.05 & \underline{4.88} & 52.69 & 70.77\\
FedFA & 46.40 &  3.61 &  19.33 & 69.30 &  63.05 & 4.95 & 48.69 & 70.88\\
TERM & \textbf{47.11} &  3.66&  \underline{28.21} & \textbf{69.51} &  \underline{64.15} & 5.90 & \underline{56.21} & 72.20\\
Ditto & 46.31 &  3.44 &  27.14 & 68.44 &  63.49 &  5.70 & 55.99 & 71.34\\
$\text{AdaFed}$ & 46.42 &  \textbf{3.01} &  \textbf{31.12 }& \underline{69.41} & \textbf{64.80} & \textbf{4.50} & \textbf{58.24} & \textbf{72.45}\\
\bottomrule
\end{tabular}}
\end{small}
\end{center}
\vskip -0.1in
\end{table}

\begin{table}[t]
\caption{Test accuracy on CIFAR-100. The reported results are averaged over 5 different random seeds.}
\label{tablecifar100}
\begin{center}
\begin{small}
\resizebox{0.9\textwidth}{!}{%
\begin{tabular}{l|cccc|cccc}
\toprule
 & \multicolumn{4}{|c|}{Setup 1} & \multicolumn{4}{|c}{Setup 2}
\\
\midrule \midrule
Algorithm & $\bar{a}$ & $\sigma_a$ & Worst 10\%  & Best 10\% & $\bar{a}$ & $\sigma_a$  & Worst 10\%  & Best 10\% \\
\midrule
FedAvg   & 30.05 &  4.03 & 25.20 &  40.31 &  \underline{20.15} & 6.40 & 11.20 & 33.80\\
q-FFL & 28.86 & 4.44 & 25.38 & \underline{39.77} & \textbf{20.20} & 6.24 & 11.09 & \underline{34.02}\\
FedMGDA$+$ & 29.12 &  4.17 &  25.67 & 39.71 &  20.15 & 5.41 & 11.12 & 33.92 \\
AFL & 30.28 &  3.68 &  25.33 & 39.45 &  18.92 & \underline{4.90} & \underline{11.29} & 28.60\\
TERM & \underline{30.34} &  \underline{3.51} &  \underline{27.03} & 39.35 &  17.88 & 5.98 & 10.09 & 31.68\\
Ditto & 29.81 &  3.79 &  26.90 & 39.39 &  17.52 & 5.65 & 10.21 & 31.25\\
AdaFed & \textbf{31.42} &  \textbf{3.03} &  \textbf{28.91}& \textbf{40.41} & 20.02 & \textbf{4.45} & \textbf{11.81} & \textbf{34.11}\\
\bottomrule
\end{tabular}}
\end{small}
\end{center}
\vskip -0.1in
\end{table}


\subsection{CIFAR-100} \label{sec:cifar100}
CIFAR-100 \citep{krizhevsky2009learning} contains the same number of samples as those in CIFAR-10, yet it contains 100 classes instead.

The model for both setups is ResNet-18 \citep{he2016deep} with
Group Normalization \citep{wu2018group}, where all clients participate in each round. We also set $e=1$ and $\eta=0.01$. The batch size is equal to 64. The results are reported in \cref{tablecifar100} for both of the following setups:

\noindent$\bullet$ \textbf{Setup 1}:
We set $K=10$ and $\beta=0.5$ for Dirichlet allocation, and use 400 communication rounds.

\noindent$\bullet$ \textbf{Setup 2}:
We set $K=50$ and $\beta=0.05$ for Dirichlet allocation, and use 200 communication rounds.

Additionally, we perform the same experiments with more local epochs, specifically $e={10,20}$ as presented in \cref{app:e}.

\subsection{FEMNIST}
FEMNIST (Federated Extended
MNIST) \cite{caldas2018leaf} is a federated image classification dataset distributed over 3,550 devices by the dataset creators. This dataset has 62 classes containing  
$28\times28$-pixel images of digits (0-9) and English characters (A-Z, a-z) written by different people. 

For implementation, we use a CNN model with 2 convolutional layers followed by 2 fully-connected layers. 
The batch size is 32, and $e=2$ for both of the following setups:

\noindent $\bullet$ \textbf{FEMNIST-original:} We use the setting in \cite{li2021ditto}, and randomly sample $K=500$ devices (from the 3550 ones) and train models using the default data stored in each device.

\noindent $\bullet$ \textbf{FEMNIST-skewed:} Here $K=100$. We first sample 10 lower case characters (‘a’-‘j’) from Extended MNIST (EMNIST), and then randomly assign 5 classes to each of the 100 devices. 

Similarly to \citep{li2019fair}, we use two new fairness metrics for this dataset: (i) the angle between the accuracy distribution and the all-ones vector $\boldsymbol{1}$ denoted by Angle ($^{\circ}$), and (ii) the KL divergence between the normalized
accuracy $a$ and uniform distribution $u$ denoted by  KL ($a \| u$). Results for both setups are reported in \cref{tableFEMNIST}. In addition, we report the distribution of accuracies across clients in \cref{sec:distclients}.

\begin{table}[t]
\caption{Test accuracy on FEMNIST. The reported results are averaged over 5 different random seeds.}
\label{tableFEMNIST}
\begin{center}
\begin{small}
\resizebox{0.9\textwidth}{!}{%
\begin{tabular}{l|cccc|cccc}
\toprule
& \multicolumn{4}{|c|}{FEMNIST-\textbf{original}} & \multicolumn{4}{|c}{FEMNIST-\textbf{skewed}} \\ \midrule \midrule
Algorithm & $\bar{a}$ & $\sigma_a$ & Angle ($^{\circ}$) & KL ($a \| u$)&  $\bar{a}$ & $\sigma_a$ & Angle ($^{\circ}$) & KL ($a \| u$)\\
\midrule

FedAvg   & 80.42 &  11.16 & 10.18 & 0.017 & 79.24 & 22.30 & 12.29 & 0.054\\
q-FFL & 80.91 & 10.62 & 9.71 & 0.016 & 84.65 & 18.56 & 12.01 & 0.038\\
FedMGDA$+$ & 81.00 &  10.41 &  10.04 & 0.016 &  85.41 & 17.36 & 11.63 & 0.032\\
TERM    & 81.08 &  10.32&  9.15 & 0.015 & 84.29 & \underline{13.88} & \underline{11.27} & 0.025\\
AFL   &  82.45  & \underline{9.85}  &  \underline{9.01} & \underline{0.012} & 85.21 & 14.92 & 11.44 & 0.027\\
Ditto     &  \textbf{83.77} &  10.13&  9.34 & 0.014 & \textbf{92.51} &14.32 & 11.45 & \underline{0.022}\\
AdaFed  & \underline{82.26} & \textbf{6.58} & \textbf{8.12} & \textbf{0.009} & \underline{92.21} & \textbf{7.56}& \textbf{9.44} & \textbf{0.011}\\
\bottomrule
\end{tabular}}
\end{small}
\end{center}
\vskip -0.1in
\end{table}

\subsection{Text Data} \label{sec:text}
We use \textit{The Complete Works of William Shakespeare} \citep{mcmahan2017communication} as the dataset, and train an RNN whose input is 80-character sequence to predict the next character. In this dataset, there are about 1,129 speaking
roles. Naturally, each speaking role in the play is treated as a device. Each device stored several text
data and those information will be used to train a RNN on each device. The dataset is available
on the LEAF website \citep{caldas2018leaf}. We use $e=1$, and let all the devices participate in each round. The results are reported in \cref{tableshakes} for the following two setups: 


\noindent$\bullet$ \textbf{Setup 1}:
Following \cite{mcmahan2017communication}, we subsample 31 speaking roles, and assign each role to a client ($K=31$) to complete 500 communication rounds. We use a model with two LSTM layers \citep{hochreiter1997long} and one densely-connected layer. The initial $\eta=0.8$ with decay rate of 0.95.

\noindent$\bullet$ \textbf{Setup 2}: Among the 31 speaking roles, the 20 ones with more than 10000 samples are selected, and assigned to 20 clients ($K=20$). 
We use one LSTM followed by a fully-connected layer. $\eta=2$, and the number of communication rounds is 100.

\begin{table}[t]
\caption{Test accuracy on Shakespeare. The reported results are averaged over 5 different random seeds.}
\label{tableshakes}
\begin{center}
\begin{small}
\resizebox{0.9\textwidth}{!}{%
\begin{tabular}{l|cccc|cccc}
\toprule
& \multicolumn{4}{|c|}{Setup 1} & \multicolumn{4}{|c}{Setup 2} \\
\midrule \midrule

Algorithm & $\bar{a}$ & $\sigma_a$  & Worst 10\% & Best 10\% & $\bar{a}$ & $\sigma_a$  & Worst 10\% & Best 10\%\\
\midrule
FedAvg& 53.21 &  9.25 & 51.01 & 54.41 & 50.48 & 1.24 & 48.20 & 52.10\\
q-FFL& 53.90 & \underline{7.52} & 51.52 & 54.47 & 50.72 & \underline{1.07} & 48.90 & 52.29\\
FedMGDA$+$ & 53.08 &  8.14  &  52.84 & 54.51 &  50.41 & 1.09 & 48.18 & 51.99\\
AFL&54.58 &  8.44 & 52.87 & 55.84& 52.45 & 1.23 & 50.02 & 54.17\\
TERM & 54.16 &  8.21 &  52.09 & 55.15 &  52.17 & 1.11 & 49.14 & 53.62\\
Ditto&  \textbf{60.74} &  8.32 & \underline{53.57} & \textbf{55.92} & \textbf{53.12} & 1.20 & \underline{50.94} & \textbf{55.23}\\
AdaFed& \underline{55.65} & \textbf{6.55} & \textbf{53.79} & \underline{55.86} & \underline{52.89}& \textbf{0.98} & \textbf{51.02} & \underline{54.48}\\
\bottomrule
\end{tabular}}
\end{small}
\end{center}
\vskip -0.1in
\end{table}

\subsection{Analysis of Results}
Based on \cref{tablecifar10,tablecifar100,tableFEMNIST,tableshakes}, we can attain some notable insights. Compared to other benchmark models,
AdaFed leads to significantly more fair solutions. In addition, the average accuracy is not scarified, yet interestingly, for some cases it is improved. We also note that the performance of AdaFed becomes more superior when the level of non-iidness is high. For instance, by referring to FEMNIST-skewed in \cref{tableFEMNIST}, we observe a considerable superiority of AdaFed. Note that the average accuracy of Ditto over FEMNIST is greater than that of AdaFed. This is comprehensible, since Ditto provides a personalized solution to each device, while AdaFed only returns a global parameter $\boldsymbol{\theta}$.

We also observe a similar trend in three other datasets reported in \cref{sec:furhterexp}. We further analyse the effect of hyper-parameter $\gamma$ in AdaFed in \cref{sec:expdet}.

\subsubsection{Percentage of improved clients} \label{sec:perc}
We measure the training loss before and after each communication round for all participating clients and report the percentage of clients whose loss function decreased or remained unchanged, as defined below
\begin{align}
\rho_t=\frac{\sum_{k \in \mathcal{S}_t} \mathbb{I} \{ \boldsymbol{f}_k(\boldsymbol{\theta}_{t+1}) \leq \boldsymbol{f}_k(\boldsymbol{\theta}_{t})\}}{ |\mathcal{S}_t| },    
\end{align}
where $\mathcal{S}_t$ is the participating clients in round $t$, and $\mathbb{I}(\cdot)$ is the indicator function. Then, we plot $\rho_t$ versus communication rounds for different fair FL benchmarks, including AdaFed. The curves for CIFAR-10 and CIFAR-100 datasets are reported in \cref{rho10} and \cref{rho100}, respectively. As seen, both AdaFed and FedMGDA+ consistently outperform other benchmark methods in that fewer clients’ performances get worse after participation. This is a unique feature of these two methods. We further note that after enough number of communication rounds, curves for both AdaFed and FedMGDA+ converge to 100\% (with a bit of fluctuation).  

\begin{figure}[t]
	\centering
	\subfloat[]{\includegraphics[width=0.45\columnwidth]{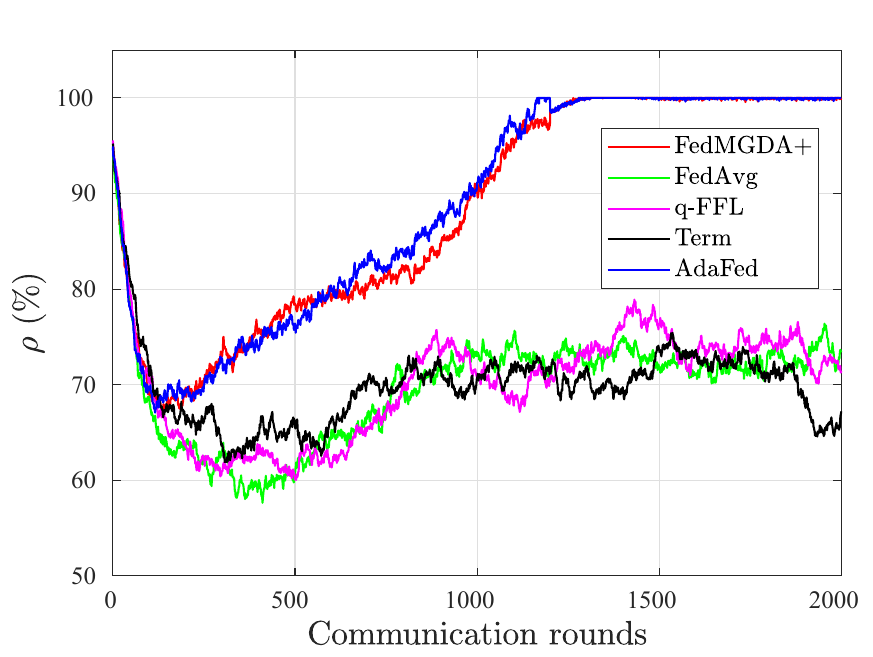}\label{rho10}}
	\subfloat[]{\includegraphics[width=0.45\columnwidth]{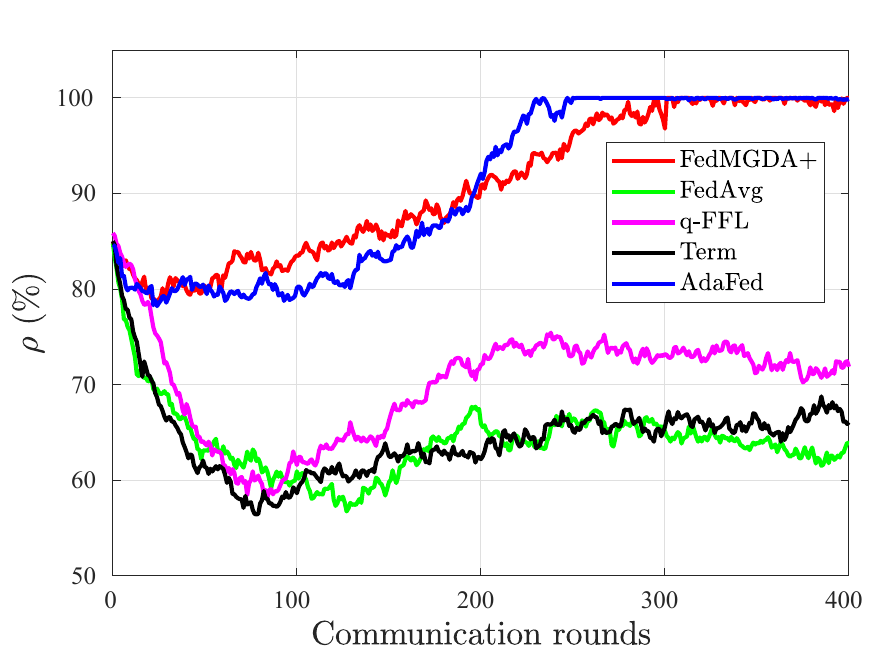}\label{rho100}}
	\vspace{-.15cm}
    \caption{The percentage of improved clients as a function of communication rounds for (a) CIFAR-10 setup one in \cref{sec:CIFAR-10}; and (b) CIFAR-100 setup one in \cref{sec:cifar100}.}
	\label{fig:rho}
 \vspace{-.15cm}
\end{figure}

\begin{figure}[t]
	\centering
	\subfloat[]{\includegraphics[width=0.45\columnwidth]{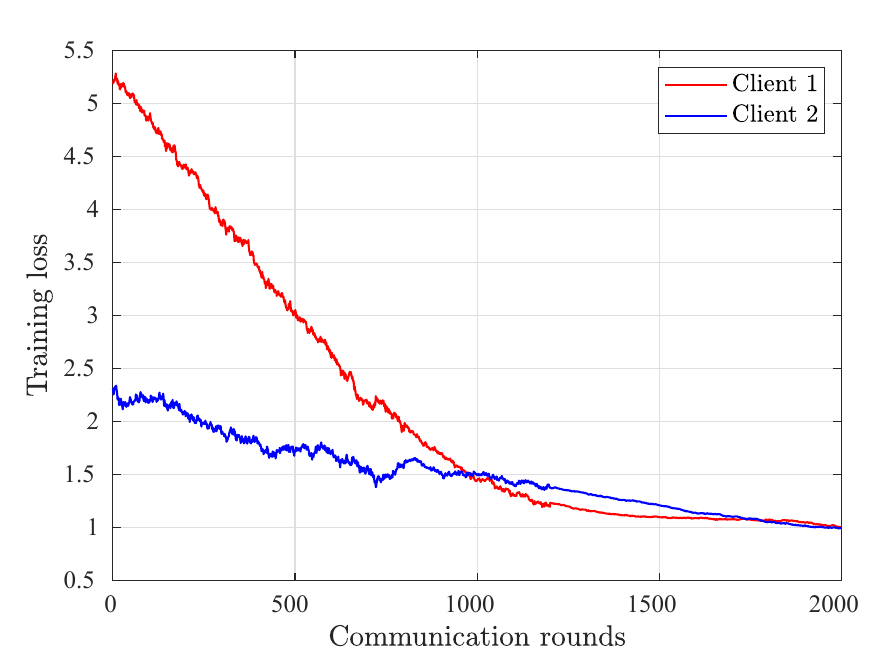}\label{loss10}}
	\subfloat[]{\includegraphics[width=0.45\columnwidth]{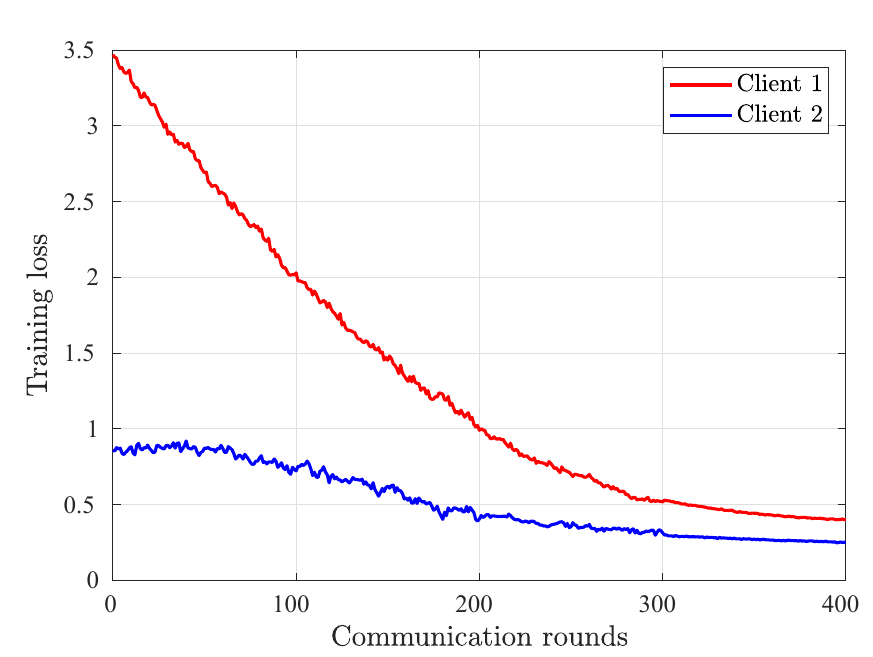}\label{loss100}}
	\vspace{-.15cm}
    \caption{The training loss function for two clients trained in AdaFed framework Vs. the communication rounds for (a) CIFAR-10 setup one in \cref{sec:CIFAR-10}; and (b) CIFAR-100 setup one in \cref{sec:cifar100}.}
	\label{fig:loss}
 \vspace{-.15cm}
\end{figure}

\begin{figure}[t]
	\centering
	\subfloat[]{\includegraphics[width=0.33\columnwidth]{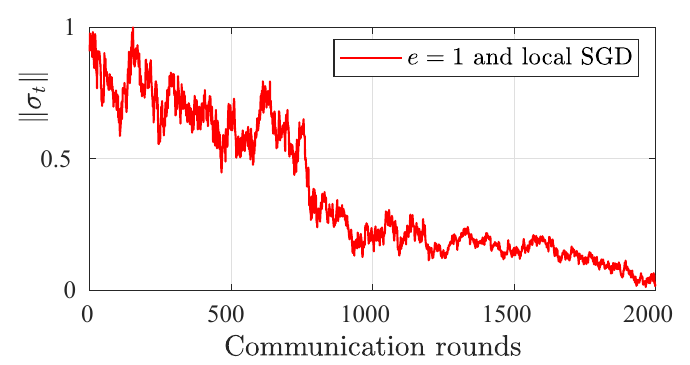}\label{5-3}}
    \subfloat[]{\includegraphics[width=0.33\columnwidth]{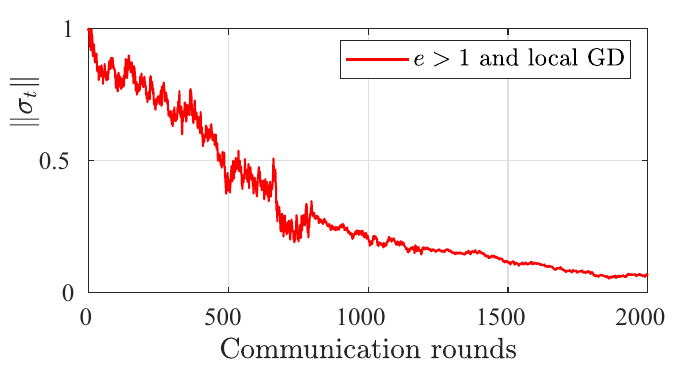}\label{5-4}}
	\subfloat[]{\includegraphics[width=0.33\columnwidth]{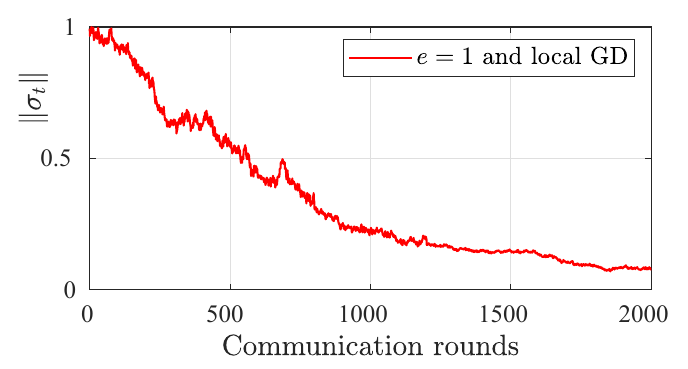}\label{5-5}}
	\vspace{-.15cm}
    \caption{The convergence of $\| \boldsymbol{\mathfrak{d}}_t\|$ as a function of communication rounds for (a) $e=1$ and local SGD, (b) $e>1$ and local GD, and (c) $e=1$ and local GD. The dataset is CIFAR-10.}
	\label{fig:conv}
 \vspace{-.15cm}
\end{figure}

\subsubsection{Rate of decrease in loss function}
In this part, we observe the loss function values for two clients over the course of training to verify \cref{th:descent}. This theorem asserts that the rate of decrease in the loss function is higher for clients with larger initial loss function values. To this end, we select two clients---one with a low initial loss function and one with a high initial loss function---and depict their respective training loss as a function of communication rounds.

The curves are illustrated in \cref{loss10} and \cref{loss100} for CIFAR-10 and CIFAR-100 datasets, respectively. As observed in both curves, the rate of decrease in the loss function of the client with a larger initial loss function is higher. Additionally, close to the end of the training task, the values for the loss function of the clients converge to almost the same value, indicating fairness among the clients. 

\subsubsection{Convergence of $\| \boldsymbol{\mathfrak{d}}_t\|$}
In this part, we aim to observe the behaviour of $\| \boldsymbol{\mathfrak{d}}_t\|$ over the course of training. Particularly, we consider three cases following \cref{th:case1,th:case2,th:case3}, namely (i) $e=1$ \& local SGD, (ii) $e>1$ \& local GD, and (iii) $e=1$ \& local GD. 

For training, we follow the setup in \cref{sec:CIFAR-10}; however, we change $e$ and the local training method---either GD or SGD---to generated the three cases mentioned above. Then, for these three cases, we normalize $\| \boldsymbol{\mathfrak{d}}_t\|$, and depict it versus the communication rounds. 

As observed in \cref{5-3,5-4,5-5}, in all the three cases, $\| \boldsymbol{\mathfrak{d}}_t\|$ tends to zero. Nonetheless, the curve for $e=1$ \& local GD is more smooth.

\section{Conclusion}
In this paper, we proposed a method to enforce fairness in FL task dubbed AdaFed. In AdaFed, the aim is to adaptively tune a common direction along which the server updates the global model. The common direction found by AdaFed enjoys two properties: (i) it is descent for all the local loss functions, and (ii) the loss functions for the clients with worst performance decrease with a higher rate along this direction. These properties were satisfied by using the notion of directional derivative in the multi-objective optimization task. We then derived a closed-form formula for such common direction, and proved that AdaFed converges to a Pareto-stationary point. The effectiveness of AdaFed was demonstrated via thorough experimental results. 
\nocite{langley00}

\bibliography{main}
\bibliographystyle{tmlr}

\newpage
\appendix
\onecolumn
\section{Convergence of AdaFed} \label{appendix1}

In the following, we provide three theorems to analyse the convergence of AdaFed under different scenarios. Specifically, we consider three cases: (i) \cref{th:1} considers $e=1$ and using SGD for local updates, (ii) \cref{th:2} considers an arbitrary value for $e$ and using GD for local updates, and (iii) \cref{th:3} considers $e=1$ and using GD for local updates.

\subsection{Case 1: $e=1$ \& local SGD} \label{sec:Case1}
\noindent\textbf{Notations:}
We use subscript $(\cdot)_s$ to indicate a stochastic value. Using this notation for the values we introduced in the paper, our notations used in the proof of \cref{th:1} are summarized in \cref{table:notations}.
\begin{table}[h]
\caption{Notations used in \cref{th:1} for $e=1$ \& local SGD.}
\label{table:notations}
\vskip 0.15in
\begin{center}
\begin{small}
\resizebox{0.9\textwidth}{!}{%
\begin{tabular}{c|l}
\toprule
Notation & Description\\
\midrule \midrule
$\mathfrak{g}_{k,s} $&  \textbf{Stochastic} gradient vector of client $k$.\\
\midrule
$\boldsymbol{\mathfrak{g}}_s$ &  Matrix of \textbf{Stochastic} gradient vectors $[\mathfrak{g}_{1,s},\dots,\mathfrak{g}_{K,s}]$.\\
\midrule
$\Tilde{\mathfrak{g}}_{k,s}$ & \textbf{Stochastic}  gradient vector of client $k$ after orthogonalization process.\\
\midrule
$\Tilde{\boldsymbol{\mathfrak{g}}}_s$ &  Matrix of orthogonalized \textbf{Stochastic} gradient vectors $[\Tilde{\mathfrak{g}}_{1,s},\dots,\Tilde{\mathfrak{g}}_{K,s}]$.\\
\midrule $\lambda_{k,s}^*$ &  Optimum weights obtained from Equation \eqref{eq:lambdaopt} using \textbf{Stochastic} gradients $\Tilde{\boldsymbol{\mathfrak{g}}}_s$.\\
\midrule
$\boldsymbol{\mathfrak{d}}_{s}$ & Optimum direction obtained using \textbf{Stochastic} $\Tilde{\boldsymbol{\mathfrak{g}}}_s$; that is, $\mathfrak{d}_s=\sum_{k=1}^K {\lambda}_{k,s}^*\Tilde{\mathfrak{g}}_{k,s}$.
\\
\bottomrule
\end{tabular}}
\end{small}
\end{center}
\vskip -0.1in
\end{table}

\begin{theorem} \label{th:1}
Assume that $\boldsymbol{f}=\{f_k\}_{k \in [K]}$ are l-Lipschitz continuous and L-Lipschitz smooth, and that the step-size $\eta_t$ satisfies the following three conditions: (i) $\eta_t \in (0,\frac{1}{2L}]$, (ii)  $\lim_{T \rightarrow \infty} \sum_{t=0}^T \eta_t \rightarrow \infty$ and (iii) $\lim_{T \rightarrow \infty} \sum_{t=0}^T \eta_t \sigma_t < \infty$; where $\sigma^2_t=\bold{E} [\| \Tilde{\boldsymbol{\mathfrak{g}}} \boldsymbol{\lambda}^* - \Tilde{\boldsymbol{\mathfrak{g}}}_s \boldsymbol{\lambda}_s^*\|]^2$ is the variance of stochastic common descent direction. Then 
\begin{align}
 \lim_{T \rightarrow \infty} ~\min_{t=0,\dots,T}\bold{E}[\| \boldsymbol{\mathfrak{d}}_t\|] \rightarrow 0.
\end{align}
\end{theorem}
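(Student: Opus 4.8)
The plan is to adapt the classical stochastic-descent argument to the AdaFed direction, anchored on the exact identity $\mathfrak{g}_k\cdot\boldsymbol{\mathfrak{d}}_t=|f_k|^{\gamma}\|\boldsymbol{\mathfrak{d}}_t\|_2^2$ established in the proof of \cref{th:descent}. Starting from $L$-smoothness of each $f_k$ applied to the stochastic update $\boldsymbol{\theta}_{t+1}=\boldsymbol{\theta}_t-\eta_t\boldsymbol{\mathfrak{d}}_s$, I would first write
\begin{align}
f_k(\boldsymbol{\theta}_{t+1}) \le f_k(\boldsymbol{\theta}_t) - \eta_t\,\mathfrak{g}_k\cdot\boldsymbol{\mathfrak{d}}_s + \tfrac{L\eta_t^2}{2}\|\boldsymbol{\mathfrak{d}}_s\|_2^2 .
\end{align}
The decisive manipulation is to replace the stochastic direction by the deterministic one plus an error, $\mathfrak{g}_k\cdot\boldsymbol{\mathfrak{d}}_s=|f_k|^{\gamma}\|\boldsymbol{\mathfrak{d}}_t\|_2^2+\mathfrak{g}_k\cdot(\boldsymbol{\mathfrak{d}}_s-\boldsymbol{\mathfrak{d}}_t)$, and to control the error by Cauchy--Schwarz together with the $l$-Lipschitz bound $\|\mathfrak{g}_k\|_2\le l$, giving $|\mathfrak{g}_k\cdot(\boldsymbol{\mathfrak{d}}_s-\boldsymbol{\mathfrak{d}}_t)|\le l\,\|\boldsymbol{\mathfrak{d}}_s-\boldsymbol{\mathfrak{d}}_t\|_2$. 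Taking expectations and recalling that $\sigma_t=\bold{E}\|\boldsymbol{\mathfrak{d}}_t-\boldsymbol{\mathfrak{d}}_s\|$ (note $\Tilde{\boldsymbol{\mathfrak{g}}}\boldsymbol{\lambda}^*=\boldsymbol{\mathfrak{d}}_t$ and $\Tilde{\boldsymbol{\mathfrak{g}}}_s\boldsymbol{\lambda}^*_s=\boldsymbol{\mathfrak{d}}_s$) produces the per-round inequality
\begin{align}
\bold{E}[f_k(\boldsymbol{\theta}_{t+1})] \le \bold{E}[f_k(\boldsymbol{\theta}_t)] - \eta_t\,\bold{E}\big[|f_k|^{\gamma}\|\boldsymbol{\mathfrak{d}}_t\|_2^2\big] + l\,\eta_t\sigma_t + \tfrac{L\eta_t^2}{2}\,\bold{E}\|\boldsymbol{\mathfrak{d}}_s\|_2^2 .
\end{align}

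Next I would sum this over $t=0,\dots,T$ (summing over $k$, equivalently tracking $\sum_k f_k$). The loss terms telescope and are bounded below, so the two positive remainder sums must be shown finite. The term $\sum_t \eta_t\sigma_t<\infty$ is exactly hypothesis (iii). For the second-order sum I would use $\eta_t\le\tfrac{1}{2L}$ to write $\tfrac{L\eta_t^2}{2}\le\tfrac{\eta_t}{4}$, bound $\|\boldsymbol{\mathfrak{d}}_s\|_2^2\le 2\|\boldsymbol{\mathfrak{d}}_t\|_2^2+2\|\boldsymbol{\mathfrak{d}}_s-\boldsymbol{\mathfrak{d}}_t\|_2^2$, absorb the $\|\boldsymbol{\mathfrak{d}}_t\|_2^2$ piece into the descent term, and convert the squared error into a first-order error via boundedness of the directions ($\|\boldsymbol{\mathfrak{d}}_s-\boldsymbol{\mathfrak{d}}_t\|_2^2\le 2l\,\|\boldsymbol{\mathfrak{d}}_s-\boldsymbol{\mathfrak{d}}_t\|_2$), so that it too is dominated by $\sum_t\eta_t\sigma_t$. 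This leaves $\sum_t \eta_t\,\bold{E}[\|\boldsymbol{\mathfrak{d}}_t\|_2^2]$ (carrying the $|f_k|^{\gamma}$ weight) finite.

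The conclusion then follows by contradiction. Since $\min_{t\le T}\bold{E}\|\boldsymbol{\mathfrak{d}}_t\|$ is non-increasing in $T$, its limit exists; were it a positive number $\epsilon$, every term would satisfy $\bold{E}\|\boldsymbol{\mathfrak{d}}_t\|\ge\epsilon$, hence $\bold{E}\|\boldsymbol{\mathfrak{d}}_t\|_2^2\ge\epsilon^2$ by Jensen, and by hypothesis (ii) we would get $\sum_t\eta_t\,\bold{E}\|\boldsymbol{\mathfrak{d}}_t\|_2^2\ge\epsilon^2\sum_t\eta_t=\infty$, contradicting the finiteness just proved. Therefore the limit is $0$.

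I expect the main obstacle to be the \emph{bias} of the stochastic direction. Because the orthogonalization in \cref{eq:grad_upd,eq:gkdef} and the min-norm weights in \cref{eq:lambdaopt} are nonlinear functions of the local gradients, $\boldsymbol{\mathfrak{d}}_s$ is \emph{not} an unbiased estimator of $\boldsymbol{\mathfrak{d}}_t$, so the usual SGD device $\bold{E}[\boldsymbol{\mathfrak{d}}_s]=\boldsymbol{\mathfrak{d}}_t$ is unavailable; this is precisely why the theorem controls the full error $\sigma_t=\bold{E}\|\boldsymbol{\mathfrak{d}}_t-\boldsymbol{\mathfrak{d}}_s\|$ directly and imposes the summability condition (iii) rather than a bounded-variance assumption. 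A secondary technical point is the $|f_k|^{\gamma}$ coefficient multiplying $\|\boldsymbol{\mathfrak{d}}_t\|_2^2$ in the descent term, together with the boundedness of the direction norms used above: turning summability of $\sum_t\eta_t\,\bold{E}[|f_k|^{\gamma}\|\boldsymbol{\mathfrak{d}}_t\|_2^2]$ into a statement about $\|\boldsymbol{\mathfrak{d}}_t\|$ alone requires the losses to stay bounded away from zero (or to run the contradiction on the largest-loss client), which must be justified in the heterogeneous regime where not all clients can be fit simultaneously.
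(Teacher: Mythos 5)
Your proposal is correct and follows essentially the same route as the paper's own proof: both apply the $L$-smoothness quadratic bound around the stochastic update, split the stochastic direction into the deterministic min-norm direction plus an error term $\Delta_t = \Tilde{\boldsymbol{\mathfrak{g}}}\boldsymbol{\lambda}^* - \Tilde{\boldsymbol{\mathfrak{g}}}_s\boldsymbol{\lambda}^*_s$ controlled via $l$-Lipschitz continuity by $l\,\|\Delta_t\|$, absorb the quadratic terms using $\eta_t\le\frac{1}{2L}$, telescope, and invoke conditions (ii) and (iii); your closing contradiction argument is just the contrapositive of the paper's direct bound of $\min_{t}\bold{E}[\|\boldsymbol{\mathfrak{d}}_t\|]$ by the telescoped sum divided by $\frac{1}{2}\sum_{t}\eta_t$. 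The one point where you are more careful than the paper is the descent coefficient: you retain the factor $|f_k|^{\gamma}$ from the identity $\mathfrak{g}_k\cdot\boldsymbol{\mathfrak{d}}_t=|f_k|^{\gamma}\|\boldsymbol{\mathfrak{d}}_t\|_2^2$ and flag that it must be bounded away from zero, whereas the paper's proof silently replaces the vector of directional derivatives $\boldsymbol{\mathfrak{g}}^T\boldsymbol{\mathfrak{g}}\boldsymbol{\lambda}^{\prime}$ by $\|\boldsymbol{\mathfrak{g}}\boldsymbol{\lambda}^{\prime}\|^2$, which amounts to assuming $|f_k|^{\gamma}\ge 1$ for all $k$.
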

\begin{proof}
Since orthogonal vectors $\{ \Tilde{\mathfrak{g}_k} \}_{k \in [K]}$ span the same $K$-dimensional space as that spanned by gradient vectors $\{ \mathfrak{g}_k \}_{k \in [K]}$, then 
\begin{align}\label{eq:linear111}
\exists \{ \lambda^{\prime}_k \}_{k \in [K]} ~~ \text{s.t.} ~~ \boldsymbol{\mathfrak{d}}=\sum_{k=1}^K {\lambda}_k^*\Tilde{\mathfrak{g}_k} =\sum_{k=1}^K {\lambda}_k^{\prime}\mathfrak{g}_k=\boldsymbol{\mathfrak{g}}\boldsymbol{\lambda}^{\prime}. 
\end{align}
Similarly, for the stochastic gradients we have
\begin{align}\label{eq:linear222}
\exists \{ \lambda^{\prime}_{k,s} \}_{k \in [K]} ~~ \text{s.t.} ~~ \boldsymbol{\mathfrak{d}}_s=\sum_{k=1}^K {\lambda}_{k,s}^*\Tilde{\mathfrak{g}}_{k,s} =\sum_{k=1}^K {\lambda}_{k,s}^{\prime}\mathfrak{g}_{k,s}=\boldsymbol{\mathfrak{g}}_s\boldsymbol{\lambda}^{\prime}_s. 
\end{align}
Define $\Delta_t=\boldsymbol{\mathfrak{g}}\boldsymbol{\lambda}^{\prime}-\boldsymbol{\mathfrak{g}}_s\boldsymbol{\lambda}^{\prime}_s=\Tilde{\boldsymbol{\mathfrak{g}}} \boldsymbol{\lambda}^* - \Tilde{\boldsymbol{\mathfrak{g}}}_s \boldsymbol{\lambda}_s^*$, where the last equality is due to the definitions in \cref{eq:linear111,eq:linear222}.


We can find an upper bound for $\boldsymbol{f}(\boldsymbol{\theta}_{t+1})$ as follows
\begin{align}
\boldsymbol{f}(\boldsymbol{\theta}_{t+1})&=\boldsymbol{f}(\boldsymbol{\theta}_{t}-\eta_t\mathfrak{d}_t)\\
&=\boldsymbol{f}(\boldsymbol{\theta}_{t}-\eta_t\sum_{k=1}^K {\lambda}_{k,s}^*\Tilde{\mathfrak{g}}_{k,s}) \label{eq:stoc}\\
&=\boldsymbol{f}(\boldsymbol{\theta}_{t}-\eta_t\boldsymbol{\mathfrak{g}}_s\boldsymbol{\lambda}^{\prime}_s) \label{eq:change}\\
&\leq \boldsymbol{f}(\boldsymbol{\theta}_{t})- \eta_t  \boldsymbol{\mathfrak{g}}^T \boldsymbol{\mathfrak{g}}_s^T  \boldsymbol{\lambda}_{s}^{\prime}+\frac{L\eta_t^2}{2} \|\boldsymbol{\mathfrak{g}}_s^T  \boldsymbol{\lambda}_{s}^{\prime} \|^2 \label{eq:quad}\\ 
&\leq \boldsymbol{f}(\boldsymbol{\theta}_{t})- \eta_t  \boldsymbol{\mathfrak{g}}^T \boldsymbol{\mathfrak{g}}^T  \boldsymbol{\lambda}^{\prime}+L\eta_t^2 \|\boldsymbol{\mathfrak{g}}^T  \boldsymbol{\lambda}^{\prime} \|^2+ \eta_t \boldsymbol{\mathfrak{g}}^T \Delta_t +L\eta_t^2 \| \Delta_t\|^2 \\
&\leq \boldsymbol{f}(\boldsymbol{\theta}_{t})- \eta_t (1-L\eta_t)\| \boldsymbol{\mathfrak{g}}^T  \boldsymbol{\lambda}^{\prime}\|^2 +l\eta_t  \| \Delta_t\|+ L \eta_t^2 \| \Delta_t\|^2 \label{eq:lastl},
\end{align}
where \eqref{eq:stoc} uses stochastic gradients in the updating rule of AdaFed, \eqref{eq:change} is obtained from the definition in \eqref{eq:linear222}, \eqref{eq:quad} holds following the quadratic bound for smooth functions $\boldsymbol{f}=\{f_k\}_{k \in [K]}$, and lastly \eqref{eq:lastl} holds considering the Lipschits continuity of $\boldsymbol{f}=\{f_k\}_{k \in [K]}$.

Assuming $\eta_t \in (0,\frac{1}{2L}]$ and taking expectation from both sides, we obtain:
\begin{align}\label{eq:finalth1}
\min_{t=0,\dots,T}\bold{E}[\| \mathfrak{d}_t\|] \leq \frac{\boldsymbol{f}(\boldsymbol{\theta}_{0})-\bold{E}[\boldsymbol{f}(\boldsymbol{\theta}_{T+1})]+\sum_{t=0}^T \eta_t (l\sigma_t+L\eta_t\sigma_t^2)}{\frac{1}{2}\sum_{t=0}^T \eta_t}.
\end{align}
Using the assumptions (i) $\lim_{T \rightarrow \infty} \sum_{j=0}^T \eta_t \rightarrow \infty$, and (ii) $\lim_{T \rightarrow \infty} \sum_{t=0}^T \eta_t \sigma_t < \infty$, the theorem will be concluded. 
Note that \textit{vanishing} $\boldsymbol{\mathfrak{d}}_t$ implies reaching to a Pareto-stationary point of original MoM problem. Yet, the convergence rate is different in different scenarios as we see in the following theorems. 
\end{proof}

\subsubsection{Discussing the assumptions}
\noindent $\bullet$ \textbf{The assumptions over the local loss functions:}
The two assumptions l-Lipschitz continuous and L-Lipschitz smooth over the local loss functions are two standard assumptions in FL papers providing some sorts of convergence guarantee \citep{li2019convergence}. 

\noindent $\bullet$ \textbf{The assumptions over the step-size:} The three assumptions we enforced over the step-size could be easily satisfied as explained in the sequel. For instance, one can pick $\eta_t=\kappa_1\frac{1}{t}$ for some constant $\kappa_1$ such that $\eta_t \in (0,\frac{1}{2L}]$ is satisfied. Then even if $\sigma_t$  has a extremely loose upper-bound, let's say $\sigma_t < \frac{\kappa_2}{t^{\epsilon}}$ for a small $\epsilon \in \mathbb{R}_+$ and a constant number $\kappa_2$, then all the three assumptions over the step-size in the theorem will be satisfied. Note that the convergence rate of AdaFed depends on how fast $\sigma_t$ diminishes which depends on how heterogeneous the users are.

\subsection{Case 2: $e>1$ \& local GD}\label{sec:Case2}
The notations used in this subsection are elaborated in \cref{table:notations2}.   
\begin{table}[h] \caption{Notations used in the \cref{th:2} for $e>1$ and local GD.} \label{table:notations2} \vskip 0.15in \begin{center} \begin{small} 
\resizebox{0.9\textwidth}{!}{%
\begin{tabular}{c|l} \toprule Notation & Description\\ \midrule \midrule $\boldsymbol{\theta}_{{(k,e)}^t}$ & Updated weight for client $k$ after $e$ local epochs at the $t$-th round of FL. \\ 
\midrule $\mathfrak{g}_{k,e}$ & $\mathfrak{g}_{k,e}=\boldsymbol{\theta}_{t}-\boldsymbol{\theta}_{{(k,e)}^t}$; that is, the update vector of client $k$ after $e$ local epochs.\\

\midrule $\boldsymbol{\mathfrak{g}}_e$ &  Matrix of update vectors $[\mathfrak{g}_{1,e},\dots,\mathfrak{g}_{K,e}]$.\\ 

\midrule $\Tilde{\mathfrak{g}}_{k,e}$ & Update vector of client $k$ after orthogonalization process.\\ 
\midrule $\Tilde{\boldsymbol{\mathfrak{g}}}_e$ &  Matrix of orthogonalized update vectors $[\Tilde{\mathfrak{g}}_{1,e},\dots,\Tilde{\mathfrak{g}}_{K,e}]$.\\ 
\midrule 
$\lambda_{k,e}^*$ &  Optimum weights obtained from Equation \eqref{eq:lambdaopt} using $\Tilde{\boldsymbol{\mathfrak{g}}}_e$.\\ \midrule $\boldsymbol{\mathfrak{d}}_{e}$ & Optimum direction obtained using $\Tilde{\boldsymbol{\mathfrak{g}}}_e$; that is, $\mathfrak{d}_e=\sum_{k=1}^K {\lambda}_{k,e}^*\Tilde{\mathfrak{g}}_{k,e}$. \\ \bottomrule \end{tabular}} 
\end{small} 
\end{center} 
\vskip -0.1in 
\end{table}

\begin{theorem} \label{th:2}
Assume that $\boldsymbol{f}=\{f_k\}_{k \in [K]}$ are l-Lipschitz continuous and L-Lipschitz smooth. Denote by $\eta_t$ and $\eta$ the global and local learning rate, respectively. Also, define $\zeta_t=\| \boldsymbol{\lambda}^{*} -\boldsymbol{\lambda}^{*}_e\|$, where $\boldsymbol{\lambda}^{*}_e$ is the optimum weights obtained from pseudo-gradients after $e$ local epochs. Then, 
\begin{align}
 \lim_{T \rightarrow \infty} ~\min_{t=0,\dots,T}\| \boldsymbol{\mathfrak{d}}_t\| \rightarrow 0,
\end{align}

if the following conditions are satisfied: (i) $\eta_t \in (0,\frac{1}{2L}]$, (ii)  $\lim_{T \rightarrow \infty} \sum_{t=0}^T \eta_t \rightarrow \infty$ and (iii) $\lim_{t \rightarrow \infty} \eta_t \rightarrow 0$, (iv) $\lim_{t \rightarrow \infty} \eta \rightarrow 0$, and (v) $\lim_{t \rightarrow \infty} \zeta_t \rightarrow 0$. 
\end{theorem}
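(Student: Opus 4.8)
The plan is to mirror the proof strategy of Theorem~\ref{th:1} (Case 1), since the structure of Case 2 is essentially the deterministic analogue where the role of the stochastic noise $\Delta_t$ is replaced by the discrepancy introduced by running $e>1$ local GD epochs. First I would observe that, exactly as in \cref{eq:linear111}, the AdaFed direction built from the true single-step gradients admits a representation $\boldsymbol{\mathfrak{d}}=\Tilde{\boldsymbol{\mathfrak{g}}}\boldsymbol{\lambda}^*=\boldsymbol{\mathfrak{g}}\boldsymbol{\lambda}^{\prime}$, while the direction actually used by the server is $\boldsymbol{\mathfrak{d}}_e=\Tilde{\boldsymbol{\mathfrak{g}}}_e\boldsymbol{\lambda}^*_e=\boldsymbol{\mathfrak{g}}_e\boldsymbol{\lambda}^{\prime}_e$, built from the pseudo-gradients $\mathfrak{g}_{k,e}=\boldsymbol{\theta}_t-\boldsymbol{\theta}_{(k,e)^t}$. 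I would then define a deterministic error term $\Delta_t=\boldsymbol{\mathfrak{g}}\boldsymbol{\lambda}^{\prime}-\boldsymbol{\mathfrak{g}}_e\boldsymbol{\lambda}^{\prime}_e$ measuring the gap between the idealized common direction and the pseudo-gradient one.

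Next I would invoke the $L$-smoothness quadratic bound on each $f_k$ applied to the update $\boldsymbol{\theta}_{t+1}=\boldsymbol{\theta}_t-\eta_t\boldsymbol{\mathfrak{d}}_e$, producing an inequality of the form
\begin{align}
\boldsymbol{f}(\boldsymbol{\theta}_{t+1})\leq \boldsymbol{f}(\boldsymbol{\theta}_{t})-\eta_t(1-L\eta_t)\|\boldsymbol{\mathfrak{g}}^T\boldsymbol{\lambda}^{\prime}\|^2 + l\eta_t\|\Delta_t\| + L\eta_t^2\|\Delta_t\|^2, \nonumber
\end{align}
exactly paralleling \cref{eq:lastl}, where the $l$-Lipschitz continuity controls the cross term $\boldsymbol{\mathfrak{g}}^T\Delta_t$. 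The condition $\eta_t\in(0,\frac{1}{2L}]$ then makes the coefficient $1-L\eta_t$ at least $\frac12$, so summing over $t=0,\dots,T$ and telescoping yields a bound on $\min_t\|\boldsymbol{\mathfrak{d}}_t\|$ of the same form as \cref{eq:finalth1}, with $\sigma_t$ replaced by $\|\Delta_t\|$ in the numerator and $\frac12\sum_t\eta_t$ in the denominator.

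The crux, and the main obstacle, is showing that the error term $\|\Delta_t\|$ vanishes so that the numerator stays finite (or grows slower than the diverging denominator $\sum_t\eta_t$). This is precisely where assumptions (iii)--(v) enter. I would argue that $\|\Delta_t\|$ is controlled by two effects: first, the deviation of the $e$-epoch pseudo-gradient from the true gradient, which by $L$-smoothness is $O(\eta\,(e-1))$ and hence vanishes as $\eta\to 0$ (condition (iv)); and second, the deviation $\zeta_t=\|\boldsymbol{\lambda}^*-\boldsymbol{\lambda}^*_e\|$ of the optimal weights, which vanishes by condition (v). Combined with $\eta_t\to 0$ (condition (iii)), one shows $\eta_t\|\Delta_t\|\to 0$ fast enough that the numerator is controlled relative to the denominator. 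Making the dependence of $\|\Delta_t\|$ on $\eta$, $e$, and $\zeta_t$ rigorous—particularly bounding how the Gram–Schmidt orthogonalization and the norm-reciprocal weights in \cref{eq:lambdaopt} amplify the raw pseudo-gradient perturbation—is the delicate part; I would lean on the $l$-Lipschitz continuity to keep gradient norms bounded away from blow-up, and on continuity of the closed-form map $\boldsymbol{\mathfrak{g}}\mapsto\boldsymbol{\lambda}^*$ to absorb the remaining terms. Finally, the two summability/divergence conditions (i)--(ii) close the argument exactly as in Case~1, forcing $\min_{t\le T}\|\boldsymbol{\mathfrak{d}}_t\|\to 0$.
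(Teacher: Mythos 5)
Your proposal matches the paper's proof essentially step for step: the paper defines the same error term $\boldsymbol{\mathfrak{d}}-\boldsymbol{\mathfrak{d}}_e$, applies the same smoothness-plus-telescoping bound with the $\frac{1}{2}\sum_t\eta_t$ denominator, and resolves the ``delicate part'' you flag with exactly the two ingredients you name --- a lemma bounding the pseudo-gradient deviation by $\eta e l$ (proved via the bounded-gradient consequence of $l$-Lipschitz continuity rather than smoothness), and a triangle-inequality split yielding $\|\boldsymbol{\mathfrak{d}}-\boldsymbol{\mathfrak{d}}_e\| \leq \zeta_t l\sqrt{K} + \eta e l$. Conditions (iii)--(v) then make this error vanish so the weighted-average numerator is dominated by the diverging $\sum_t \eta_t$, exactly as you conclude.
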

\begin{proof}

As discussed in the proof of \cref{th:1}, we can write  
\begin{align}\label{eq:linear11}
&\exists \{ \lambda^{\prime}_k \}_{k \in [K]} ~~ \text{s.t.} ~~ \mathfrak{d}=\sum_{k=1}^K {\lambda}_k^*\Tilde{\mathfrak{g}_k} =\sum_{k=1}^K {\lambda}_k^{\prime}\mathfrak{g}_k=\boldsymbol{\mathfrak{g}}\boldsymbol{\lambda}^{\prime}, \\ \label{eq:linear12}
&\exists \{ \lambda^{\prime}_{k,e} \}_{k \in [K]} ~~ \text{s.t.} ~~ \mathfrak{d}_e=\sum_{k=1}^K {\lambda}_{k,e}^*\Tilde{\mathfrak{g}}_{k,e} =\sum_{k=1}^K {\lambda}_{k,e}^{\prime}\mathfrak{g}_{k,e}=\boldsymbol{\mathfrak{g}}_e\boldsymbol{\lambda}^{\prime}_e. 
\end{align}

 To prove \cref{th:2}, we first introduce a lemma whose proof is provided in \cref{appendix2}. 
\begin{lemma}\label{lemma:app}
 Using the notations used in \cref{th:2}, and assumming that $\boldsymbol{f}=\{f_k\}_{k \in [K]}$ are L-Lipschitz smooth, we have $\|\mathfrak{g}_{k,e} -\mathfrak{g}_{k}\|\leq \eta el$.   
\end{lemma}
Using \cref{lemma:app}, we have 
\begin{align}
 \|\boldsymbol{\mathfrak{d}} -\boldsymbol{\mathfrak{d}_e}\|= \|\Tilde{\boldsymbol{\mathfrak{g}}} \boldsymbol{\lambda}^{*}-\Tilde{\boldsymbol{\mathfrak{g}}}_e \boldsymbol{\lambda}_e^{*}\| \label{eq:tri}& \leq   \|\Tilde{\boldsymbol{\mathfrak{g}}} \boldsymbol{\lambda}^{*} -\Tilde{\boldsymbol{\mathfrak{g}}} \boldsymbol{\lambda}^{*}_e \|+  \|\Tilde{\boldsymbol{\mathfrak{g}}} \boldsymbol{\lambda}^{*}_e-\Tilde{\boldsymbol{\mathfrak{g}}}_e\boldsymbol{\lambda}^{*}_e\| \\ \label{eq:tri2}
 & \leq \| \Tilde{\boldsymbol{\mathfrak{g}}}\| \| \boldsymbol{\lambda}^{*}-\boldsymbol{\lambda}^{*}_e\|+  \|\boldsymbol{\mathfrak{g}} \boldsymbol{\lambda}^{\prime}_e-\boldsymbol{\mathfrak{g}}_e\boldsymbol{\lambda}^{\prime}_e\|\\ \label{eq:tri3}
 &\leq \| \Tilde{\boldsymbol{\mathfrak{g}}}\| \| \boldsymbol{\lambda}^{*}-\boldsymbol{\lambda}^{*}_e\|+ \eta el \\
 & \leq \zeta_t l\sqrt{K} + \eta el,
\end{align}
where \cref{eq:tri} follows triangular inequality, \cref{eq:tri2} is obtained from \cref{eq:linear11,eq:linear12}, and \cref{eq:tri3} uses \cref{lemma:app}.

As seen, if $\lim_{t \rightarrow \infty} \eta \rightarrow 0$, and $\lim_{t \rightarrow \infty} \zeta_t \rightarrow 0$, then $\|\boldsymbol{\mathfrak{d}} -\boldsymbol{\mathfrak{d}_e}\| \rightarrow 0$. Now, by writing the quadratic upper bound we obtain:
\begin{align}
\boldsymbol{f}(\boldsymbol{\theta}_{t+1}) &\leq \boldsymbol{f}(\boldsymbol{\theta}_{t})- \eta_t  \boldsymbol{\mathfrak{g}}^T \boldsymbol{\mathfrak{g}}_e^T  \boldsymbol{\lambda}_{e}^{\prime}+\frac{L\eta_t^2}{2} \|\boldsymbol{\mathfrak{g}}_e^T  \boldsymbol{\lambda}_{e}^{\prime} \|^2 \\ 
&\leq \boldsymbol{f}(\boldsymbol{\theta}_{t})- \eta_t  \boldsymbol{\mathfrak{g}}^T \boldsymbol{\mathfrak{g}}^T  \boldsymbol{\lambda}^{\prime}+L\eta_t^2 \|\boldsymbol{\mathfrak{g}}^T  \boldsymbol{\lambda}^{\prime} \|^2+ \eta_t \boldsymbol{\mathfrak{g}}^T (\boldsymbol{\mathfrak{d}} -\boldsymbol{\mathfrak{d}_e}) +L\eta_t^2 \| \boldsymbol{\mathfrak{d}} -\boldsymbol{\mathfrak{d}_e}\|^2 \\
&\leq \boldsymbol{f}(\boldsymbol{\theta}_{t})- \eta_t (1-L\eta_t)\| \boldsymbol{\mathfrak{g}}^T  \boldsymbol{\lambda}^{\prime}\|^2 +l\eta_t  \| \boldsymbol{\mathfrak{d}} -\boldsymbol{\mathfrak{d}_e}\|+ L \eta_t^2 \| \boldsymbol{\mathfrak{d}} -\boldsymbol{\mathfrak{d}_e}\|^2.    
\end{align}
Noting that $\eta_t \in (0,\frac{1}{2L}]$, and utilizing telescoping yields 
\begin{align}\label{eq:finalth2}
\min_{t=0,\dots,T}\| \mathfrak{d}_t\| \leq \frac{\boldsymbol{f}(\boldsymbol{\theta}_{0})-\boldsymbol{f}(\boldsymbol{\theta}_{T+1})+\sum_{t=0}^T \eta_t (l\| \boldsymbol{\mathfrak{d}} -\boldsymbol{\mathfrak{d}_e}\|+L\eta_t\| \boldsymbol{\mathfrak{d}} -\boldsymbol{\mathfrak{d}_e}\|^2)}{\frac{1}{2}\sum_{t=0}^T \eta_t}.
\end{align}
Using $\|\boldsymbol{\mathfrak{d}} -\boldsymbol{\mathfrak{d}_e}\| \rightarrow 0$, the \cref{th:2} is concluded. 
\end{proof}

\subsection{Case 3: $e=1$ \& local GD}\label{sec:Case3}
Denote by $\vartheta$ the Pareto-stationary solution set of minimization problem $\arg \min_{\boldsymbol{\theta}} \boldsymbol{f}(\boldsymbol{\theta})$. Then, define $\boldsymbol{\theta}_{t}^*=\arg \min_{\boldsymbol{\theta} \in \vartheta} \|\boldsymbol{\theta}_{t}- \boldsymbol{\theta}\|_2^2$.

\begin{theorem} \label{th:3}
Assume that $\boldsymbol{f}=\{f_k\}_{k \in [K]}$ are l-Lipschitz continuous and $\sigma$-convex, and that the step-size $\eta_t$ satisfies the following two conditions: (i) $\lim_{t \rightarrow \infty} \sum_{j=0}^t \eta_j \rightarrow \infty$ and (ii) $\lim_{t \rightarrow \infty} \sum_{j=0}^t \eta^2_j < \infty$. Then  almost surely $\boldsymbol{\theta}_{t} \rightarrow \boldsymbol{\theta}_{t}^*$; that is, 
\begin{align}
 \mathbb{P}\left( \lim_{t \rightarrow \infty}\left(\boldsymbol{\theta}_{t} - \boldsymbol{\theta}_{t}^*\right)=0 \right)=1,
\end{align}
where $\mathbb{P} (E)$ denotes the probability of event $E$. 
\end{theorem}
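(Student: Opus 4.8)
The plan is to track the squared distance of the iterate to the Pareto-stationary set, $V_t = \|\boldsymbol{\theta}_t - \boldsymbol{\theta}_t^*\|_2^2$, and show it vanishes almost surely via the Robbins--Siegmund supermartingale convergence theorem. Here the randomness over which the ``almost surely'' statement is taken originates from the server's random client selection $\mathcal{S}_t$; let $\mathcal{F}_t$ denote the filtration it generates. Since with $e=1$ and local GD the gradients are exact, this sampling is the only source of stochasticity. Because $\boldsymbol{\theta}_{t+1}^*$ is by definition the point of $\vartheta$ closest to $\boldsymbol{\theta}_{t+1}$, I would first use the projection inequality $V_{t+1} \le \|\boldsymbol{\theta}_{t+1} - \boldsymbol{\theta}_t^*\|_2^2$, which replaces the moving reference point by the fixed $\boldsymbol{\theta}_t^*$ within a single step (the Pareto set of $\sigma$-convex objectives is closed, so the projection is well defined).

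Next I would substitute the update rule $\boldsymbol{\theta}_{t+1} = \boldsymbol{\theta}_t - \eta_t \boldsymbol{\mathfrak{d}}_t$ and expand the square to obtain
\begin{align}
V_{t+1} \le V_t - 2\eta_t \langle \boldsymbol{\mathfrak{d}}_t, \boldsymbol{\theta}_t - \boldsymbol{\theta}_t^*\rangle + \eta_t^2 \|\boldsymbol{\mathfrak{d}}_t\|_2^2. \nonumber
\end{align}
The last term is controlled by the $l$-Lipschitz continuity assumption: every $\mathfrak{g}_k$ has norm at most $l$ and $\boldsymbol{\mathfrak{d}}_t$ is the minimal-norm element of a convex hull built from the gradients, so $\|\boldsymbol{\mathfrak{d}}_t\|_2^2$ is bounded by a constant and $\eta_t^2\|\boldsymbol{\mathfrak{d}}_t\|_2^2$ is summable by condition (ii). For the cross term I would invoke $\sigma$-convexity: writing $\boldsymbol{\theta}_t^*$ as a Pareto-stationary point with nonnegative multipliers $\{\mu_k\}$ satisfying $\sum_k \mu_k \mathfrak{g}_k(\boldsymbol{\theta}_t^*)=0$, strong convexity of each $f_k$ yields
\begin{align}
\langle \mathfrak{g}_k, \boldsymbol{\theta}_t - \boldsymbol{\theta}_t^*\rangle \ge f_k(\boldsymbol{\theta}_t) - f_k(\boldsymbol{\theta}_t^*) + \tfrac{\sigma}{2} V_t , \nonumber
\end{align}
which I would combine with the positivity of the directional derivatives $\mathfrak{g}_k \cdot \boldsymbol{\mathfrak{d}}_t > 0$ from \cref{th:descent} to lower-bound $\langle \boldsymbol{\mathfrak{d}}_t, \boldsymbol{\theta}_t - \boldsymbol{\theta}_t^*\rangle$ by a nonnegative quantity proportional to $V_t$, up to a term that vanishes with $\|\boldsymbol{\mathfrak{d}}_t\|$. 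Taking conditional expectation given $\mathcal{F}_t$ then produces a Robbins--Siegmund recursion $\mathbf{E}[V_{t+1}\mid \mathcal{F}_t] \le V_t - \beta_t + \gamma_t$ with $\gamma_t \ge 0$ summable and $\beta_t \ge 0$; this forces $V_t$ to converge almost surely and $\sum_t \beta_t < \infty$, and combining the latter with $\sum_t \eta_t = \infty$ (condition (i)) pins the almost-sure limit of $V_t$ at $0$.

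The main obstacle I anticipate is precisely the cross-term lower bound. Unlike MGDA, $\boldsymbol{\mathfrak{d}}_t$ is the minimal-norm element of the convex hull of the \emph{scaled, orthogonalized} gradients $\{\Tilde{\mathfrak{g}}_k\}$, so when re-expanded in the original basis as $\boldsymbol{\mathfrak{d}}_t = \sum_k \lambda_k' \mathfrak{g}_k$ (cf.\ \cref{eq:linear111}) the coefficients $\lambda_k'$ need not be nonnegative; one therefore cannot naively pass the per-client strong-convexity inequalities through a convex combination to bound $\langle \boldsymbol{\mathfrak{d}}_t, \boldsymbol{\theta}_t - \boldsymbol{\theta}_t^*\rangle$ from below. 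Reconciling the guaranteed-positive directional derivatives of \cref{th:descent} with this possibly signed representation, so as to certify $\langle \boldsymbol{\mathfrak{d}}_t, \boldsymbol{\theta}_t - \boldsymbol{\theta}_t^*\rangle \gtrsim V_t$, is the delicate step, and is where the $\sigma$-convexity assumption (stronger than the mere $L$-smoothness used in \cref{th:1,th:2}) is essential.
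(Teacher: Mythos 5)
Your proposal follows essentially the same route as the paper's own proof of \cref{th:3} in \cref{sec:Case3}: the projection inequality $\|\boldsymbol{\theta}_{t+1}-\boldsymbol{\theta}^*_{t+1}\|_2^2 \le \|\boldsymbol{\theta}_{t+1}-\boldsymbol{\theta}^*_{t}\|_2^2$ followed by the three-term expansion of the square; the bound $\eta_t^2\|\boldsymbol{\mathfrak{d}}_t\|_2^2 = \eta_t^2 \bigl(\sum_{k=1}^K \|\Tilde{\mathfrak{g}_k}\|_2^{-2}\bigr)^{-1} \le \eta_t^2 l^2/K$ obtained from \cref{eq:dnorm}, which is summable by condition (ii); and a strong-convexity lower bound on the cross term $(\boldsymbol{\theta}_t-\boldsymbol{\theta}^*_t)\cdot\boldsymbol{\mathfrak{d}}_t$ feeding a contraction recursion, \cref{eq:recursive}. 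The only stylistic difference is the endgame: you would apply Robbins--Siegmund to the conditional recursion directly, while the paper takes full expectations, unrolls the recursion in \cref{eq:terms}, kills the product term via the arithmetic--geometric-mean bound \cref{eq:etosth} together with condition (i), kills the accumulated error term via condition (ii), and only then cites a ``standard supermartingale'' argument for the almost-sure statement. These two endgames are interchangeable.

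The substantive issue is the obstacle you flagged, and you should know that the paper does not overcome it --- it commits exactly the step you warned against. In the paper's proof, $\boldsymbol{\mathfrak{d}}_t$ is re-expanded as $\sum_k \lambda'_k\mathfrak{g}_k$ via \cref{eq:linear}, and the per-client strong-convexity inequalities are then summed against the $\lambda'_k$ as if they were nonnegative weights summing to one; as you correctly observe, the change of basis from the scaled, orthogonalized gradients can produce signed coefficients, so this is unjustified. The paper then lower-bounds $\sum_k \lambda'_k\left(f_k(\boldsymbol{\theta}_t)-f_k(\boldsymbol{\theta}^*_t)\right)$ by $\tfrac{1}{2}\lambda'_{\alpha}M\|\boldsymbol{\theta}_t-\boldsymbol{\theta}^*_t\|_2^2$ using a constant $M$ and an index $\alpha$ that are never defined (note also that $f_k(\boldsymbol{\theta}_t)-f_k(\boldsymbol{\theta}^*_t)$ need not be nonnegative, since $\boldsymbol{\theta}^*_t$ is a Pareto-stationary point, not a minimizer of each individual $f_k$), and finally it simply \emph{assumes} $\mathbf{E}[\lambda'_{\alpha}M+\sigma \mid \boldsymbol{\theta}_t] \ge c > 0$ to obtain the contraction. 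In other words, the ``delicate step'' you anticipated is resolved in the paper by assumption rather than by argument; your sketch via Pareto-stationarity multipliers for $\boldsymbol{\theta}_t^*$ and the positive directional derivatives of \cref{th:descent} is no less complete than what the paper provides, but neither your proposal nor the paper actually certifies the needed bound $(\boldsymbol{\theta}_t-\boldsymbol{\theta}^*_t)\cdot\boldsymbol{\mathfrak{d}}_t \gtrsim \|\boldsymbol{\theta}_t-\boldsymbol{\theta}^*_t\|_2^2$. A smaller mismatch: you locate the randomness in client sampling, whereas the paper assumes full participation ``without loss of generality,'' leaving the probability space underlying its expectations and almost-sure claim unspecified.
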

\begin{proof}
The proof is inspired from \cite{mercier2018stochastic}. Without loss of generality, we assume that all users participate in all rounds.

Based on the definition of $\boldsymbol{\theta}_{t}^*$ we can say
\begin{align}
\|\boldsymbol{\theta}_{t+1}-\boldsymbol{\theta}^*_{t+1} \|_2^2 \leq \|\boldsymbol{\theta}_{t+1}-\boldsymbol{\theta}^*_{t} \|_2^2 &=\|\boldsymbol{\theta}_{t}- \eta_t\mathfrak{d}_t-\boldsymbol{\theta}^*_{t} \|_2^2  
\\ \label{eq:3terms}
&=\|\boldsymbol{\theta}_{t}-\boldsymbol{\theta}^*_{t} \|_2^2 - 2\eta_t (\boldsymbol{\theta}_{t}-\boldsymbol{\theta}^*_{t}) \cdot \mathfrak{d}_t+ \eta_t^2  \|\mathfrak{d}_t \|_2^2.
\end{align}
To bound the third term in \cref{eq:3terms}, we note that from \cref{eq:dnorm}, we have: 
\begin{align}
\eta_t^2\|\mathfrak{d}_t \|_2^2=\frac{\eta_t^2}{\sum_{k=1}^{K} \frac{1}{\|\Tilde{\mathfrak{g}_k}\|_2^2}}  \leq \frac{\eta_t^2 l^2}{K} . 
\end{align}
To bound the second term, first note that since orthogonal vectors $\{ \Tilde{\mathfrak{g}_k} \}_{k \in [K]}$ span the same $K$-dimensional space as that spanned by gradient vectors $\{ \mathfrak{g}_k \}_{k \in [K]}$, then 
\begin{align}\label{eq:linear}
\exists \{ \lambda^{\prime}_k \}_{k \in [K]} ~~ \text{s.t.} ~~ \mathfrak{d}=\sum_{k=1}^K {\lambda}_k^*\Tilde{\mathfrak{g}_k} =\sum_{k=1}^K {\lambda}_k^{\prime}\mathfrak{g}_k. 
\end{align}
Using \cref{eq:linear} and the $\sigma$-convexity of $\{f_k\}_{k \in [K]}$ we obtain
\begin{align}
(\boldsymbol{\theta}_{t}-\boldsymbol{\theta}^*_{t}) \cdot \mathfrak{d}_t &=  (\boldsymbol{\theta}_{t}-\boldsymbol{\theta}^*_{t}) \cdot \sum_{k=1}^K {\lambda}_k^*\Tilde{\mathfrak{g}_k}\\
&= (\boldsymbol{\theta}_{t}-\boldsymbol{\theta}^*_{t}) \cdot \sum_{k=1}^K {\lambda}_k^\prime \mathfrak{g}_k\\
& \geq  \sum_{k=1}^K {\lambda}_k^\prime \left( f_k(\boldsymbol{\theta}_{t})-f_k(\boldsymbol{\theta}^*_{t})\right)+\sigma\frac{\| \boldsymbol{\theta}_{t}-\boldsymbol{\theta}^*_{t}\|_2^2}{2} \\
& \geq \frac{{\lambda}_{\alpha}^\prime M}{2}
\| \boldsymbol{\theta}_{t}-\boldsymbol{\theta}^*_{t}\|_2^2+\sigma\frac{\| \boldsymbol{\theta}_{t}-\boldsymbol{\theta}^*_{t}\|_2^2}{2} \\
&= \frac{{\lambda}_{\alpha}^\prime M+\sigma}{2}\| \boldsymbol{\theta}_{t}-\boldsymbol{\theta}^*_{t}\|_2^2.
\end{align}
Now, we return back to \cref{eq:3terms} and find the conditional expectation w.r.t. $\boldsymbol{\theta}_{t}$ as follows
\begin{align}
\bold{E} [\| \boldsymbol{\theta}_{t+1}-\boldsymbol{\theta}^*_{t+1} \|_2^2 ~| ~\boldsymbol{\theta}_{t}] \leq (1-\eta_t \bold{E} [ {\lambda}_{\alpha}^\prime M+\sigma|\boldsymbol{\theta}_{t}]) \| \boldsymbol{\theta}_{t}-\boldsymbol{\theta}^*_{t}\|_2^2 +  \frac{\eta_t^2 l^2}{K}.
\end{align}
Assume that $\bold{E} [ {\lambda}_{\alpha}^\prime M+\sigma|\boldsymbol{\theta}_{t}] \geq c$, taking another expectation we obtain:
\begin{align} \label{eq:recursive}
\bold{E} [\| \boldsymbol{\theta}_{t+1}-\boldsymbol{\theta}^*_{t+1} \|_2^2] \leq (1-\eta_t c)    \bold{E} [\| \boldsymbol{\theta}_{t}-\boldsymbol{\theta}^*_{t} \|_2^2]+\frac{\eta_t^2 l^2}{K},
\end{align}
which is a recursive expression. By solving \cref{eq:recursive} we obtain
\begin{align} \label{eq:terms}
 \bold{E} [\| \boldsymbol{\theta}_{t+1}-\boldsymbol{\theta}^*_{t+1} \|_2^2] \leq \underbrace{\prod_{j=0}^t(1-\eta_j c)    \bold{E} [\| \boldsymbol{\theta}_{0}-\boldsymbol{\theta}^*_{0} \|_2^2]}_\text{First term}+ \underbrace{\sum_{m=1}^t \frac{\prod_{j=1}^t(1-\eta_j c)\eta_m^2l^2}{K \prod_{j=1}^m(1-\eta_j c)}}_\text{Second term}.  
\end{align}
It is observed that if the limit of both First term and Second term in \cref{eq:terms} go to zero, then $\bold{E} [\| \boldsymbol{\theta}_{t+1}-\boldsymbol{\theta}^*_{t+1} \|_2^2] \rightarrow 0$. 
For the First term, from the arithmetic-geometric mean inequality we have
\begin{align}
\lim_{t \rightarrow \infty}\prod_{j=0}^t(1-\eta_j c)  \leq \lim_{t \rightarrow \infty}\left( \frac{\sum_{j=0}^t (1-\eta_j c)}{t}\right)^t &= \lim_{t \rightarrow \infty}\left( 1- c\frac{\sum_{j=0}^t \eta_j}{t}  \right)^t  \\ \label{eq:etosth}
&=\lim_{t \rightarrow \infty} e^{-c\sum_{j=0}^t \eta_j}.
\end{align}
From \cref{eq:etosth} it is seen that if $\lim_{t \rightarrow \infty} \sum_{j=0}^t \eta_j \rightarrow \infty$, then the First term is also converges to zero as $t \rightarrow \infty$.

On the other hand, consider the Second term in \cref{eq:terms}. Obviously, if $\lim_{t \rightarrow \infty} \sum_{j=0}^t \eta^2_j  < \infty$, then the Second term converges to zero as $t \rightarrow \infty$.

Hence, if (i) $\lim_{t \rightarrow \infty} \sum_{j=0}^t \eta_j \rightarrow \infty$ and (ii) $\lim_{t \rightarrow \infty} \sum_{j=0}^t \eta^2_j < \infty$, then $\bold{E} [\| \boldsymbol{\theta}_{t+1}-\boldsymbol{\theta}^*_{t+1} \|_2^2] \rightarrow 0$. Consequently, based on standard supermartingale \citep{mercier2018stochastic}, we have
\begin{align}
 \mathbb{P}\left( \lim_{t \rightarrow \infty}\left(\boldsymbol{\theta}_{t} - \boldsymbol{\theta}_{t}^*\right)=0 \right)=1.
\end{align}
\end{proof}

\section{Proof of \cref{lemma:app}} \label{appendix2}
\begin{proof}
\begin{align}
\mathfrak{g}_{k,e}=\boldsymbol{\theta}_{t}-\boldsymbol{\theta}_{{(k,e)}^t} &=(  \boldsymbol{\theta}_{t}-\boldsymbol{\theta}_{{(k,1)}^t}) + (\boldsymbol{\theta}_{{(k,1)}^t}-\boldsymbol{\theta}_{{(k,2)}^t}) + \dots + (\boldsymbol{\theta}_{{(k,e-1)}^t}-\boldsymbol{\theta}_{{(k,e)}^t}) \\
&= \mathfrak{g}_{k}(\boldsymbol{\theta}_{t})+ \eta \mathfrak{g}_{k,1}+ \dots+  \eta \mathfrak{g}_{k,e-1}.
\end{align}
Hence, 
\begin{align}
 \|\mathfrak{g}_{k,e} -\mathfrak{g}_{k} \|   = \|\eta \sum_{j=1}^e \mathfrak{g}_{k,j} \| &\leq \eta \sum_{j=1}^e \|\mathfrak{g}_{k,j} \| \leq \eta el.
\end{align}
\end{proof}

\section{More about fairness in FL} \label{app:fairness}
\subsection{Sources of unfairness in federated learning}
Unfairness in FL can arise from various sources and is a concern that needs to be addressed in FL systems. Here are some of the key reasons for unfairness in FL:

1. \textbf{Non-Representative Data Distribution}: Unfairness can occur when the distribution of data across participating devices or clients is non-representative of the overall population. Some devices may have more or less relevant data, leading to biased model updates.

2. \textbf{Data Bias}: If the data collected or used by different clients is inherently biased due to the data collection process, it can lead to unfairness. For example, if certain demographic groups are underrepresented in the training data of some clients, the federated model may not perform well for those groups.

3.\textbf{ Heterogeneous Data Sources}: Federated learning often involves data from a diverse set of sources, including different device types, locations, or user demographics. Variability in data sources can introduce unfairness as the models may not generalize equally well across all sources.

4. \textbf{Varying Data Quality}: Data quality can vary among clients, leading to unfairness. Some clients may have noisy or less reliable data, while others may have high-quality data, affecting the model's performance.

5. \textbf{Data Sampling}: The way data is sampled and used for local updates can introduce unfairness. If some clients have imbalanced or non-representative data sampling strategies, it can lead to biased model updates.

6. \textbf{Aggregation Bias}: The learned model may exhibit a bias towards devices with larger amounts of data or, if devices are weighted equally, it may favor more commonly occurring devices.

\subsection{Fairness in conventional ML Vs. FL}
The concept of fairness is often used to address social biases or performance disparities among different individuals or groups in the machine learning (ML) literature \citep{barocas2017fairness}. However, in the context of FL, the notion of fairness differs slightly from traditional ML. In FL, fairness primarily pertains to the consistency of performance across various clients. In fact, the difference in the notion of fairness between traditional ML and FL arises from the distinct contexts and challenges of these two settings:

\textbf{1. Centralized vs. decentralized data distribution:}

\begin{itemize}
    \item In traditional ML, data is typically centralized, and fairness is often defined in terms of mitigating biases or disparities within a single, homogeneous dataset. Fairness is evaluated based on how the model treats different individuals or groups within that dataset.
    \item In FL, data is distributed across multiple decentralized clients or devices. Each client may have its own unique data distribution, and fairness considerations extend to addressing disparities across these clients, ensuring that the federated model provides uniform and equitable performance for all clients.
\end{itemize}

\textbf{2. Client autonomy and data heterogeneity:}

\begin{itemize}
    \item In FL, clients are autonomous and may have different data sources, labeling processes, and data collection practices. Fairness in this context involves adapting to the heterogeneity and diversity among clients while still achieving equitable outcomes.
    \item Traditional ML operates under a centralized, unified data schema and is not inherently designed to handle data heterogeneity across sources.
\end{itemize}

We should note that in certain cases where devices can be naturally clustered into groups with specific attributes, the definition of fairness in FL can be seen as a relaxed version of that in ML, i.e., we optimize for similar but not necessarily identical performance across devices \citep{li2019fair}. 

Nevertheless, despite the differences mentioned above, to maintain consistency with the terminology used in the FL literature and the papers we have cited in the main body of this work, we will continue to use the term ``fairness" to denote the uniformity of performance across different devices.

\section{Additional three datasets} \label{sec:furhterexp}
In this section, we evaluate the performance of AdaFed against some benchmarks over some other datasets, namely Fashion MNIST, CINIC-10, and TinyImageNet whose respective results are reported in \cref{appfashion,appcinic,apptiny}.

\subsection{Fashion MNIST} \label{appfashion}
Fashion MNIST \citep{xiao2017fashion} is an extension of MNIST dataset \citep{lecun1998gradient} with images resized to $32\times32$ pixels. 

We use a fully-connected neural network
with 2 hidden layers, and use the same setting as that used in \cite{li2019fair} for our experiments. We set $e = 1$ and use full batchsize, and use $\eta=0.1$. Then, we conduct 300 rounds of communications. For the benchmarks, we use the same as those we used for CIFAR-10 experiments. The results are reported in \cref{tableFashion2}. 

By observing the three different classes reported in \cref{tableFashion2}, we observe that the fairness level attained in AdaFed is not limited to a dominate class.

\begin{table}[h]
\caption{Test accuracy on Fashion
MNIST. The reported results are averaged over 5 different random seeds.}
\label{tableFashion2}
\vskip 0.15in
\begin{center}
\begin{small}
\begin{sc}
\resizebox{0.65\textwidth}{!}{%
\begin{tabular}{l|ccccr}
\toprule
Algorithm & $\bar{a}$ & $\sigma_a$ & shirt  & pullover & T-shirt\\
\midrule \midrule
FedAvg   & \textbf{80.42} &  3.39 & 64.26 &  \textbf{87.00} & \underline{89.90} \\
q-FFL & 78.53 & \underline{2.27} & 71.29 & 81.46 & 82.86\\
FedMGDA+ & 79.29 &  2.53 &  \underline{72.46} & 79.74 & 85.66\\
FedFA & \underline{80.22} & 3.41 & 63.71 & \underline{86.87} & \textbf{89.94} \\
$\text{AdaFed}$ & 79.14 &  \textbf{2.12} &  \textbf{72.49} & 79.81 & 86.99\\
\bottomrule
\end{tabular}}
\end{sc}
\end{small}
\end{center}
\vskip -0.1in
\end{table}

\subsection{CINIC-10} \label{appcinic}
CINIC-10 \citep{darlow2018cinic} has 4.5 times as many images as those in CIFAR-10 dataset (270,000 sample images in total). In fact, it is obtained from ImageNet and CIFAR-10 datasets. 
As a result, this dataset fits FL scenarios since the constituent elements of CINIC-10 are not drawn from the same distribution. Furthermore, we add more non-iidness to the dataset by distributing the data among the clients using Dirichlet allocation with $\beta=0.5$.

For the model, we use ResNet-18 with group normalization, and set $\eta=0.01$. There are 200 communication rounds in which all the clients participate with $e=1$. Also, $K=50$. Results are reported in \cref{table:cinic}. 

\begin{table}[h]
\caption{Test accuracy on CINIC-10. The reported results are averaged over 5 different random seeds.}
\label{table:cinic}
\vskip 0.15in
\begin{center}
\begin{small}
\begin{sc}
\resizebox{0.6\textwidth}{!}{%
\begin{tabular}{l|cccc}
\toprule
Algorithm & $\bar{a}$ & $\sigma_a$ & Worst 10\% & Best 10\%\\
\midrule \midrule
q-FFL & \textbf{86.57} & \underline{14.91} & \underline{57.70} & 100.00\\
Ditto &86.31 & 15.14 & 56.91 & 100.00 \\
AFL & \underline{86.49} & 15.12 & 57.62 & 100.00\\
TERM & 86.40 & 15.10 & 57.30 & 100.00\\
AdaFed & 86.34 & \textbf{14.85} & \textbf{57.88} & 99.99\\
\bottomrule
\end{tabular}}
\end{sc}
\end{small}
\end{center}
\vskip -0.1in
\end{table}

\subsection{TinyImageNet} \label{apptiny}
Tiny-ImageNet \citep{le2015tiny} is a subset of ImageNet with 100k samples of 200 classes. 
We distribute the dataset among $K=20$ clients using Dirichlet allocation with $\beta=0.05$

We use ResNet-18 with group normalization, and set $\eta=0.02$. There are 400 communication rounds in which all the clients participate with $e=1$. The results are reported in \cref{table:tiny}. 

\begin{table}[h]
\caption{Test accuracy on TinyImageNet. The reported results are averaged over 5 different random seeds.}
\label{table:tiny}
\vskip 0.15in
\begin{center}
\begin{small}
\begin{sc}
\resizebox{0.6\textwidth}{!}{%
\begin{tabular}{l|cccc}
\toprule
Algorithm & $\bar{a}$ & $\sigma_a$ & Worst 10\% & Best 10\%\\
\midrule \midrule
q-FFL & \textbf{18.90} & 3.20 & \underline{13.12} & \textbf{23.72}\\
AFL & 16.55 & \textbf{2.38} & 12.40 & 20.25\\
TERM & 16.41 & 2.77 & 11.52 & 21.02\\
FedMGDA+ & 14.00 & 2.71 & 9.88 & 19.21 \\
AdaFed & \underline{18.05} & \underline{2.35} & \textbf{13.24} & \underline{23.08}\\
\bottomrule
\end{tabular}}
\end{sc}
\end{small}
\end{center}
\vskip -0.1in
\end{table}

\section{Experiments details, tuning hyper-parameters} \label{sec:expdet}
For the benchmark methods and also AdaFed, we used grid-search to find the best hyper-parameters for the underlying algorithms. The parameters we tested for each method are as follows:

\noindent $\bullet$ \textbf{AdaFed:} $\gamma \in \{0, 0.01, 0.1, 1, 5, 10\}$.

\noindent $\bullet$ \textbf{q-FFL:} $q \in \{0, 0.001, 0.01, 0.1, 1, 2, 5, 10\}$.  

\noindent $\bullet$ \textbf{TERM:} $t \in \{0.1, 0.5, 1, 2, 5\}$. 

\noindent $\bullet$ \textbf{AFL:} $\eta_t \in \{0.01, 0.05, 0.1, 0.5, 1\}$.

\noindent $\bullet$ \textbf{Ditto:} $ \lambda \in  \{0.01, 0.05, 0.1, 0.5, 1, 2, 5\}$. 

\noindent $\bullet$ \textbf{FedMGDA+:} $ \epsilon \in  \{0.01, 0.05, 0.1, 0.5, 1\}$. 

\noindent $\bullet$ \textbf{FedFA:} $(\alpha,\beta)=\{(0.5,0.5)\}$, $(\gamma_s,\gamma_c)=\{(0.5,0.9)\}$. 

To have a better understanding about how the parameter $\gamma$ in AdaFed affects the performance of the FL task, we report the results for different values of $\gamma$ in AdaFed in this section.

\subsection{CIFAR-10}
The best hyper-parameters for the benchmark methods are: $q=10$ for q-FFL, $\epsilon=0.5$ for FedMGDA+, and $(\alpha,\beta)=\{(0.5,0.5)\}$, $(\gamma_s,\gamma_c)=\{(0.5,0.9)\}$ for FedFA. The detailed results for different $\gamma$ in AdaFed are reported in \cref{apptablecifar10}. We used $\gamma=5$ as the best point for \cref{tablecifar10}. 

\begin{table}[h]
\caption{Tuning $\gamma$ in AdaFed over CIFAR-10. Reported results are averaged over 5 different random seeds.}
\label{apptablecifar10}
\vskip 0.15in
\begin{center}
\begin{small}
\begin{sc}
\resizebox{\textwidth}{!}{%
\begin{tabular}{l|cccc|cccc}
\toprule
 & \multicolumn{4}{|c|}{Setup 1} & \multicolumn{4}{|c}{Setup 2}
\\
\midrule \midrule
Algorithm & $\bar{a}$ & $\sigma_a$ & Worst 5\%  & Best 5\% & $\bar{a}$ & $\sigma_a$  & Worst 10\%  & Best 10\% \\
\midrule
$\text{AdaFed}_{\gamma=0}$ & 45.44 &  3.43 &  20.18 & 68.04 & 59.88 & 4.89 & 48.12 & 70.62\\
$\text{AdaFed}_{\gamma=0.01}$ & 45.77 & 3.36 &  23.55 & 68.07 & 60.39 & 4.81 & 49.43 & 70.63 \\
$\text{AdaFed}_{\gamma=0.1}$ & 46.01 &  3.18 &  27.12 & 68.12 & 60.98 & 4.70 & 50.91 & 70.70\\
$\text{AdaFed}_{\gamma=1}$ & 46.55 &  3.18 &  27.75 & 68.20 & 63.24 & 4.54 & 54.55 & 71.12\\
$\text{AdaFed}_{\gamma=5}$ & 46.42 &  3.01 &  31.12 & 67.73 & 64.80 & 4.50 & 58.24 & 72.45\\
$\text{AdaFed}_{\gamma=10}$ & 46.00 &  2.88 &  35.21 & 67.35 & 63.25 & 4.66 & 51.74 & 71.25\\
\bottomrule
\end{tabular}}
\end{sc}
\end{small}
\end{center}
\vskip -0.1in
\end{table}

\subsection{CIFAR-100}
The best hyper-parameters for the benchmark methods are: $q=0.1$ for q-FFL, $t =0.5$ in TERM, and $\eta_t=0.5$ in AFL. In addition, the detailed results for different $\gamma$ in AdaFed are reported in \cref{tablecifar1002}. We used $\gamma=1$ as the best point for \cref{tablecifar100}. 

\begin{table}[h]
\caption{Tuning $\gamma$ in AdaFed over CIFAR-100. Reported results are averaged over 5 different random seeds.}
\label{tablecifar1002}
\vskip 0.15in
\begin{center}
\begin{small}
\resizebox{\textwidth}{!}{%
\begin{tabular}{l|cccc|cccc}
\toprule
 & \multicolumn{4}{|c|}{Setup 1} & \multicolumn{4}{|c}{Setup 2}
\\
\midrule \midrule
Algorithm & $\bar{a}$ & $\sigma_a$ & Worst 10\%  & Best 10\% & $\bar{a}$ & $\sigma_a$  & Worst 10\%  & Best 10\% \\
\midrule
$\text{AdaFed}_{\gamma=0}$ & 29.41 &4.45 & 24.41& 39.21 &17.05&6.71&10.04&27.41\\
$\text{AdaFed}_{\gamma=0.01}$ & 30.12 &4.05 & 25.23& 39.41& 17.77&6.09&10.43&28.42\\
$\text{AdaFed}_{\gamma=0.1}$ & 31.05 &3.52 & 26.13& 40.12& 19.51&4.95&10.89&32.10\\
$\text{AdaFed}_{\gamma=1}$ & 31.42 &3.03 & 28.91& 40.41& 20.02&4.45&11.81&34.11\\
$\text{AdaFed}_{\gamma=5}$ & 31.23 &2.95 & 28.12& 40.20& 19.79&4.31&11.86&33.67\\
$\text{AdaFed}_{\gamma=10}$ & 31.34 &2.91 & 28.52& 40.15&19.61&4.56&11.42&32.91\\
\bottomrule
\end{tabular}}
\end{small}
\end{center}
\vskip -0.1in
\end{table}

\subsection{Fashion MNIST}
The best hyper-parameters for the benchmark methods are: $\epsilon=0.5$ for FedMGDA+, $q=0.1$ for q-FFL, $(\alpha,\beta)=\{(0.5,0.5)\}$, $(\gamma_s,\gamma_c)=\{(0.5,0.9)\}$ for FedFA. The detailed results for different $\gamma$ in AdaFed are reported in \cref{apptableFashion}. We used $\gamma=1$ as the best point for \cref{tableFashion2}. 
\begin{table}[h]
\caption{
Tuning $\gamma$ in AdaFed over Fashion
MNIST. Reported results are averaged over 5 different seeds.}
\label{apptableFashion}
\vskip 0.15in
\begin{center}
\begin{small}
\begin{sc}
\resizebox{0.65\textwidth}{!}{%
\begin{tabular}{l|ccccr}
\toprule
Algorithm & $\bar{a}$ & $\sigma_a$ & shirt  & pullover & T-shirt\\
\midrule \midrule
$\text{AdaFed}_{\gamma=0}$ & 78.84 &  2.55 &  71.77 & 78.34 & 84.12\\
$\text{AdaFed}_{\gamma=0.01}$ & 78.88 &  2.41 &  71.73 & 78.66 & 85.62\\
$\text{AdaFed}_{\gamma=0.1}$ & 79.24 &  2.30 &  72.46 & 79.14 & 85.66\\
$\text{AdaFed}_{\gamma=1}$ & 79.14 &  2.12 &  72.33 & 79.81 & 83.99\\
$\text{AdaFed}_{\gamma=5}$ & 79.04 &  2.09 &  71.55 & 78.37 & 85.41\\
$\text{AdaFed}_{\gamma=10}$ & 78.91 &  1.96 &  71.43 & 78.04 & 85.82\\
\bottomrule
\end{tabular}}
\end{sc}
\end{small}
\end{center}
\vskip -0.1in
\end{table}

\subsection{FEMNIST}
The best hyper-parameters for the benchmark methods are: $\lambda=0.1$ for Ditto, $q=0.1$ for q-FFL, $t=0.5$ for TERM, $\eta_t=0.5$ for AFL. Also, the detailed results for different $\gamma$ in AdaFed are reported in \cref{tableFEMNIST}. We used $\gamma=1$ as the best point for \cref{tableFEMNIST2}. 
\begin{table}[h]
\caption{Tuning $\gamma$ in AdaFed over FEMNIST. Reported results are averaged over 5 different seeds.}
\label{tableFEMNIST2}
\vskip 0.15in
\begin{center}
\begin{small}
\resizebox{\textwidth}{!}{%
\begin{tabular}{l|cccc|cccc}
\toprule
& \multicolumn{4}{|c|}{FEMNIST-\textbf{original}} & \multicolumn{4}{|c}{FEMNIST-\textbf{skewed}} \\ \midrule \midrule
Algorithm & $\bar{a}$ & $\sigma_a$ & Angle ($^{\circ}$) & KL ($a \| u$)&  $\bar{a}$ & $\sigma_a$ & Angle ($^{\circ}$) & KL ($a \| u$)\\
\midrule
$\text{AdaFed}_{\gamma=0}$  & 81.32 & 13.59& 10.85 & 0.019 & 84.39 & 13.54& 11.32 & 0.024\\
$\text{AdaFed}_{\gamma=0.01}$& 82.67 & 12.03 & 10.68& 0.018 & 87.66 & 12.02& 10.91 & 0.019\\
$\text{AdaFed}_{\gamma=0.1}$ & 81.60 & 8.72 & 9.23 & 0.011&88.62 & 10.59& 10.75 & 0.017\\
$\text{AdaFed}_{\gamma=1}$  & 82.26 & 6.58 & 8.12 & 0.009 &92.21 & 7.56& 9.44 & 0.011\\
$\text{AdaFed}_{\gamma=5}$  & 80.10 & 5.16 & 7.29 & 0.007 & 90.12 & 5.82& 7.31 & 0.009\\
$\text{AdaFed}_{\gamma=10}$  & 80.05 & 3.03 & 6.44 & 0.007 & 84.38 & 4.49& 6.99 & 0.008\\
\bottomrule
\end{tabular}}
\end{small}
\end{center}
\vskip -0.1in
\end{table}

\subsection{Shakespeare}
The best hyper-parameters for the benchmark methods are: $q=0.1$ for q-FFL, $\lambda=0.1$ for Ditto, and $\eta_t=0.5$ for AFL. Furthermore, the results obtained for different $\gamma$ values in AdaFed are reported in \cref{tableshakes2}. We used $\gamma=0.1$ as the best point for \cref{tableshakes}. 

\begin{table}[h]
\caption{Tuning $\gamma$ in AdaFed over Shakespeare. Reported results are averaged over 5 different seeds.}
\label{tableshakes2}
\vskip 0.15in
\begin{center}
\begin{small}
\resizebox{\textwidth}{!}{%
\begin{tabular}{l|cccc|cccc}
\toprule
& \multicolumn{4}{|c|}{Setup 1} & \multicolumn{4}{|c}{Setup 2} \\
\midrule \midrule

Algorithm & $\bar{a}$ & $\sigma_a$  & Worst 10\% & Best 10\% & $\bar{a}$ & $\sigma_a$  & Worst 10\% & Best 10\%\\
\midrule
$\text{AdaFed}_{\gamma=0}$ & 48.40 & 13.5 & 44.12 & 51.29 & 48.80 & 1.58 & 46.23 & 51.12\\
$\text{AdaFed}_{\gamma=0.01}$& 53.55 & 8.01 & 50.96 & 54.46& 51.67& 1.10 & 48.71 & 53.16\\
$\text{AdaFed}_{\gamma=0.1}$& 55.65 & 6.55 & 53.79 & 55.86 & 52.89& 0.98 & 51.02 & 54.48\\
$\text{AdaFed}_{\gamma=1}$& 53.91 & 5.10 & 51.94 & 54.06 & 51.44& 1.06 & 50.88 & 54.52\\
$\text{AdaFed}_{\gamma=5}$& 54.40& 4.15 & 52.17 & 54.77 & 51.20& 1.05 & 50.72 & 54.61\\
$\text{AdaFed}_{\gamma=10}$& 54.56& 4.22 & 52.20 &54.73 & 51.19& 1.07 & 50.70 & 54.01\\
\bottomrule
\end{tabular}}
\end{small}
\end{center}
\vskip -0.1in
\end{table}

\subsection{CINIC-10}
The best hyper-parameters for the benchmark methods are: $t=0.5$ for TERM, $q=0.1$ for q-FFL, $\lambda=0.1$ for Ditto, and $\eta_t=0.5$ for AFL. Furthermore, the results obtained for different $\gamma$ values in AdaFed are reported in \cref{table:cinic2}. We used $\gamma=1$ as the best point for \cref{table:cinic}. 

\begin{table}[h]
\caption{Tuning $\gamma$ in AdaFed over CINIC-10. Reported results are averaged over 5 different seeds.}
\label{table:cinic2}
\vskip 0.15in
\begin{center}
\begin{small}
\begin{sc}
\resizebox{0.6\textwidth}{!}{%
\begin{tabular}{l|cccc}
\toprule
Algorithm & $\bar{a}$ & $\sigma_a$ & Worst 10\% & Best 10\%\\
\midrule \midrule
$\text{AdaFed}_{\gamma=0}$ & 85.17 & 15.71 & 54.67 & 99.92\\
$\text{AdaFed}_{\gamma=0.01}$ & 85.87 & 15.54 & 56.12 & 99.95\\
$\text{AdaFed}_{\gamma=0.1}$ & 86.13 & 15.32 & 57.01 & 99.98\\
$\text{AdaFed}_{\gamma=1}$ & 86.34 & 14.85 & 57.88 & 99.99\\
$\text{AdaFed}_{\gamma=5}$ & 86.03 & 15.01 & 57.72 & 99.98\\
$\text{AdaFed}_{\gamma=10}$ & 85.49 & 15.08 & 57.23 & 99.99\\
\bottomrule
\end{tabular}}
\end{sc}
\end{small}
\end{center}
\vskip -0.1in
\end{table}

\subsection{TinyImageNet}
The best hyper-parameters for the benchmark methods are: $\epsilon=0.05$ for FedMGDA+, $t=0.5$ for TERM, $q=0.1$ for q-FFL, and $\eta_t=0.5$ for AFL. Furthermore, the results obtained for different $\gamma$ values in AdaFed are reported in \cref{table:tiny2}. We used $\gamma=1$ as the best point for \cref{table:tiny}. 
\begin{table}[h]
\caption{Tuning $\gamma$ in AdaFed over TinyImageNet. Reported results are averaged over 5 different seeds.}
\label{table:tiny2}
\vskip 0.15in
\begin{center}
\begin{small}
\begin{sc}
\resizebox{0.6\textwidth}{!}{%
\begin{tabular}{l|cccc}
\toprule
Algorithm & $\bar{a}$ & $\sigma_a$ & Worst 10\% & Best 10\%\\
\midrule \midrule
$\text{AdaFed}_{\gamma=0}$ & 13.25 & 3.14 & 9.82 & 19.24\\
$\text{AdaFed}_{\gamma=0.01}$ & 14.38 & 2.72 & 10.12& 19.97\\
$\text{AdaFed}_{\gamma=0.1}$ & 16.20& 2.65 & 11.65& 21.12\\
$\text{AdaFed}_{\gamma=1}$ & 18.05 & 2.35 & 13.24 & 23.08\\
$\text{AdaFed}_{\gamma=5}$ & 17.76 & 2.31 & 12.44& 22.84\\
$\text{AdaFed}_{\gamma=10}$ & 17.05 & 2.38 & 12.58& 23.67\\
\bottomrule
\end{tabular}}
\end{sc}
\end{small}
\end{center}
\vskip -0.1in
\end{table}

\subsection{The effect of parameter $\gamma$}
In this section, we reported the results for AdaFed over different datasets when $\gamma$ takes different values.  
Based on the tables reported in this section, we observe almost a similar trend over all the dataset. As a rule of thumb, a higher (lower) $\gamma$ yields a higher (lower) fairness and slightly lower (higher) accuracy. Nevertheless, the best performance of AdaFed (in terms of establishing an appropriate trade-off between average accuracy and fairness) is achieved for a moderate value of $\gamma$. This is also consistent with the other fairness methods in the literature, where in most cases, the best hyper-parameter is a moderate one.

\section{Computation cost of Adafed}\label{sec:compcost}
\subsection{Comparing to FedMGDA+}\label{sec:comparison}
First, note that AdaFed concept is built upon that of FedMGDA+ \citep{hu2022federated} (and FairWire in \citet{10381881}), in that both use Pareto-optimal notion to enforce fairness in FL task.
Note that the optimal solutions in MoM usually forms a set (in general of infinite cardinality). As discussed, what distinguishes FedMGDA+ and AdaFed is that to which point of this set these algorithms converge. Particularly, AdaFed converges to more uniform solutions \cref{L0}. This is because FedMGDA+ algorithm only satisfies \textit{Condition (I)}, yet in AdaFed, both \textit{Conditions (I)} and \textit{(II)} are held.   

Interestingly, the cost of performing AdaFed is less than that of performing FedMGDA+. To elucidate, FedMGDA+ also finds the minimum-norm vector in the convex hull of the gradients’ space in order to find a common descent direction. To this end, they used generic quadratic programming which entails iteratively finding the minimum-norm vector in the convex hull of the local gradients. One of the pros of AdaFed is that it finds the common descent direction without performing any iterations over the gradient vectors. Thus, AdaFed not only yields a higher level of fairness compared to FedMGDA+, but also solves its complexity issue.

\subsection{Running time for AdaFed}
Assume that the number of clients is $K$, and the dimension of the gradient vectors is $d$. Then, the orthogonalization for $k$-th client, $k \in [K]$, needs $\mathcal{O}(2dk)$ operations (by operations we meant multiplications and additions). Hence, the total number of operations needed for orthogonalization process in equal to $\mathcal{O}(2dK^2)$ (Also note that Gram-Schmidt is the most efficient algorithm for orthogonalization).

In our experimental setup, we realized that the overhead of AdaFed is negligible, resulting almost the same overall running time for FedAvg and AdaFed. To justify this fact, please refer to FedMGDA+ paper where they discussed the overhead of their proposed method; as they claimed, the overhead is negligible yielding the same running time as FedAVG. On the other hand, as explained in \cref{sec:comparison}, the complexity of AdaFed is lower than that of FedMGDA+.

\section{Curves and histograms} \label{appendixfigure}

\subsection{Training curves for CIFAR-10 and CIFAR-100 } \label{sec_acc:curve}
In this subsection, we depict the average test accuracy over the course of training for CIFAR-10 dataset using setup one (see \cref{sec:CIFAR-10}). In particular, we depict the average test accuracy across all the clients Vs. the number of communication rounds.  Additionally, to demonstrate the convergence of the FL algorithms after 2000 communication rounds, we have depicted the training curve for 4000 rounds.

The curve for each method is obtained by using the best hyper-parameter of the respective method (we discussed the details of hyper-parameter tuning in \cref{sec:expdet}). Furthermore, the curves are averaged over five different seeds. 

 The results are shown in \cref{fig:acc} and \cref{fig:acc2} for CIFAR-10 and CIFAR-100, respectively. Particularly in \cref{fig:acc}, AdaFed converges faster than the benchmark methods. Specifically, AdaFed reaches average test accuracy of 40\% after around 400 communication rounds; however, the benchmark methods reach this accuracy after around 900 rounds. Indeed, this is another advantage of AdaFed in addition to imposing fairness across the clients. 

\begin{figure}[h]
	\centering
{\includegraphics[width=\textwidth]{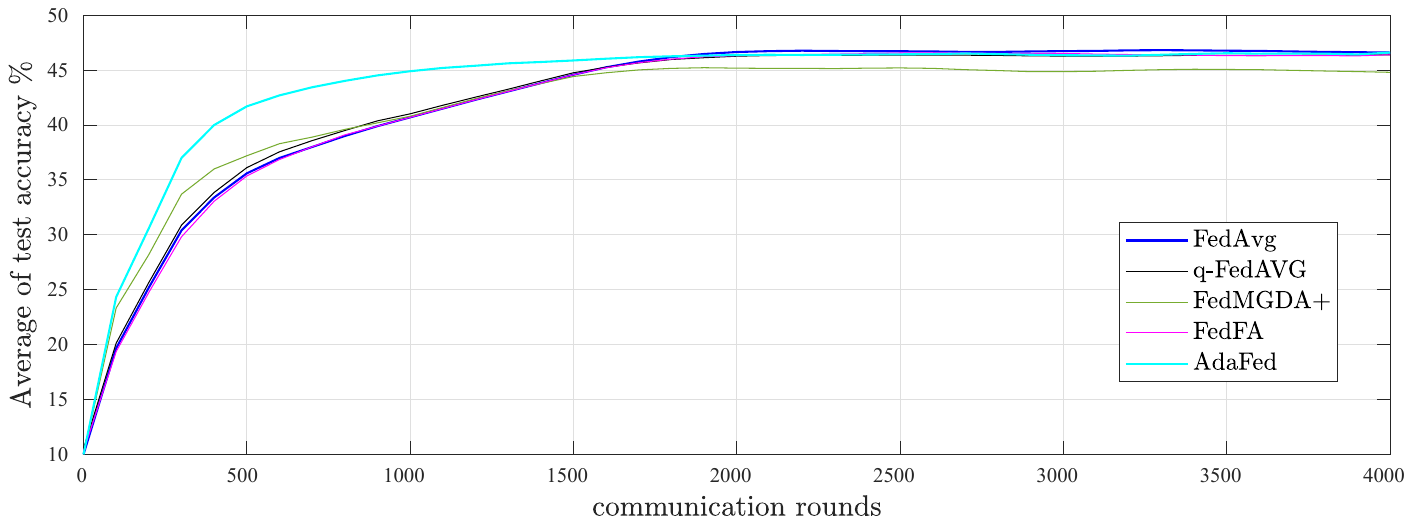}\label{distacc}}

    \caption{Average test accuracy across clients for different FL methods on CIFAR-10. The setup for the experiments is elaborated in \cref{sec:CIFAR-10}, setup 1.}
	\label{fig:acc}
\end{figure}

\begin{figure}[h]
	\centering
{\includegraphics[width=\textwidth]{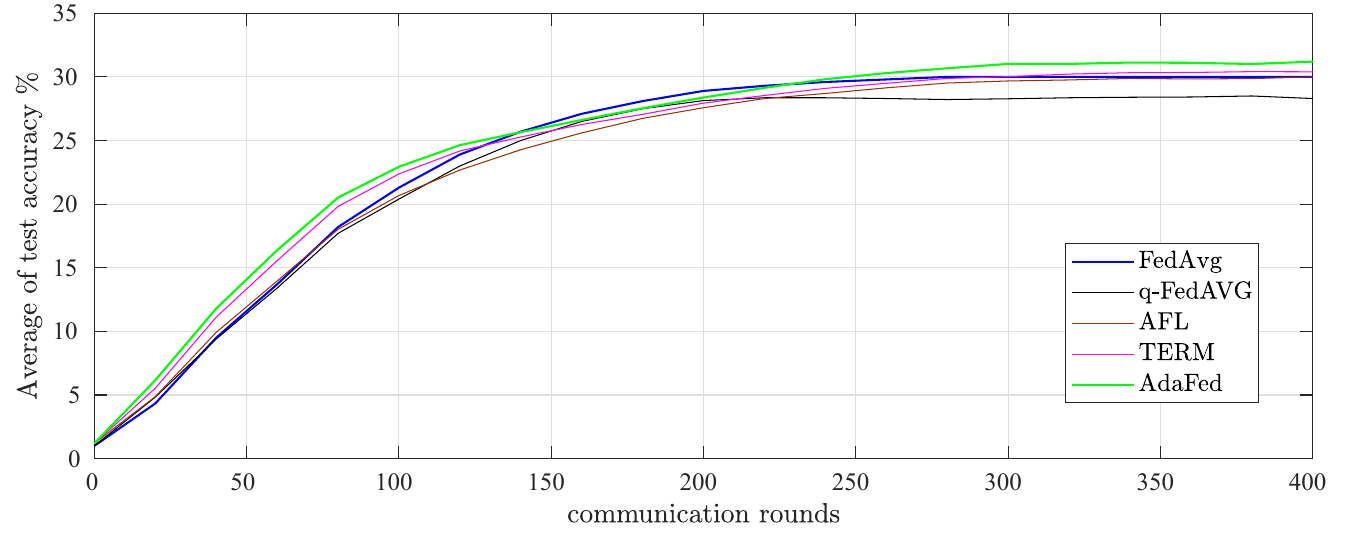}\label{distacc}}

    \caption{Average test accuracy across clients for different FL methods on CIFAR-100. The setup for the experiments is elaborated in \cref{sec:cifar100}, setup 1.}
	\label{fig:acc2}
\end{figure}

\subsection{Histogram of accuracies} \label{sec:distclients}
To better observe the spread of clients accuracy, we depict the histogram of accuracy across 500 clients for the Original FEMNIST dataset (the setup for the experiment is discussed in \cref{appfashion}). To this end, we depict the histogram of the clients' accuracies for three different methods: (i) FedAvg, (ii) Q-FFL, and (iii) AdaFed$(\gamma=5)$; all using their well-tuned hyper-parameters. 
The result is depicted in \cref{distacc}. As seen, the distribution of the accuracy is more concentrated (fair) for AdaFed. 
\begin{figure}[h]
	\centering
{\includegraphics[width=0.5\textwidth]{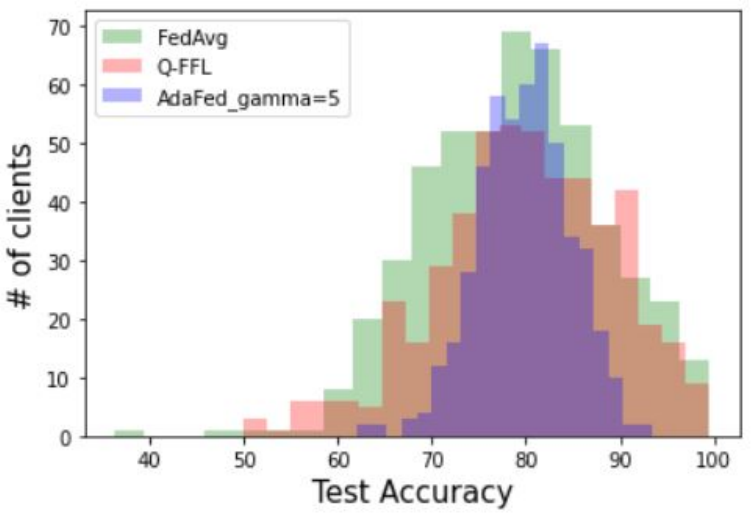}\label{distacc}}

    \caption{The distribution of clients accuracy for Original FEMNIST dataset using three different methods, namely: (i) FedAvg, (ii) Q-FFL, and (iii) AdaFed. }
	\label{L}
\end{figure}

\section{CIFAR-100 results with more local epochs} \label{app:e}
In this section, we want to test the performance of AdaFed using a larger number of local epochs $e$. To this end, we use the same setups as those used in the main body of the paper to produce the results for CIFAR-100, but we change the number of local epochs $e$ to 10 and 20. \footnote{We selected $e=\{10,20\}$ because these values have been commonly utilized in the literature for the CIFAR-100 dataset.}. The results for $e=10$ and $e=20$ are reported in \cref{table10} and \cref{table20}, respectively. We highlight two key observations from the tables: 
\begin{itemize}
    \item AdaFed can still provide a higher level of fairness compared to the benchmark methods;
    \item While increasing the number of local epochs from 1 to 10 results in higher average accuracy, this trend is not observed when further increasing $e$ to 20. 
\end{itemize}
\begin{table}[h]
\caption{Test accuracy on CIFAR-100 with $e=10$. The reported results are averaged over 5 different random seeds.}
\label{table10}
\begin{center}
\begin{small}
\resizebox{0.9\textwidth}{!}{%
\begin{tabular}{l|cccc|cccc}
\toprule
 & \multicolumn{4}{|c|}{Setup 1} & \multicolumn{4}{|c}{Setup 2}
\\
\midrule \midrule
Algorithm & $\bar{a}$ & $\sigma_a$ & Worst 10\%  & Best 10\% & $\bar{a}$ & $\sigma_a$  & Worst 10\%  & Best 10\% \\
\midrule
FedAvg   & 31.14 &  4.09 & 25.30 &  \underline{40.54} &  \underline{20.54} & 6.42 & 11.12 & 33.47\\
q-FFL & 29.45 & 4.66 & 25.35 & 39.91 & \textbf{20.77} & 6.20 & 11.05 & \underline{33.52}\\
AFL & \underline{31.17} &  3.69 &  25.12 & 39.52 &  19.32 & \underline{4.85} & \underline{11.23} & 28.93\\
TERM & 30.56 &  \underline{3.63} &  \underline{27.19} & 39.46 &  17.91 & 5.87 & 10.11 & 32.00\\
AdaFed & \textbf{31.19} &  \textbf{3.14} &  \textbf{28.81}& \textbf{40.42} & 20.41 & \textbf{4.71} & \textbf{11.39} & \textbf{34.08}\\
\bottomrule
\end{tabular}}
\end{small}
\end{center}
\vskip -0.1in
\end{table}

\begin{table}[h]
\caption{Test accuracy on CIFAR-100 with $e=20$. The reported results are averaged over 5 different random seeds.}
\label{table20}
\begin{center}
\begin{small}
\resizebox{0.9\textwidth}{!}{%
\begin{tabular}{l|cccc|cccc}
\toprule
 & \multicolumn{4}{|c|}{Setup 1} & \multicolumn{4}{|c}{Setup 2}
\\
\midrule \midrule
Algorithm & $\bar{a}$ & $\sigma_a$ & Worst 10\%  & Best 10\% & $\bar{a}$ & $\sigma_a$  & Worst 10\%  & Best 10\% \\
\midrule
FedAvg   & 29.11 &  4.31 & 24.61 &  39.45 & 18.05 & 6.12 & 9.15 & 30.12\\
q-FFL & 29.15 & 4.23 & 25.12 & 39.67 & \textbf{19.02} & 6.15 & 8.41 & \underline{31.66}\\
AFL & 30.38 &  3.78 &  25.00 & 39.12 &  17.74 & \underline{4.96} & \underline{10.01} & 27.08\\
TERM & \textbf{31.15} &  \underline{3.62} &  \underline{27.02} & \textbf{40.41} &  15.81 & 5.68 & 8.17 & 29.26\\
AdaFed & \underline{30.41} &  \textbf{3.19} &  \textbf{27.35}& \underline{40.18} & \underline{18.37} & \textbf{4.21} & \textbf{10.55} & \textbf{31.78}\\
\bottomrule
\end{tabular}}
\end{small}
\end{center}
\vskip -0.1in
\end{table}

\section{Integration with a Label Noise Correction method} \label{app:noise}
\subsection{What is label noise in FL?}
Label noise in FL refers to inaccuracies or errors in the ground truth labels associated with the data used for training. It occurs when the labels assigned to data points are incorrect or noisy due to various reasons. Label noise can be introduced at different stages of data collection, annotation, or transmission, and it can have a significant impact on the performance and reliability of FL models. 

Label noise in FL is particularly challenging to address because FL relies on decentralized data sources, and participants may have limited control over label quality in remote environments. Dealing with label noise often involves developing robust models and FL algorithms that can adapt to the presence of inaccuracies in the labels.

\subsection{Are the fair FL algorithms robust against label noise?}
The primary intention of the fair FL algorithms including AdaFed is to ensure fairness among the clients while maintaining the average accuracy across them. Yet, these algorithms are not robust against label noise (mislabeled instances).  

Nonetheless, AdaFed could be integrated with label-noise resistant methods in the literature yielding an FL method which (i) satisfies fairness among the clients, and (ii) is robust against the label noise. In particular, among the label-noise resistant FL algorithms in the literature, we select FedCorr \citep{xu2022fedcorr} to be integrated with AdaFed.   

FedCorr introduces a dimensionality-based filter to identify noisy clients, which is accomplished by measuring the local intrinsic dimensionality (LID) of local model prediction subspaces. They demonstrate that it is possible to distinguish clean datasets from noisy ones by observing the behavior of LID scores during the training process (we omit further discussions about FedCorr, and refer interested readers to their paper for more details). 

Similarly to FedCorr, we use a real-world noisy dataset, namely Clothing1M\footnote{Clothing1M contains 1M clothing images in 14 classes. It is a dataset with noisy labels, since the data is collected from several online shopping websites and include many mislabelled samples.} \citep{xiao2015learning}, and we use exactly the same setting as they used for this dataset\footnote{\url{ https://github.com/Xu-Jingyi/FedCorr}}. In particular, we use local SGD with a momentum of 0.5, with a batch size of 16, and five local epochs, and set the hyper-parameter $T_1=2$ in their algorithm. In addition, when integrated with AdaFed, we set $\gamma=5$ for AdaFed. 

The results are summarized in \cref{table:Clothing}. As observed, the average accuracy obtained by AdaFed is around 2.2\% lower than that obtained from FedCorr which shows that AdaFed is not robust against label-noise. Moreover, as expected AdaFed results in a more fair client accuracy. On the other hand, when AdaFed is combined with FedCorr, the average accuracy improves while maintaining satisfactory fairness among the clients.

\begin{table}[h]
\caption{Test accuracy on Clothing1M dataset. The reported results are averaged over 5 different random seeds.}
\label{table:Clothing}
\vskip 0.15in
\begin{center}
\begin{small}
\begin{sc}
\resizebox{0.6\textwidth}{!}{%
\begin{tabular}{l|cccc}
\toprule
Algorithm & $\bar{a}$ & $\sigma_a$ & Worst 10\% & Best 10\%\\
\midrule \midrule
FedAvg & 70.49 & 13.25 & 43.09 & 91.05\\
FedCorr & 72.55 & 13.27 & 43.12 & 91.15\\
AdaFed & 70.35 & 5.17 & 49.91 & 90.77\\
FedCorr + AdaFed & 72.29 & 8.12 & 46.52 & 91.02\\
\bottomrule
\end{tabular}}
\end{sc}
\end{small}
\end{center}
\vskip -0.1in
\end{table}

\end{document}